\newif\ifdraft
\newif\ifuglify
\newcommand{\company}{Google}
\def\blfootnote{\gdef\@thefnmark{}\@footnotetext}
\renewcommand{\eqref}[1]{Equation~\ref{eq:#1}}
\newcommand{\probref}[1]{Problem~\ref{pr:#1}}
\newcommand{\defref}[1]{Definition~\ref{def:#1}}
\newcommand{\secref}[1]{Section~\ref{sec:#1}}
\newcommand{\appref}[1]{Appendix~\ref{app:#1}}
\newcommand{\figref}[1]{Figure~\ref{fig:#1}}
\newcommand{\tabref}[1]{Table~\ref{tab:#1}}
\newcommand{\algref}[1]{Algorithm~\ref{alg:#1}}
\newcommand{\thmref}[1]{Theorem~\ref{thm:#1}}
\newcommand{\lemref}[1]{Lemma~\ref{lem:#1}}
\newcommand{\corref}[1]{Corollary~\ref{cor:#1}}
\newcommand{\defrefs}[2]{Definitions~\ref{def:#1} and~\ref{def:#2}}
\newcommand{\apprefs}[2]{Appendices~\ref{app:#1} and~\ref{app:#2}}
\newcommand{\algrefs}[2]{Algorithms~\ref{alg:#1} and~\ref{alg:#2}}
\newcommand{\lemrefs}[2]{Lemmas~\ref{lem:#1} and~\ref{lem:#2}}
\newcommand{\lemrefsss}[4]{Lemmas~\ref{lem:#1}, \ref{lem:#2}, \ref{lem:#3} and \ref{lem:#4}}
  \newcommand{\TODO}[1]{\textcolor{red}{[TODO: #1]}\xspace}
  \newcommand{\NOTE}[1]{\textcolor{blue}{[NOTE: #1]}\xspace}
  \newcommand{\TODO}[1]{\xspace}
  \newcommand{\NOTE}[1]{\xspace}
  \newcommand{\eqcomma}{,}
  \newcommand{\eqperiod}{.}
  \newcommand{\eqcomma}{}
  \newcommand{\eqperiod}{}
\newtheorem{problem}{Problem}
\newtheorem{theorem}{Theorem}
\newtheorem{lemma}{Lemma}
\newtheorem{corollary}{Corollary}
\newtheorem{defn}{Definition}
\BODY\end{theorem}
\BODY\end{lemma}
\BODY\end{corollary}
\renewcommand{\paragraph}[1]{\textbf{#1}\;\xspace}
\newcommand{\plotwidth}{0.47\textwidth}
\newcommand{\iid}{\emph{i.i.d.}\xspace}
\newcommand{\wrt}{w.r.t.\xspace}
\newcommand{\ie}{i.e.\xspace}
\newcommand{\eg}{e.g.\xspace}
\newcommand{\egcite}[1]{\citep[\eg][]{#1}}
\newcommand{\expectation}{\mathbb{E}}
\newcommand{\indicator}{\mathbf{1}}
\newcommand{\rademacher}{\mathcal{R}}
\newcommand{\R}{\mathbb{R}}
\newcommand{\maximize}[1][]{\underset{#1}{\mathrm{maximize\;}}}
\newcommand{\minimize}[1][]{\underset{#1}{\mathrm{minimize\;}}}
\newcommand{\subjectto}[1][]{\underset{#1}{\mathrm{s.t.\;}}}
\newcommand{\inner}[2]{\left\langle {#1}, {#2} \right\rangle}
\newcommand{\norm}[1]{\left\lVert {#1} \right\rVert}
\newcommand{\abs}[1]{\left\lvert {#1} \right\rvert}
\newcommand{\measureof}[2]{#1\left(#2\right)}
\DeclareMathOperator{\argmax}{argmax}
\DeclareMathOperator{\argmin}{argmin}
\newcommand{\codecomment}[1]{\;\;\;\;\;\;\;\;//~\textit{#1}}
\newcounter{lineno}
\newenvironment{pseudocode}{\setcounter{lineno}{0}\begin{tabbing}\textbf{mm}\=mm\=mm\=mm\=mm\=\kill}{\end{tabbing}}
\newcommand{\codename}{\>}
\newcommand{\codeline}{\>\stepcounter{lineno}\textbf{\arabic{lineno}}\'\>}
\newcommand{\lebesguemeasure}{\mu}
\newcommand{\dualvariables}{\xi}
\newcommand{\superlevelset}{S_{\ell}}
\newcommand{\superlevelhypograph}{S_h}
\newcommand{\svmproblem}{\digamma}
\newcommand{\multipliers}{v}
\newcommand{\multiplierbound}{V}
\newcommand{\multiplierspace}{\mathcal{V}}
\newcommand{\ramp}{\sigma}
\newcommand{\positiverampbound}{\check{\sigma}_p}
\newcommand{\negativerampbound}{\check{\sigma}_n}
\newcommand{\positiverate}{s_p}
\newcommand{\negativerate}{s_n}
\newcommand{\expectedpositiverate}{\bar{s}_p}
\newcommand{\expectednegativerate}{\bar{s}_n}
\newcommand{\ramppositiverate}{r_p}
\newcommand{\rampnegativerate}{r_n}
\newcommand{\ramppositiveratebound}{\check{r}_p}
\newcommand{\rampnegativeratebound}{\check{r}_n}
\newcommand{\positivelosscoefficient}[1]{\alpha_{#1}^{(0)}}
\newcommand{\negativelosscoefficient}[1]{\beta_{#1}^{(0)}}
\newcommand{\positiveconstraintcoefficient}[2]{\alpha_{#2}^{(#1)}}
\newcommand{\negativeconstraintcoefficient}[2]{\beta_{#2}^{(#1)}}
\newcommand{\constraintbound}[1]{\gamma^{(#1)}}
\newcommand{\positivesvmlosscoefficient}[1]{\check{\alpha}_{#1}^{(0)}}
\newcommand{\negativesvmlosscoefficient}[1]{\check{\beta}_{#1}^{(0)}}
\newcommand{\positivesvmconstraintcoefficient}[2]{\check{\alpha}_{#2}^{(#1)}}
\newcommand{\negativesvmconstraintcoefficient}[2]{\check{\beta}_{#2}^{(#1)}}
\newcommand{\svmconstraintbound}[1]{\check{\gamma}^{(#1)}}
\newcommand{\positivesvmcoefficient}[1]{\check{\alpha}_{#1}}
\newcommand{\negativesvmcoefficient}[1]{\check{\beta}_{#1}}
\newcommand{\ntruepositives}{\mbox{\#TP}}
\newcommand{\nfalsepositives}{\mbox{\#FP}}
\newcommand{\ntruenegatives}{\mbox{\#TN}}
\newcommand{\nfalsenegatives}{\mbox{\#FN}}
\newcommand{\nerrors}{\mbox{\#Errors}}
\newcommand{\nwins}{\mbox{\#Wins}}
\newcommand{\nlosses}{\mbox{\#Losses}}
\newcommand{\nchanges}{\mbox{\#Changes}}
\title{Satisfying Real-world Goals with Dataset Constraints}
\author{
  Gabriel Goh \\
  Dept. of Mathematics \\
  UC Davis \\
  Davis, CA 95616 \\
  \texttt{ggoh@math.ucdavis.edu}
  \and
  Andrew Cotter, Maya Gupta \\
  Google Inc. \\
  1600 Amphitheatre Parkway \\
  Mountain View, CA 94043 \\
  \texttt{\{acotter,mayagupta\}@google.com}
  \and
  Michael Friedlander \\
  Dept. of Computer Science \\
  University of British Columbia \\
  Vancouver, B.C. V6T 1Z4 \\
  \texttt{mpf@cs.ubc.ca}
}
\date{}
\begin{document}

\label{document:begin}

\blfootnote{This is a slightly expanded version of that presented at the 30th Annual Conference on Neural Information Processing Systems (NIPS 2016).}

\maketitle

\begin{abstract}
The goal of minimizing misclassification error on a training set is often just
one of several real-world goals that might be defined on different datasets.
For example, one may require a classifier to also make positive predictions at
some specified rate for some subpopulation (fairness), or to achieve a
specified empirical recall.
Other real-world goals include reducing churn with respect to a previously
deployed model, or stabilizing online training. In this paper we propose
handling multiple goals on multiple datasets by training with dataset
constraints, using the ramp penalty to accurately quantify costs, and present
an efficient algorithm to approximately optimize the resulting non-convex
constrained optimization problem.
Experiments on both benchmark and real-world industry datasets demonstrate the
effectiveness of our approach.

\end{abstract}

\section{Real-world goals}\label{sec:goals}

We consider a broad set of design goals important for making classifiers work
well in real-world applications, and discuss how metrics quantifying many of
these goals can be represented in a particular optimization framework. The key
theme is that these metrics, which range from the standard precision and
recall, to less well-known examples such as coverage and
fairness~\citep{Mann:2007,Zafar:2015,Hardt:2016}, and including some new
proposals, can be expressed in terms of the positive and negative
classification rates on multiple datasets.

\paragraph{Coverage:}
One may wish to control how often a classifier predicts the positive (or
negative) class. For example, one may want to ensure that only $10\%$ of
customers are selected to receive a printed catalog due to budget constraints,
or perhaps to compensate for a biased training set. In practice, constraining
the ``coverage rate'' (the expected proportion of positive predictions) is
often easier than measuring \eg accuracy or precision because coverage can be
computed on unlabeled data---labeling data can be expensive, but acquiring a
large number of unlabeled examples is often very easy.

Coverage was also considered by \citet{Mann:2007}, who proposed what they call
``label regularization'', in which one adds a regularizer penalizing the
relative entropy between the mean score for each class and the desired
distribution, with an additional correction to avoid degeneracies.

\paragraph{Churn:}
Work does not stop once a machine learning model has been adopted. There will
be new training data, improved features, and potentially new model structures.
Hence, in practice, one will deploy a \emph{series} of models, each improving
slightly upon the last.
In this setting, determining whether each candidate should be deployed is
surprisingly challenging: if we evaluate on the \emph{same} held-out testing
set every time a new candidate is proposed, and deploy it if it outperforms its
predecessor, then every compare-and-deploy decision will increase the
statistical dependence between the deployed model and the testing dataset,
causing the model sequence to fit the originally-independent testing data.
This problem is magnified if, as is typical, the candidate models tend to
disagree only on a relatively small number of examples near the true decision
boundary.
%

A simple and safe solution is to draw a \emph{fresh} testing sample every time
one wishes to compare two models in the sequence, only considering examples on
which the two models disagree. Because labeling data is expensive, one would
like these freshly sampled testing datasets to be as small as possible.
It is here that the problem of ``churn'' arises. Imagine that model A, our
deployed model, is $70\%$ accurate, and that model B, our candidate, is $75\%$
accurate.
In the best case, only $5\%$ of test samples would be labeled differently, and
all differences would be ``wins'' for classifier B. Then only a dozen or so
examples would need to be labeled in order to establish that B is the
statistically significantly better classifier with $95\%$ confidence.
In the worst case, model A would be correct and model B incorrect $25\%$ of
the time, model B correct and model A incorrect $30\%$ of the time, and both
models correct the remaining $45\%$ of the time. Then $55\%$ of testing
examples will be labeled differently, and closer to $1000$ examples would need
to be labeled to determine that model B is better.

We define the ``churn rate'' as the expected proportion of examples on which
the prediction of the model being considered (model B above) differs from that
of the currently-deployed model (model A). During training, we propose
constraining the empirical churn rate with respect to a given deployed model on
a large unlabeled dataset (see also \citet{Fard:2016} for an alternative
approach).

\paragraph{Stability:}
A special case of minimizing churn is to ensure stability of an online
classifier as it evolves, by constraining it to not deviate too far from a
trusted classifier on a large held-out unlabeled dataset.
%

\paragraph{Fairness:}
A practitioner may be required to guarantee \emph{fairness} of a learned
classifier, in the sense that it makes positive predictions on different
subgroups at certain rates. For example, one might require that housing loans
be given equally to people of different genders. \citet{Hardt:2016} identify
three types of fairness: (i) demographic parity, in which positive predictions
are made at the same rate on each subgroup, (ii) equal opportunity, in which
only the true positive rates must match, and (iii) equalized odds, in which
both the true positive rates and false positive rates must match. Fairness can
also be specified by a proportion, such as the $80\%$ rule in US law that
certain decisions must be in favor of group B individuals at least $80\%$ as
often as group A
individuals~\egcite{Biddle:2005,Vuolo:2013,Zafar:2015,Hardt:2016}.

\citet{Zafar:2015} propose learning fair classifiers by imposing linear
constraints on the covariance between the predicted labels and the values of
certain features, while \citet{Hardt:2016} propose first learning an ``unfair''
classifier, and then choosing population-dependent thresholds to satisfy the
desired fairness criterion. In our framework, rate constraints such as those
mentioned above can be imposed directly, at training time.


\paragraph{Recall and Precision:}
Requirements of real-world classifiers are often expressed in terms of
precision and recall, especially when examples are highly imbalanced between
positives and negatives.
In our framework, we can handle this problem via Neyman-Pearson
classification~\egcite{Scott:2005, Davenport:2010}, in which one seeks to
minimize the false negative rate subject to a constraint on the false positive
rate. Indeed, our ramp-loss formulation is equivalent to that of
\citet{Bottou:2011} in this setting.

\paragraph{Egregious Examples:}
For certain classification applications, examples may be discovered that are
particularly embarrassing if classified incorrectly. One standard approach to
handling such examples is to increase their weights during training, but this
is difficult to get right: too large a weight may distort the classifier too
much in the surrounding feature space, whereas too small a weight may not fix
the problem. Worse, over time the dataset will often be augmented with new
training examples and new features, causing the ideal weights to drift. We
propose instead simply adding a constraint ensuring that some proportion of a
set of such egregious examples is correctly classified.  Such constraints
should be used with extreme care, since they can cause the problem to become
infeasible.
%

\section{Optimization problem}\label{sec:problem}

\begin{table*}[t]

\caption{Dataset notation.}

\label{tab:datasets}

\begin{center}

\begin{tabularx}{\textwidth}{lX}
  \hline
  \textbf{Notation} & \textbf{Dataset}\\
  \hline
  $D$ & Any dataset \\
  $D^+, D^-$ & Sets of examples labeled positive/negative, respectively \\
  $D^{++}, D^{+-}, D^{-+}, D^{--}$ & Sets of examples with ground-truth
  positive/negative labels, and for which a baseline classifier makes
  positive/negative predictions \\
  $D^A$, $D^B$ & Sets of examples belonging to subpopulation A and B,
  respectively \\
  %
  %
  \hline
\end{tabularx}

\end{center}

\end{table*}

\begin{table*}[t]

\caption{
  The quantities discussed in \secref{goals}, expressed in the notation used in
  \probref{original-objective}, with the dependence on $w$ and $b$ dropped for
  notational simplicity, and using the dataset notation of \tabref{datasets}.
}

\label{tab:metrics}

\begin{center}

\begin{tabularx}{\textwidth}{lX}
  \hline
  \textbf{Metric} & \textbf{Expression} \\
  \hline
  Coverage rate & $\positiverate\left(D\right)$ \\
  $\ntruepositives$, $\ntruenegatives$, $\nfalsepositives$, $\nfalsenegatives$
  & $\abs{D^+} \positiverate\left(D^+\right)$, $\abs{D^-}
  \negativerate\left(D^-\right)$, $\abs{D^-} \positiverate\left(D^-\right)$,
  $\abs{D^+} \negativerate\left(D^+\right)$ \\
  $\nerrors$ & $\nfalsepositives + \nfalsenegatives$ \\
  Error rate & $\nerrors / \left(\abs{D^+} + \abs{D^-}\right)$ \\
  Recall & $\ntruepositives / \left(\ntruepositives + \nfalsenegatives\right) =
  \ntruepositives / \abs{D^+}$ \\
  $\nchanges$ & $\abs{D^{+-}} \positiverate\left(D^{+-}\right) + \abs{D^{-+}}
  \negativerate\left(D^{-+}\right)$  + $\abs{D^{+-}}
  \positiverate\left(D^{+-}\right) + \abs{D^{-+}}
  \negativerate\left(D^{-+}\right)$ \\
  Churn rate & $\nchanges / \left(\abs{D^{++}} + \abs{D^{+-}} + \abs{D^{-+}} +
  \abs{D^{--}}\right)$ \\
  %
  %
  %
  Fairness constraint & $\positiverate\left(D^A\right) \geq \kappa
  \positiverate\left(D^B\right)$, where $\kappa > 0$ \\
  Equal opportunity constraint & $\positiverate\left(D^A \cap D^+\right) \geq
  \kappa \positiverate\left(D^B \cap D^+\right)$, where $\kappa > 0$ \\
  Egregious example constraint & $\positiverate\left(D^+\right) \geq \kappa$
  and/or $\negativerate\left(D^-\right) \leq \kappa$ for a dataset $D$ of
  egregious examples, where $\kappa \in [0,1]$ \\
  %
  %
  \hline
\end{tabularx}

\end{center}

\end{table*}

A key aspect of many of the goals of \secref{goals} is that they are defined on
different datasets. For example, we might seek to maximize the accuracy on a
set of labeled examples drawn in some biased manner, require that its recall be
at least $90\%$ on $50$ small datasets sampled in an unbiased manner from $50$
different countries, desire low churn relative to a deployed classifier on a
large unbiased unlabeled dataset, and require that $100$ given egregious
examples be classified correctly.

Another characteristic common to the metrics of \secref{goals} is that they can
be expressed in terms of the positive and negative classification rates on
various datasets. We consider only \emph{unlabeled} datasets, as described in
\tabref{datasets}---a dataset with binary labels, for example, would be handled
by partitioning it into the two unlabeled datasets $D^+$ and $D^-$ containing
the positive and negative examples, respectively. We wish to learn a linear
classification function $f(x) = \inner{w}{x} - b$ parameterized by a weight
vector $w\in\R^d$ and bias $b\in\R$, for which the positive and negative
classification rates are:
\begin{equation}
  \label{eq:rates} \positiverate\left(D; w, b\right) = \tfrac{1}{|D|}
  \textstyle{\sum}_{x \in D} \indicator\left( \inner{w}{x} - b \right),
  \;\;\;\;\;\;\;\;
  \negativerate\left(D; w, b\right) = \positiverate \left(D; -w, -b\right)
  \eqcomma
\end{equation}
where $\indicator$ is an indicator function that is $1$ if its argument is
positive, $0$ otherwise. In words, $\positiverate(D; w, b)$ and
$\negativerate(D; w, b)$ denote the proportion of positive or negative
predictions, respectively, that $f$ makes on $D$. \tabref{metrics} specifies
how the metrics of \secref{goals} can be expressed in terms of the
$\positiverate$s and $\negativerate$s.

We propose handling these goals by minimizing an $\ell^2$-regularized positive
linear combination of prediction rates on different datasets, subject to
upper-bound constraints on other positive linear combinations of such
prediction rates:
\begin{problem}\label{pr:original-objective}
  Starting point: discontinuous constrained problem
  \begin{align*}
    \minimize[ w\in\R^d, b\in\R ] & \quad
    \textstyle{ \sum_{i=1}^k}
    \left( \positivelosscoefficient{i} \positiverate ( D_i; w, b
    ) + \negativelosscoefficient{i} \negativerate ( D_i; w, b
    ) \right) + \frac{\lambda}{2} \norm{w}_2^2 \\
    \subjectto & \quad \textstyle{\sum_{i=1}^k} \left(
    \positiveconstraintcoefficient{j}{i}
    \positiverate ( D_i; w, b
    ) + \negativeconstraintcoefficient{j}{i} \negativerate ( D_i; w, b )
    \right) \leq \constraintbound{j} \quad j \in \{1, \dots, m\} \eqperiod
  \end{align*}
\end{problem}
Here, $\lambda$ is the parameter on the $\ell^2$ regularizer, there are $k$
unlabeled datasets $D_1, \dots, D_k$ and $m$ constraints. The metrics minimized
by the objective and bounded by the constraints are specified via the choices
of the nonnegative coefficients $\positivelosscoefficient{i}$,
$\negativelosscoefficient{i}$, $\positiveconstraintcoefficient{j}{i}$,
$\negativeconstraintcoefficient{j}{i}$ and upper bounds $\constraintbound{j}$
for the $i$th dataset and, where applicable, the $j$th constraint---a user
should base these choices on \tabref{metrics}.
Note that because $\positiverate + \negativerate = 1$, it is possible to
transform \emph{any} linear combination of rates into an equivalent positive
linear combination, plus a constant (see
\appref{ratio-metrics}\footnote{Appendices may be found in the supplementary
material} for an example).

We cannot optimize \probref{original-objective} directly because the rate
functions $\positiverate$ and $\negativerate$ are discontinuous.
%
%
We can, however, work around this difficulty by training a classifier that
makes \emph{randomized} predictions based on the ramp
function~\citep{Collobert:2006}:
\begin{equation}
  \label{eq:ramp-loss} \ramp(z) = \max \{ 0, \min\{ 1, \nicefrac{1}{2} + z \}
  \} \eqcomma
\end{equation}
where the randomized classifier parameterized by $w$ and $b$ will make a
positive prediction on $x$ with probability $\ramp\left(\inner{w}{x} -
b\right)$, and a negative prediction otherwise (see
\appref{randomized-classification} for more on this randomized classification
rule).
For this randomized classifier, the \emph{expected} positive and negative rates
will be:
\begin{equation}
  \label{eq:ramp-rates} \ramppositiverate\left(D;w,b\right) =
  \tfrac{1}{\abs{D}}\textstyle{\sum}_{x\in D}\ramp\left(\inner{w}{x} -
  b\right),
  \;\;\;\;\;\;\;\;
  \rampnegativerate\left(D;w,b\right) = \ramppositiverate\left(D;-w,-b\right)
  \eqperiod
\end{equation}
Using these expected rates yields a continuous (but non-convex) analogue of
\probref{original-objective}:
\begin{problem}\label{pr:ramp-objective}
  Ramp version of \probref{original-objective}
  \begin{align*}
    \minimize[ w\in\R^d, b\in\R ] & \quad \textstyle{ \sum_{i=1}^k} \left(
    \positivelosscoefficient{i} \ramppositiverate ( D_i; w, b) +
    \negativelosscoefficient{i} \rampnegativerate ( D_i; w, b) \right) +
    \frac{\lambda}{2} \norm{w}_2^2 \\
    \subjectto & \quad \textstyle{\sum_{i=1}^k} \left(
    \positiveconstraintcoefficient{j}{i} \ramppositiverate ( D_i; w, b) +
    \negativeconstraintcoefficient{j}{i} \rampnegativerate ( D_i; w, b )
    \right) \leq \constraintbound{j} \quad j \in \{1, \dots, m\} \eqperiod
  \end{align*}
\end{problem}
Efficient optimization of this problem is the ultimate goal of this section. In
\secref{problem:optimization}, we will propose a majorization-minimization
approach that sequentially minimizes convex upper bounds on
\probref{ramp-objective}, and, in \secref{problem:cutting-plane}, will discuss
how these convex upper bounds may themselves be efficiently optimized.

\subsection{Optimizing the ramp problem}\label{sec:problem:optimization}

\begin{algorithm*}[t]

\begin{pseudocode}
\codename $\mbox{MajorizationMinimization}\left( w^{(0)}, b_0, T \right)$ \\
\codeline For $t \in \{1, 2, \dots, T\}$ \\
\codeline \>Construct an instance of \probref{convex-objective} with $w' = w^{(t-1)}$ and $b' = b_{t-1}$ \\
\codeline \>Optimize this convex optimization problem to yield $w^{(t)}$ and $b_t$ \\
\codeline Return $w^{(t)}$, $b_t$
\end{pseudocode}

\caption{
  Proposed majorization-minimization procedure for (approximately) optimizing
  \probref{ramp-objective}. Starting from an initial feasible solution
  $w^{(0)}$, $b_0$, we repeatedly find a convex upper bound problem that is
  tight at the current candidate solution, and optimize it to yield the next
  candidate.
  See \secref{problem:optimization} for details, and
  \secref{problem:cutting-plane} for how one can perform the inner
  optimizations on line 3.
}

\label{alg:majorization-minimization}

\end{algorithm*}

\begin{figure*}[t]

\begin{center}

\begin{tabular}{cc}

\includegraphics[width=\plotwidth]{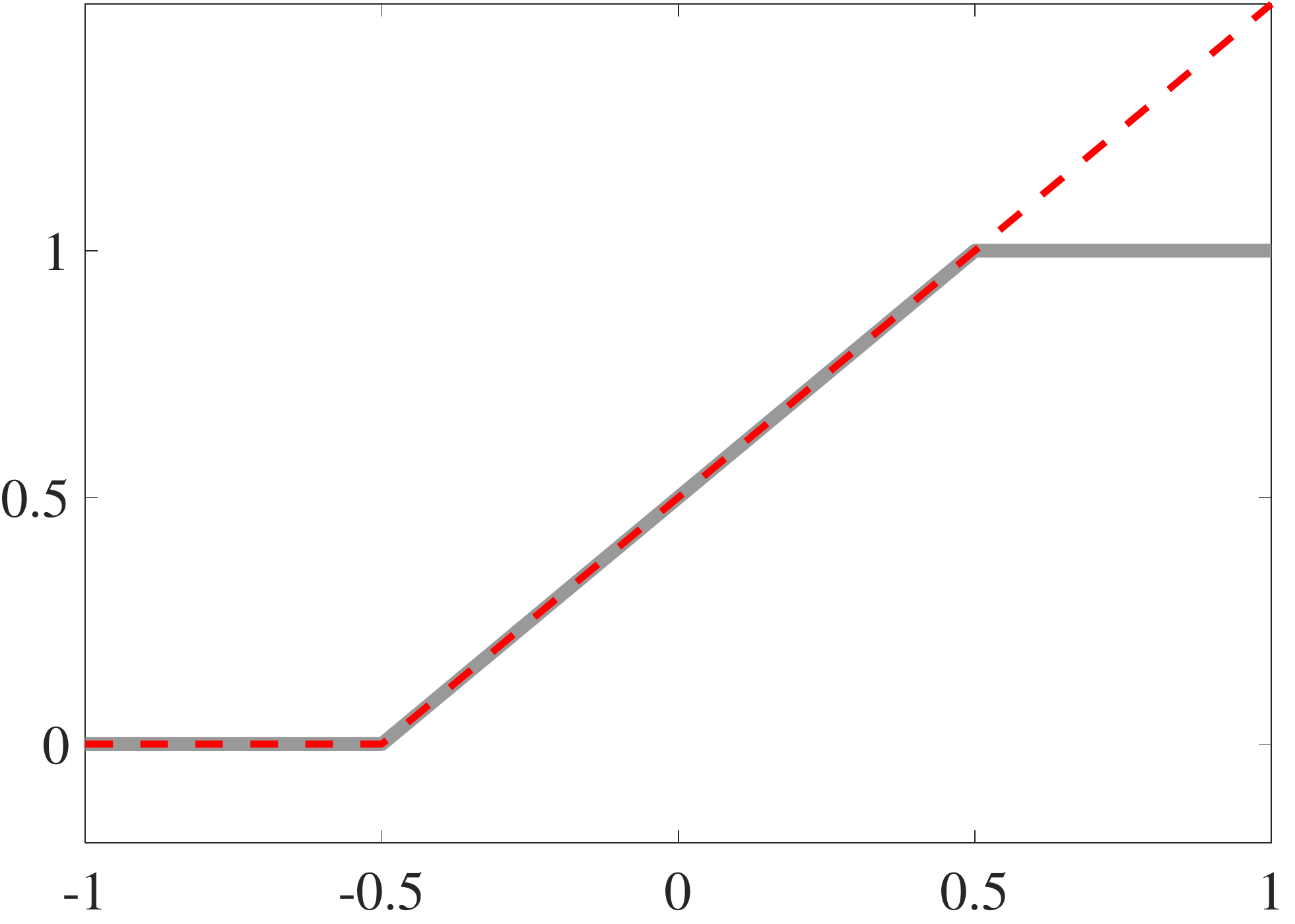} &

\includegraphics[width=\plotwidth]{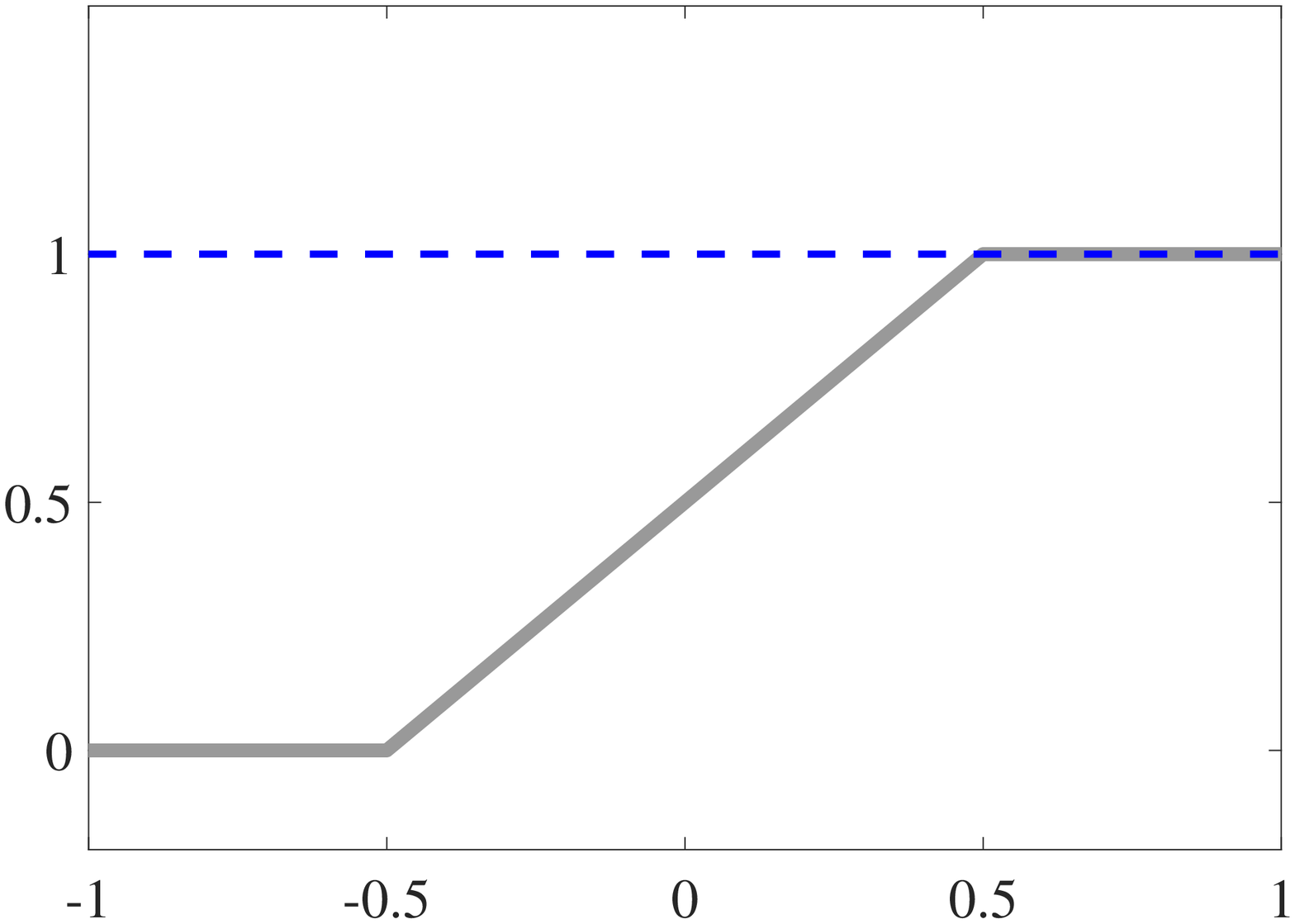}

\end{tabular}

\end{center}

\caption{
  Convex upper bounds on the ramp function $\sigma(z)=\max\left\{ 0,\min\left\{
  1,\nicefrac{1}{2}+z\right\} \right\}$. Notice that the hinge bound (left) is
  tight for all $z\le\nicefrac{1}{2}$, and the constant bound (right) is tight
  for all $z\ge\nicefrac{1}{2}$.
}

\label{fig:ramp-bounds}

\end{figure*}

To address the non-convexity of \probref{ramp-objective}, we will iteratively
optimize approximations, by, starting from an feasible initial candidate
solution, constructing a convex optimization problem upper-bounding
\probref{ramp-objective} that is \emph{tight} at the current candidate,
optimizing this convex problem to yield the next candidate, and repeating.

Our choice of a ramp for $\ramp$ makes finding such tight convex upper bounds
easy: both the hinge function $\max\left\{ 0,\nicefrac{1}{2}+z\right\}$ and
constant-$1$ function are upper bounds on $\ramp$, with the former being tight
for all $z\le\nicefrac{1}{2}$, and the latter for all $z\ge\nicefrac{1}{2}$
(see \figref{ramp-bounds}). We'll therefore define the following upper bounds
on $\ramp$ and $1-\ramp$, with the additional parameter $z'$ determining which
of the two bounds (hinge or constant) will be used, such that the bounds will
always be tight for $z=z'$:
\begin{equation}
  \label{eq:ramp-bounds} \positiverampbound\left(z; z'\right) = \begin{cases}
    \max\left\{ 0, \nicefrac{1}{2} + z \right\}& \;\;\;\;\mbox{if }z' \le
    \nicefrac{1}{2} \\
    1 & \;\;\;\;\mbox{otherwise}
  \end{cases},
  \;\;\;\;\;\;\;\;
  \negativerampbound(z;z') = \positiverampbound\left(-z; -z'\right) \eqperiod
\end{equation}
Based upon these we define the following upper bounds on the expected rates:
\begin{align}
  \label{eq:rate-bounds} \ramppositiveratebound\left( D; w, b; w', b' \right)=
  & \tfrac{1}{\abs{D}} \textstyle{\sum_{x\in D}} \,\, \positiverampbound\left(
  \inner{w}{x} - b; \inner{w'}{x} - b' \right) \\
  \notag \rampnegativeratebound\left( D; w, b; w', b' \right)= &
  \tfrac{1}{\abs{D}} \textstyle{\sum_{x\in D}} \,\, \negativerampbound\left(
  \inner{w}{x} - b; \inner{w'}{x} - b' \right) \eqcomma
\end{align}
which have the properties that both $\ramppositiveratebound$ and
$\rampnegativeratebound$ are convex in $w$ and $b$, are upper bounds on the
original ramp-based rates:
\begin{equation*}
  \ramppositiveratebound\left( D; w, b; w', b' \right) \ge
  \ramppositiverate\left( D; w, b \right)
  \;\;\;\;\mbox{ and }\;\;\;\;
  \rampnegativeratebound\left( D; w, b; w', b' \right)\ge
  \rampnegativerate\left( D; w, b \right) \eqcomma
\end{equation*}
and are tight at $w',b'$:
\begin{equation*}
  \ramppositiveratebound\left( D; w', b' ; w', b' \right)=
  \ramppositiverate\left( D; w', b' \right)
  \;\;\;\;\mbox{ and }\;\;\;\;
  \rampnegativeratebound\left( D; w', b'; w', b' \right)=
  \rampnegativerate\left( D; w', b' \right) \eqperiod
\end{equation*}
Substituting these bounds into \probref{ramp-objective} yields:
\begin{problem}\label{pr:convex-objective}
  Convex upper bound on \probref{ramp-objective}
  \begin{align*}
    \minimize[w\in\R^d, b\in\R] & \quad \textstyle{\sum_{i=1}^{k}} \left(
    \positivelosscoefficient{i} \ramppositiveratebound\left( D_{i}; w, b; w',
    b' \right) + \negativelosscoefficient{i} \rampnegativeratebound\left(
    D_{i}; w, b; w', b' \right) \right) + \frac{\lambda}{2} \norm{ w }_2^2 \\
    \subjectto & \quad \textstyle{\sum_{i=1}^{k}} \left(
    \positiveconstraintcoefficient{j}{i} \ramppositiveratebound\left( D_{i}; w,
    b; w', b' \right) + \negativeconstraintcoefficient{j}{i}
    \rampnegativeratebound\left( D_{i}; w, b; w', b' \right) \right) \le
    \constraintbound{j} \quad j \in \{1, \dots, m\} \eqperiod
  \end{align*}
\end{problem}
As desired, this problem upper bounds \probref{ramp-objective}, is tight at
$w', b'$, and is convex (because any positive linear combination of convex
functions is convex).

\algref{majorization-minimization} contains our proposed procedure for
approximately solving \probref{ramp-objective}.
%
%
Given an initial feasible solution, it's straightforward to verify inductively,
using the fact that we construct tight convex upper bounds at every step, that
every convex subproblem will have a feasible solution, every $(w^{(t)}, b_t)$
pair will be feasible \wrt \probref{ramp-objective}, and every $(w^{(t + 1)},
b_{t + 1})$ will have an objective function value that is no larger that that
of $(w^{(t)}, b_t)$. In other words, no iteration can make negative progress.
%
%
The non-convexity of \probref{ramp-objective}, however, will cause
\algref{majorization-minimization} to arrive at a suboptimal solution that
depends on the initial $(w^{(0)},b_0)$.

\subsection{Optimizing the convex subproblems}\label{sec:problem:cutting-plane}

\begin{algorithm*}[t]

\begin{pseudocode}
\codename $\mbox{CuttingPlane}\left( l_0, u_0, \multiplierspace, \epsilon \right)$ \\
\codeline Initialize $g^{(0)} \in \R^m$ to the all-zero vector \\
\codeline For $t \in \{1, 2, \dots\}$ \\
\codeline \>Let $h_t\left(\multipliers\right) = \min_{s\in\{ 0, 1, \dots, t - 1 \}} \left( u_s + \inner{g^{(s)}}{\multipliers - \multipliers^{(s)}} \right)$ \\
\codeline \>Let $L_t = \max_{s \in \{ 0, 1, \dots, t - 1 \} } l_s$ and $U_t = \max_{\multipliers \in \multiplierspace} h_t\left( \multipliers \right)$ \\
\codeline \>If $U_t - L_t \le \epsilon$ then \\
\codeline \>\>Let $s \in \{ 1, \dots, t - 1\}$ be an index maximizing $l_s$ \\
\codeline \>\>Return $w^{(s)}$, $b_s$, $\multipliers^{(s)}$ \\
\codeline \>Let $\multipliers^{(t)}, \epsilon_t = \mbox{CutChooser}\left(h_t, L_t\right)$ \\
\codeline \>Let $w^{(t)}, b_t, l_t = \mbox{SVMOptimizer}\left( \multipliers^{(t)}, h_t\left(\multipliers^{(t)}\right), \epsilon_t \right)$ \\
\codeline \>Let $u_t = \Psi( w^{(t)}, b_t, \multipliers^{(t)}; w', b')$ and $g^{(t)} = \nabla_{\multipliers} \Psi( w^{(t)}, b_t, \multipliers^{(t)}; w', b')$
\end{pseudocode}

\caption{
  Skeleton of a cutting-plane algorithm that optimizes \eqref{dual-problem} to
  within $\epsilon$ for $v\in\multiplierspace$, where $\multiplierspace
  \subseteq \R^m$ is compact and convex.
  Here, $l_0,u_0 \in \R$ are finite with $l_0 \le \max_{\multipliers \in
  \multiplierspace} \svmproblem(\multipliers) \le u_0$.
  There are several options for the CutChooser function on line 8---please see
  \appref{cutting-plane} for details.
  The SVMOptimizer function returns $w^{(t)}$ and $b_t$ approximately
  minimizing $\Psi(w, b, \multipliers^{(t)}; w', b')$, and a lower bound $l_t
  \le \svmproblem(\multipliers)$ for which $u_t - l_t \le \epsilon_t$ for $u_t$
  as defined on line 10.
}

\label{alg:cutting-plane-multipliers}

\end{algorithm*}

The first step in optimizing \probref{convex-objective} is to add Lagrange
multipliers $\multipliers$ over the constraints, yielding the equivalent
unconstrained problem:
\begin{equation}
  \label{eq:dual-problem} \maximize[\multipliers\succeq0]\
  \svmproblem(\multipliers) = \min_{w,b} \Psi\left( w, b, \multipliers; w', b'
  \right) \eqcomma
\end{equation}
where the function:
\begin{align}
  \label{eq:psi-definition} \Psi\left(w,b,\multipliers;w',b'\right) =
  & \textstyle{\sum_{i=1}^{k}} \left( \left(\positivelosscoefficient{i} +
  \textstyle{\sum_{j=1}^m} \multipliers_j \positiveconstraintcoefficient{j}{i}
  \right) \ramppositiveratebound\left( D_{i}; w, b; w', b' \right) \right. \\
  \notag & \left. + \left(\negativelosscoefficient{i} +
  \textstyle{\sum_{j=1}^m} \multipliers_j \negativeconstraintcoefficient{j}{i}
  \right) \rampnegativeratebound\left( D_{i}; w, b; w', b' \right) \right)
  + \tfrac{\lambda}{2} \norm{w}_2^2
  - \textstyle{\sum_{j=1}^m} \multipliers_j \constraintbound{j}
\end{align}
is convex in $w$ and $b$, and concave in the multipliers $\multipliers$. For
the purposes of this section, $w'$ and $b'$, which were found in the previous
iteration of \algref{majorization-minimization}, are fixed constants.

Because this is a convex-concave saddle point problem, there are a large number
of optimization techniques that could be successfully applied. For example, in
settings similar to our own, \citet{Eban:2016} simply perform SGD jointly over
all parameters (including $\multipliers$), while \citet{Bottou:2011} use the
Uzawa algorithm, which would alternate between (i) optimizing exactly over $w$
and $b$, and (ii) taking gradient steps on $\multipliers$.

We instead propose an approach for which, in our setting, it is particularly
easy to create an efficient implementation.
The key insight is that evaluating $\svmproblem(\multipliers)$ is, thanks to
our use of hinge and constant upper-bounds on our ramp $\ramp$, equivalent to
optimization of a support vector machine (SVM) with per-example weights---see
\appref{SVM} for details. This observation enables us to solve the saddle
system in an inside-out manner. On the ``inside'', we optimize over $(w, b)$
for fixed $\multipliers$ using an off-the-shelf SVM solver~\egcite{LibSVM}. On
the ``outside'', the resulting $(w,b)$-optimizer is used as a component in a
cutting-plane optimization over $\multipliers$. Notice that this outer
optimization is very low-dimensional, since $\multipliers \in \R^m$, where $m$
is the number of constraints.

\algref{cutting-plane-multipliers} contains a skeleton of the cutting-plane
algorithm that we use for this outer optimization over $\multipliers$.
Because this algorithm is intended to be used as an outer loop in a nested
optimization routine, it does not expect that $\svmproblem(\multipliers)$ can be
evaluated or differentiated exactly. Rather, it's based upon the idea of
possibly making ``shallow'' cuts~\citep{Bland:1981} by choosing a desired
accuracy $\epsilon_t$ at each iteration, and expecting the SVMOptimizer to
return a solution with suboptimality $\epsilon_t$. More precisely, the
SVMOptimizer function approximately evaluates $\svmproblem(\multipliers^{(t)})$
for a given fixed $\multipliers^{(t)}$ by constructing the corresponding SVM
problem and finding a $(w^{(t)}, b_t)$ for which the primal and dual
objective function values differ by at most $\epsilon_t$.
%

After finding $(w^{(t)},b_t)$, the SVMOptimizer then evaluates the dual
objective function value of the SVM to determine $l_t$. The primal objective
function value $u_t$ and its gradient $g^{(t)}$ \wrt $\multipliers$ (calculated
on line 10 of \algref{cutting-plane-multipliers}) define the cut $u_t +
\inner{g^{(t)}}{\multipliers - \multipliers^{(t)}}$.
Notice that since $\Psi(w^{(t)}, b_t, \multipliers; w', b')$ is a linear
function of $\multipliers$, it is equal to this cut function, which therefore
upper-bounds $\min_{w,b} \Psi(w, b, \multipliers; w', b')$.

One advantage of this cutting-plane formulation is that typical CutChooser
implementations will choose $\epsilon_t$ to be large in the early iterations,
and will only shrink it to be $\epsilon$ or smaller once we're close to
convergence. We leave the details of the analysis to
\apprefs{cutting-plane}{SVM}---a summary can be found in \appref{overall}.


\section{Related work}\label{sec:related}

The problem of finding optimal trade-offs in the presence of multiple
objectives has been studied generically in the field of multi-objective
optimization~\citep{Miettinen:2012}. Two common approaches are (i) linear
scalarization~\citep[Section 3.1]{Miettinen:2012}, and (ii) the method of
$\epsilon$-constraints~\citep[Section 3.2]{Miettinen:2012}. Linear
scalarization reduces to the common heuristic of reweighting groups of
examples. The method of $\epsilon$-constraints puts hard bounds on the
magnitudes of secondary objectives, like our dataset constraints.
%
%
Notice that, in our formulation, the Lagrange multipliers $\multipliers$ play
the role of the weights in the linear scalarization approach, with the
difference being that, rather than being provided directly by the user, they
are dynamically chosen to satisfy constraints. The user controls the problem
through these constraint choices, which have concrete real-world meanings.

While the hinge loss is one of the most commonly-used convex upper bounds on
the $0/1$ loss~\citep{Rockafellar:2000}, we use the ramp loss, trading off
convexity for tightness. For our purposes, the main disadvantage of the hinge
loss is that it is unbounded, and therefore cannot distinguish a single very
bad example from say, 10 slightly bad ones, making it ill-suited for
constraints on rates. In contrast, for the ramp loss the contribution of any
single datum is bounded, no matter how far it is from the decision boundary.

The ramp loss has also been investigated in \citet{Collobert:2006} (without
constraints). \citet{Bottou:2011} use the ramp loss both in the objective and
constraints, but their algorithm only tackles the Neyman-Pearson problem. They
compared their classifier to that of \citet{Davenport:2010}, which differs in
that it uses a hinge relaxation instead of the ramp loss, and found with the
ramp loss they achieved similar or slightly better results with up to
$10\times$ less computation (our approach does not enjoy this computational
speedup).

\citet{Narasimhan:2015} considered optimizing the F-measure and other
quantities that can be written as concave functions of the TP and TN rates.
Their proposed stochastic dual solver adaptively linearizes concave functions
of the rate functions (\eqref{rates}). \citet{Joachims:2005b} indirectly
optimizes upper-bounds on functions of $\positiverate(D^+)$,
$\positiverate(D^-)$, $\negativerate(D^+)$, $\negativerate(D^-)$ using a hinge
loss approximation.

Finally, for some simple problems (particularly when there is only one
constraint), the goals in \secref{goals} can be coarsely handled by simple
bias-shifting, \ie first training an unconstrained classifier, and then
attempting to adjust the decision threshold to satisfy the constraints as a
second step.


\section{Experiments}\label{sec:experiments}

We evaluate the performance of the proposed approach in two experiments, the
first using a benchmark dataset for fairness, and the second on a real-world
problem with churn and recall constraints.

\subsection{Fairness}\label{sec:experiments:fairness}

\begin{figure*}[t]

\begin{center}

\begin{tabularx}{\textwidth}{cc}
  \includegraphics[width=\plotwidth]{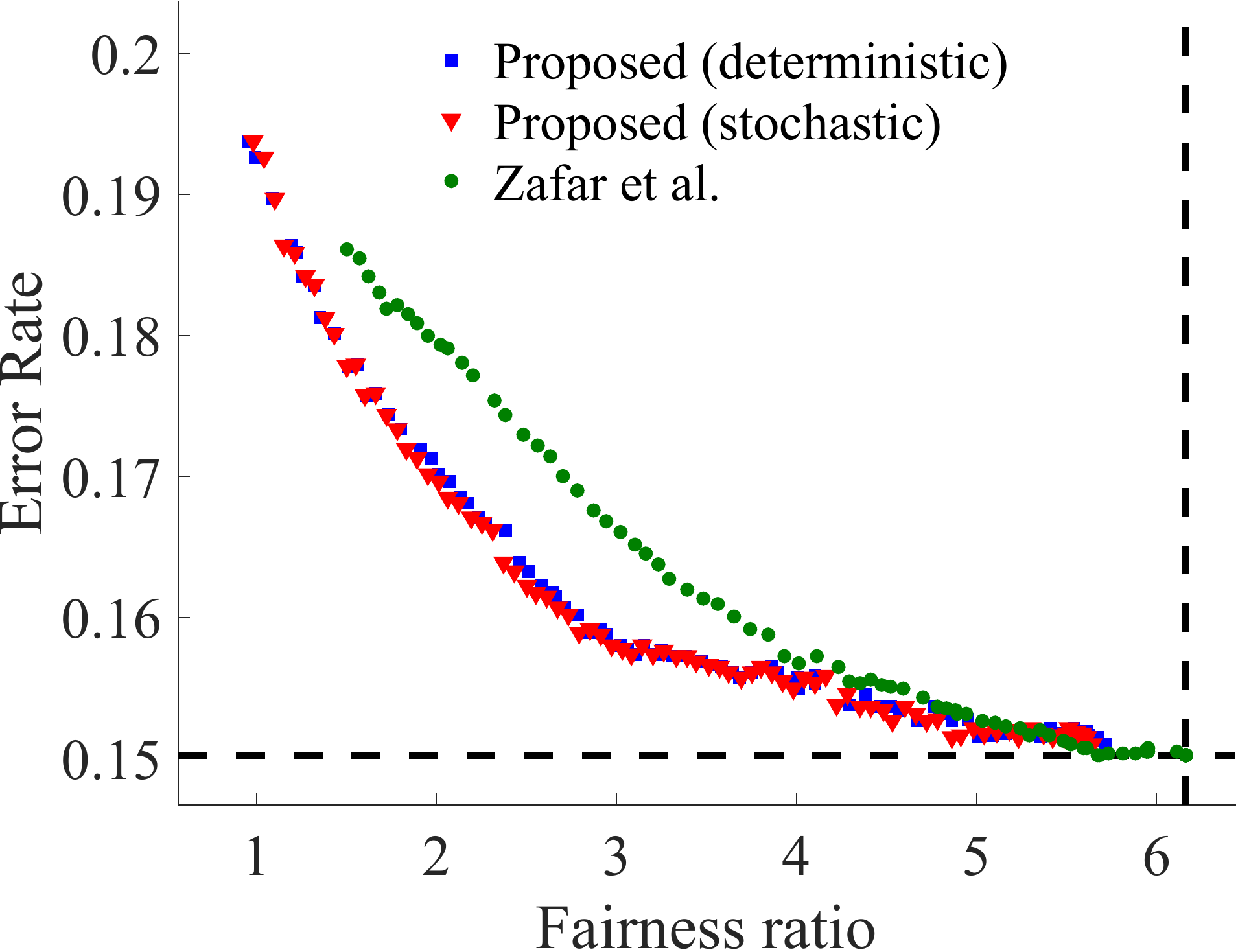} &
  \includegraphics[width=\plotwidth]{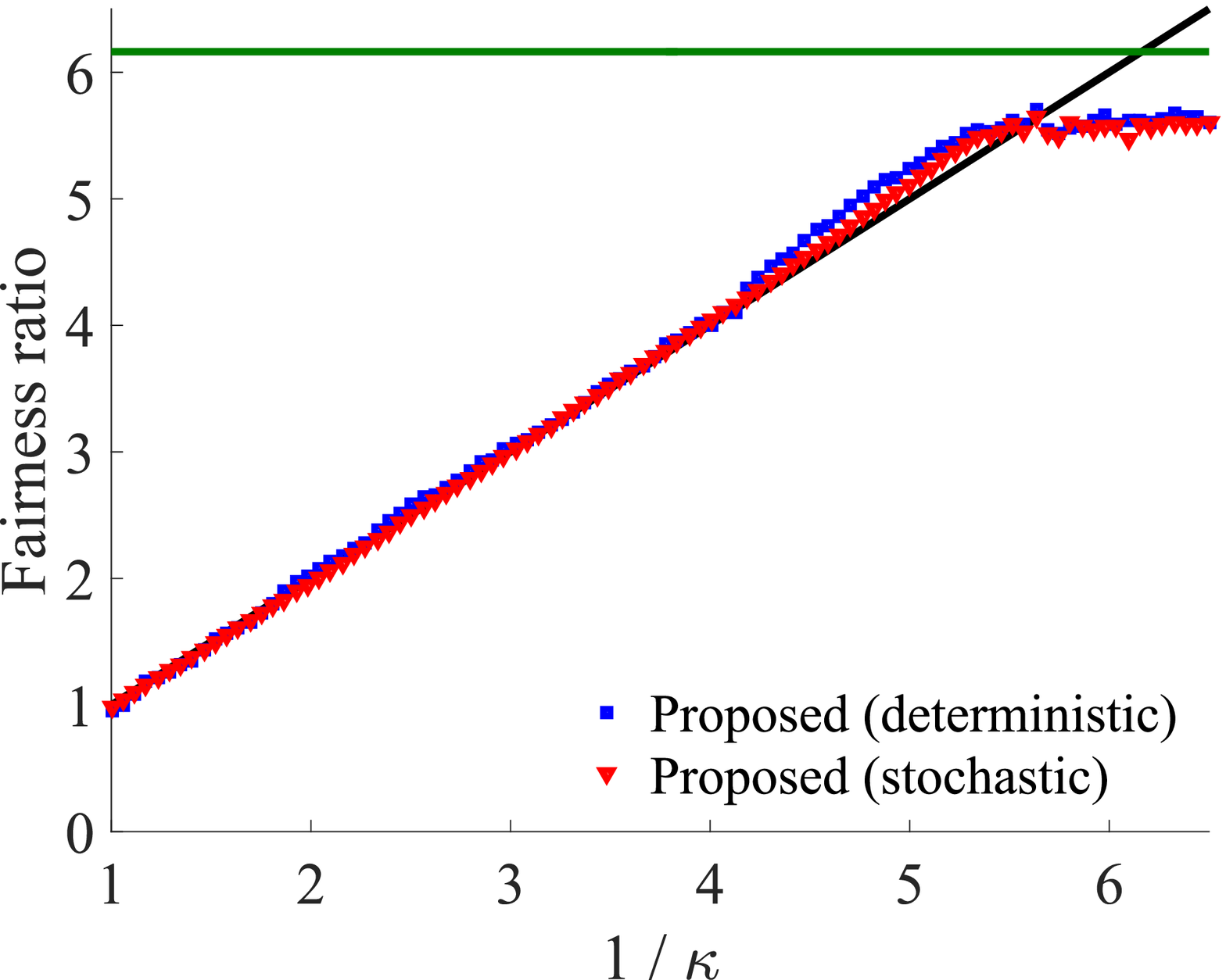}
\end{tabularx}

\end{center}

\caption{
  Blue dots: our proposal, with the classification functions' predictions being
  deterministically thresholded at zero.
  Red dots: same, but using the randomized classification rule described in
  \secref{problem}.
  Green dots: \citet{Zafar:2015}.
  Green line: unconstrained SVM.
  \textbf{(Left)} Test set error plotted vs. observed test set fairness ratio
  $\positiverate\left( D^M \right) / \positiverate\left( D^F \right)$.
  \textbf{(Right)} The $1/\kappa$ hyper-parameter used to specify the desired
  fairness in the proposed method, and the observed fairness ratios of our
  classifiers on the test data.
  All points are averaged over 100 runs.
}

\label{fig:fairness}

\end{figure*}

We compare training for fairness on the Adult dataset~\footnote{``a9a'' from
\url{https://www.csie.ntu.edu.tw/~cjlin/libsvmtools/datasets/binary.html}}, the
same dataset used by \citet{Zafar:2015}. The $32\,561$ training and $16\,281$
testing examples, derived from the 1994 Census, are $123$-dimensional and
sparse. Each feature contains categorical attributes such as race, gender,
education levels and relationship status. A positive class label means that
individual's income exceeds 50k. Let $D^M$ and $D^F$ denote the sets of male
and female examples. The number of positive labels in $D^M$ is roughly six
times that of $D^F$. The goal is to train a classifier that respects the
fairness constraint $\positiverate\left( D^M \right) \leq \positiverate\left(
D^F \right) / \kappa$ for a parameter $\kappa \in (0,1]$ (where $\kappa=0.8$
corresponds to the $80\%$ rule mentioned in \secref{goals}).

Our publicly-available \texttt{Julia}
implementation\footnote{\url{https://github.com/gabgoh/svmc.jl}} for these
experiments uses \texttt{LIBLINEAR}~\citep{LibLinear} with the default
parameters (most notably $\lambda = 1/n \approx 3\times{10}^{-5}$) to implement
the SVMOptimizer function, and does not include an unregularized bias $b$. The
outer optimization over $\multipliers$ does not use the $m$-dimensional cutting
plane algorithm of \algref{cutting-plane-multipliers}, instead using a simpler
one-dimensional variant (observe that these experiments involve only one
constraint). The majorization-minimization procedure starts from the
all-zeros vector ($w^{(0)}$ in \algref{majorization-minimization}).

We compare to the method of \citet{Zafar:2015}, which proposed handling
fairness with the constraint:
\begin{equation}
  \label{eq:summary-example}
  \inner{w}{\bar{x}} \le c, \qquad \bar{x} = { \abs{D^M} }^{-1}
  \textstyle{\sum_{x \in D^M}} x \;-\; \abs{D^F}^{-1} \sum_{x \in D^F} x
  \eqperiod
\end{equation}
An SVM subject to this constraint (see \appref{summary-examples} for details),
for a range of $c$ values, is our baseline.

Results in \figref{fairness} show the proposed method is much more accurate for
any desired fairness, and achieves fairness ratios not reachable with the
approach of \citet{Zafar:2015} for any choice of $c$. It is also easier to
control: the values of $c$ in \citet{Zafar:2015} do not have a clear
interpretation, whereas $\kappa$ is an effective proxy for the fairness ratio.

\subsection{Churn}\label{sec:experiments:churn}

\begin{figure*}[t]

\begin{center}

\begin{tabularx}{\textwidth}{cc}
  \textbf{\large{Training}} & \textbf{\large{Testing}} \\
  \includegraphics[width=\plotwidth]{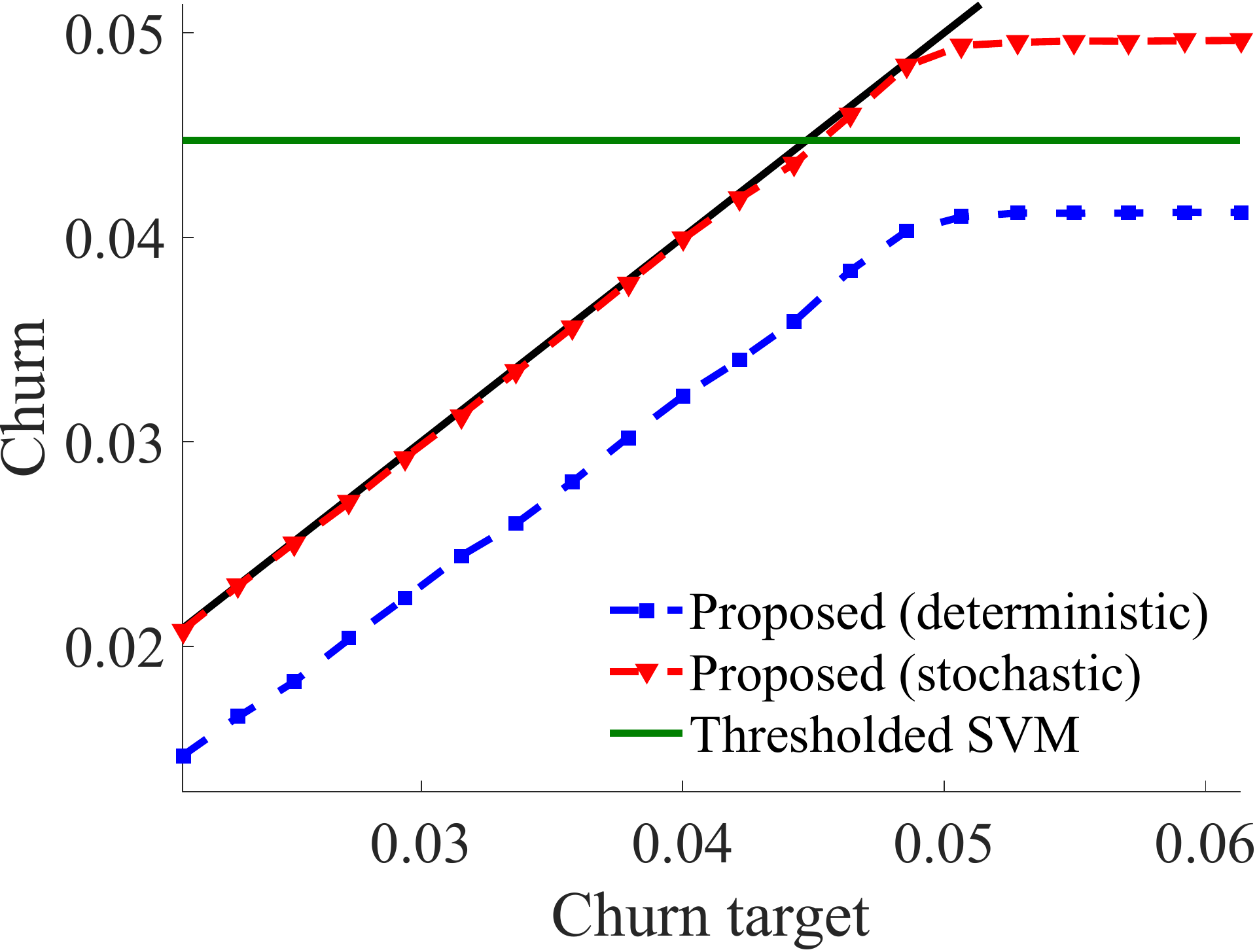} &
  \includegraphics[width=\plotwidth]{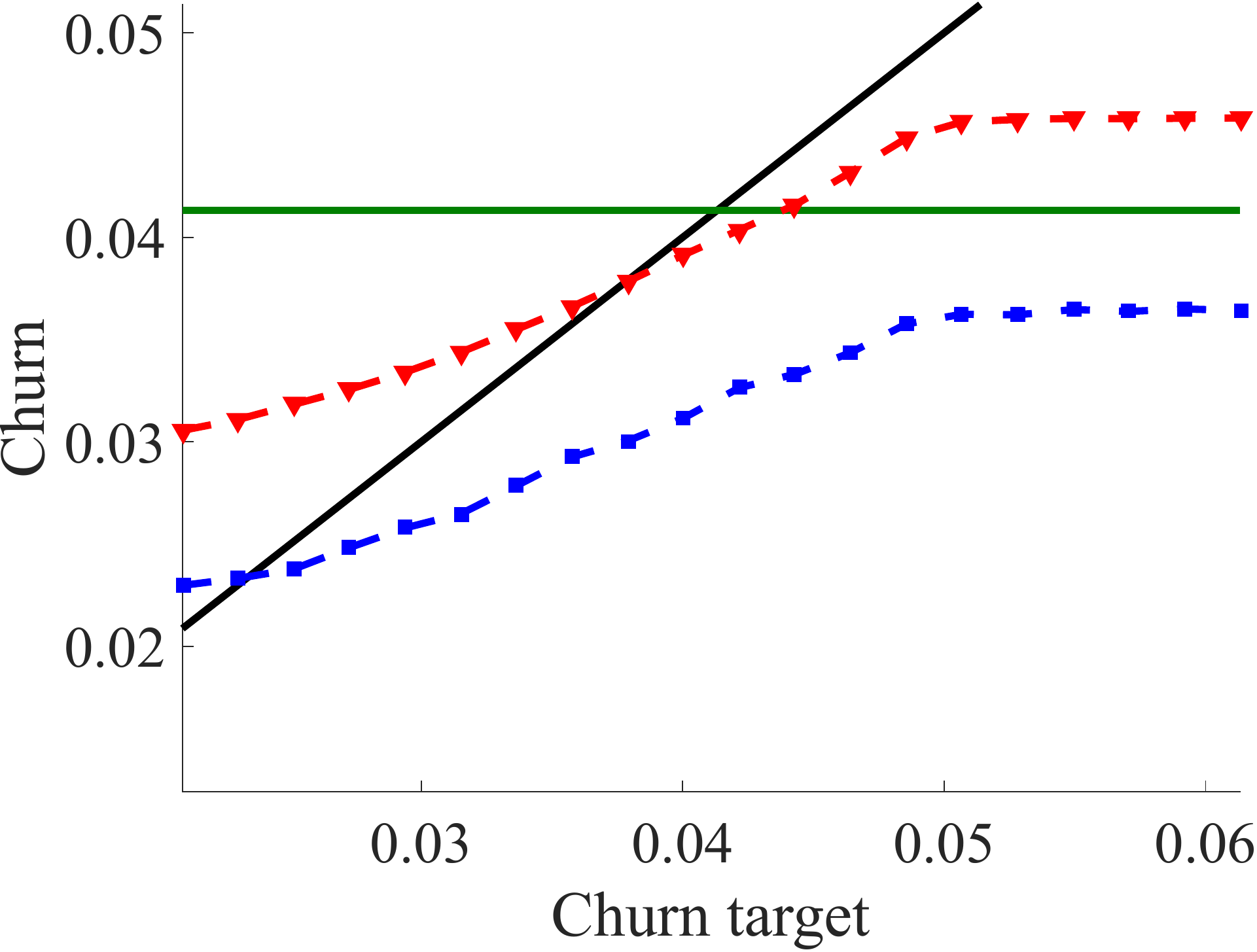} \\
  \includegraphics[width=\plotwidth]{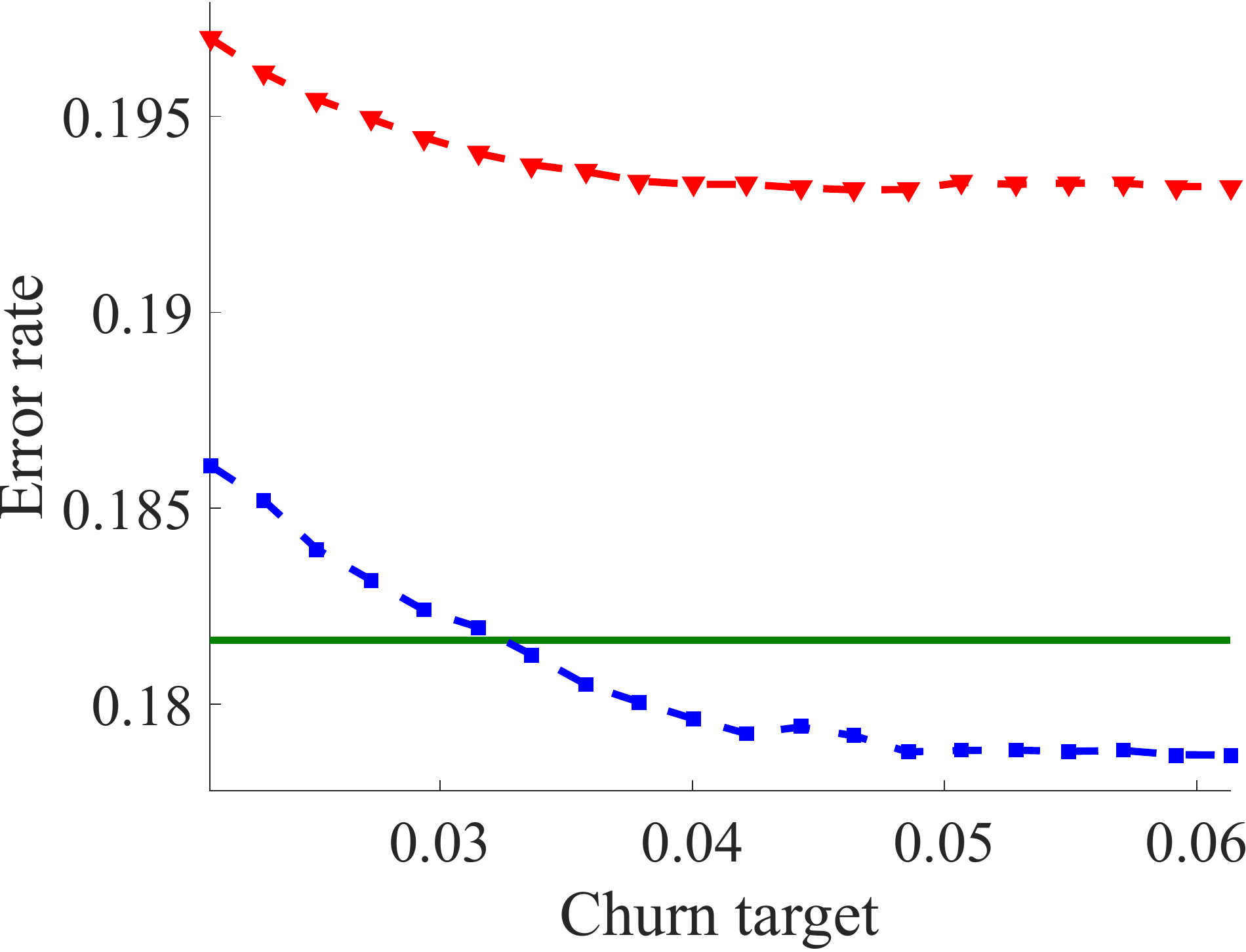} &
  \includegraphics[width=\plotwidth]{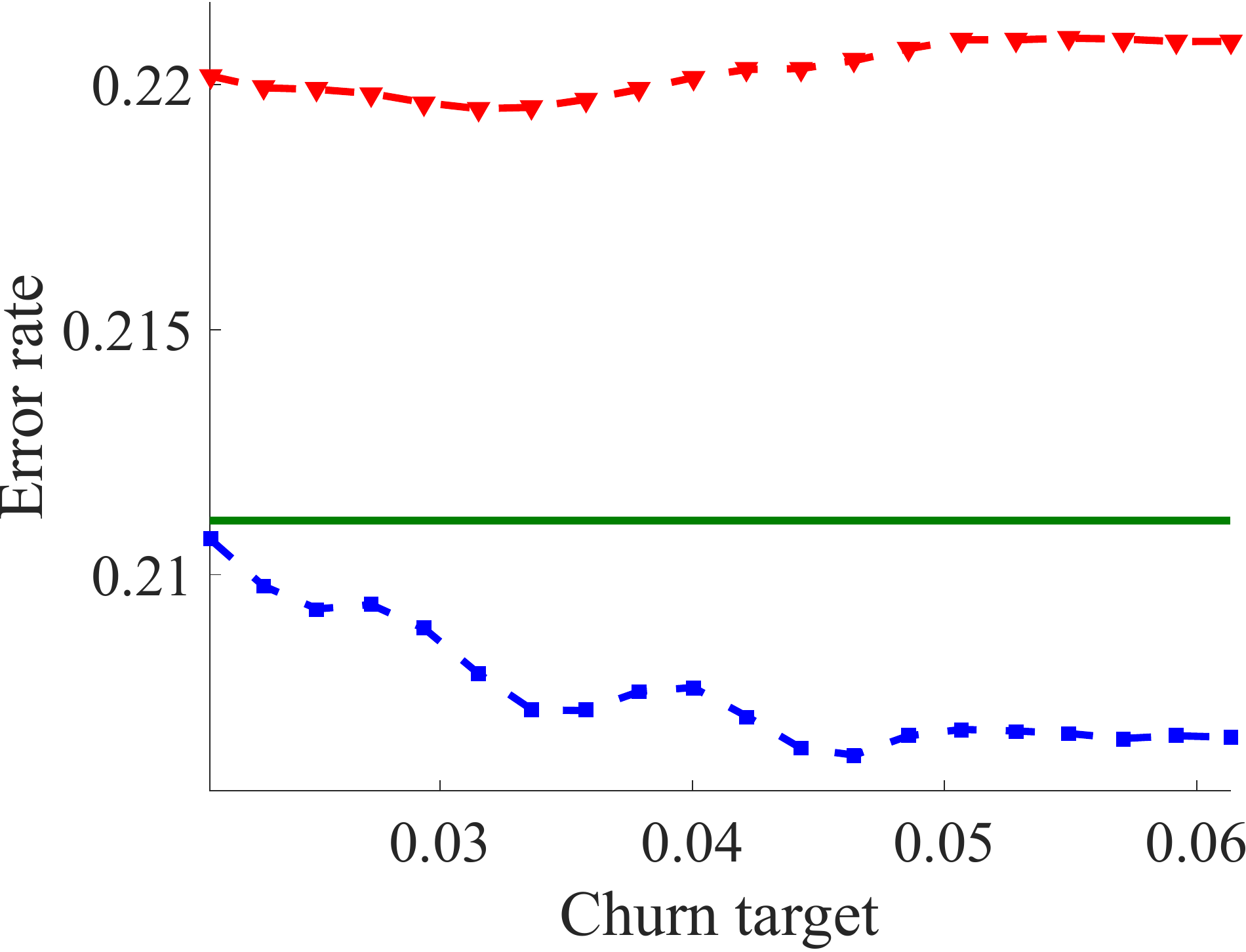}
\end{tabularx}

\end{center}

\caption{
  Blue: our proposal, with the classification functions' predictions being
  deterministically thresholded at zero.
  Red: same, but using the randomized classification rule described in
  \secref{problem}.
  Green: unconstrained SVM trained on $D_1 \cup D_2$, then thresholded (by
  shifting the bias $b$) to satisfy the recall constraint on $D_2$.
  \textbf{(Top)} Observed churn (vertical axis) vs. the churn target used
  during training (horizontal axis), on the train (left) and test (right)
  splits of the unlabeled dataset $D_3$.
  \textbf{(Bottom)} Empirical error rates (vertical axis) vs. the churn target,
  on the train (left) and test (right) splits of the union $D_1 \cup D_2$ of
  the two labeled datasets.
  All curves are averaged over 10 runs.
}

\label{fig:churn}

\end{figure*}

Our second set of experiments demonstrates meeting real-world requirements on a
proprietary problem from \company: predicting whether a user interface element
should be shown to a user, based on a $31$-dimensional vector of informative
features, which is mapped to a roughly $30\,000$-dimensional feature vector via
a fixed kernel function $\Phi$. We train classifiers that are linear with
respect to $\Phi(x)$. We are given the currently-deployed model, and seek to
train a classifier that (i) has high accuracy, (ii) has no worse recall than
the deployed model, and (iii) has low churn \wrt the deployed model.

We are given three datasets, $D_1$, $D_2$ and $D_3$, consisting of $131\,840$,
$53\,877$ and $68\,892$ examples, respectively. The datasets $D_1$ and $D_2$
are hand-labeled, while $D_3$ is unlabeled. In addition, $D_1$ was chosen via
active sampling, while $D_2$ and $D_3$ are sampled \iid from the underlying
data distribution. For all three datasets, we split out $80\%$ for training and
reserved $20\%$ for testing.
We address the three goals in the proposed framework by simultaneously training
the classifier to minimize the number of errors on $D_1$ plus the number of
false positives on $D_2$, subject to the constraints that the recall on $D_2$
be at least as high as the deployed model's recall (we're essentially
performing Neyman-Pearson classification on $D_2$), and that the churn \wrt the
deployed model on $D_3$ be no larger than a given target parameter.



These experiments use a proprietary \texttt{C++} implementation
%
%
of \algref{cutting-plane-multipliers}, using the combined SDCA and cutting
plane approach of \appref{SVM} to implement the inner optimizations over $w$
and $b$, with the CutChooser helper functions being as described in
\apprefs{cutting-plane:maximization}{SVM:bias:minimization}. We performed $5$
iterations of the majorization-minimization procedure of
\algref{majorization-minimization}.

Our baseline is an unconstrained SVM that is thresholded after training to
achieve the desired recall, but makes no effort to minimize churn. We chose the
regularization parameter $\lambda$ using a power-of-$10$ grid search, found
that ${10}^{-7}$ was best for this baseline, and then used $\lambda =
{10}^{-7}$ for all experiments.

The plots in \figref{churn} show the achieved churn and error rates on the
training and testing sets for a range of churn constraint values (red and blue
curves), compared to the baseline thresholded SVM (green lines). When using
deterministic thresholding of the learned classifier (the blue curves, which
significantly outperformed randomized classification--the red curves), the
proposed method achieves lower churn and better accuracy for all targeted churn
rates, while also meeting the recall constraint.

%

As expected, the empirical churn is extremely close to the targeted churn on
the training set when using randomized classification (red curve, top left
plot), but less so on the $20\%$ held-out test set (top right plot). We
hypothesize this disparity is due to overfitting, as the classifier has
$30\,000$ parameters, and $D_3$ is rather small (please see
\appref{generalization} for a discussion of the generalization performance of
our approach). However, except for the lowest targeted churn, the actual
classifier churn (blue curves) is substantially lower than the targeted
churn.
Compared to the thresholded SVM baseline, our approach significantly reduces
churn without paying an accuracy cost.

\newpage
\clearpage

\bibliography{main}
\bibliographystyle{abbrvnat}

\label{document:middle}

\newpage
\clearpage

\appendix

\label{document:appendix}

\begin{table*}[th!]

\caption{Key notation, listed in the order in which it was introduced.}

\label{tab:notation}

\begin{center}

\begin{tabular}{lll}
  \hline
  \textbf{Symbol} & \textbf{Introduced} & \textbf{Description} \\
  \hline
  $k$ & \secref{problem} & Number of datasets \\
  $m$ & \secref{problem} & Number of dataset constraints \\
  $D_i$ & \secref{problem} & $i$th dataset \\
  %
  %
  $\positiverate$, $\negativerate$ & \secref{problem}, \eqref{rates} & Positive and negative indicator-based rates \\
  $\lambda$ & \secref{problem}, \probref{original-objective} & Regularization parameter \\
  $\positivelosscoefficient{i}$, $\negativelosscoefficient{i}$ & \secref{problem}, \probref{original-objective} & Coefficients defining the objective function \\
  $\positiveconstraintcoefficient{j}{i}$, $\negativeconstraintcoefficient{j}{i}$ & \secref{problem}, \probref{original-objective} & Coefficients defining the $j$th dataset constraint \\
  $\constraintbound{j}$ & \secref{problem}, \probref{original-objective} & Given upper bound of the $j$th dataset constraint \\
  $\ramp$ & \secref{problem}, \eqref{ramp-loss} & Ramp function: $\ramp(z) = \max \{ 0, \min\{ 1, \nicefrac{1}{2} + z \} \}$ \\
  $\ramppositiverate$, $\rampnegativerate$ & \secref{problem}, \eqref{ramp-rates} & Positive and negative ramp-based rates \\
  $\positiverampbound$, $\negativerampbound$ & \secref{problem:optimization}, \eqref{ramp-bounds} & Convex upper bounds on ramp functions \\
  $\ramppositiveratebound$, $\rampnegativeratebound$ & \secref{problem:optimization}, \eqref{rate-bounds} & Convex upper bounds on ramp-based rates \\
  %
  %
  $\Psi$ & \secref{problem:cutting-plane}, \eqref{psi-definition} & SVM objective (for minimizing over $w$ and $b$) \\
  $\svmproblem$ & \secref{problem:cutting-plane}, \eqref{dual-problem} & Optimum of $\Psi$ (for maximizing over $\multipliers$) \\
  $\multipliers$ & \secref{problem:cutting-plane} & Lagrange multipliers associated with dataset constraints \\
  $\multiplierspace$ & \secref{problem:cutting-plane}, \algref{cutting-plane-multipliers} & Set of allowed $\multipliers$s \\
  %
  %
  $\multipliers^{(s)}$ & \secref{problem:cutting-plane}, \algref{cutting-plane-multipliers} & Candidate solution at the $t$th iteration \\
  $l_t$, $u_t$ & \secref{problem:cutting-plane}, \algref{cutting-plane-multipliers} & Lower and upper bounds on $\svmproblem(\multipliers^{(t)})$ \\
  $g^{(t)}$ & \secref{problem:cutting-plane}, \algref{cutting-plane-multipliers} & Gradient of the cutting plane inserted at the $t$th iteration \\
  $h_t$ & \secref{problem:cutting-plane}, \algref{cutting-plane-multipliers} & Concave function upper-bounding $\svmproblem(\multipliers)$ \\
  $L_t$, $U_t$ & \secref{problem:cutting-plane}, \algref{cutting-plane-multipliers} & Lower and upper bounds on $\max_{\multipliers\in\multiplierspace} \svmproblem(\multipliers)$ \\
  %
  %
  \hline
  %
  %
  $\multiplierbound$ & \appref{generalization} & Maximum allowed $\multipliers_j$: $\multiplierspace \subseteq [0, \multiplierbound]^m$ \\
  $\expectedpositiverate$, $\expectednegativerate$ & \appref{generalization}, \eqref{expected-rates} & Expected positive and negative indicator-based rates \\
  %
  %
  $\lebesguemeasure$ & \appref{cutting-plane} & Lebesgue measure \\
  $\superlevelset$ & \appref{cutting-plane:centroid}, \eqref{superlevel-set-definition} & Superlevel set \\
  $\superlevelhypograph$ & \appref{cutting-plane:centroid}, \eqref{superlevel-hypograph-definition} & Superlevel hypograph \\
  %
  %
  $n$ & \appref{SVM:SDCA} & Total size of datasets: $n = \sum_{i=1}^k \abs{D_i}$ \\
  $\positivesvmlosscoefficient{i}$, $\negativesvmlosscoefficient{i}$ & \appref{SVM:SDCA}, \eqref{SVM-loss-coefficients} & Coefficients defining the convex objective function \\
  $\positivesvmconstraintcoefficient{j}{i}$, $\negativesvmconstraintcoefficient{j}{i}$ & \appref{SVM:SDCA}, \eqref{SVM-constraint-coefficients} & Coefficients defining the $j$th convex dataset constraint \\
  $\svmconstraintbound{j}$ & \appref{SVM:SDCA}, \eqref{SVM-constraint-bounds} & Given upper bound of the $j$th convex dataset constraint \\
  $\ell_{i,x}$ & \appref{SVM:SDCA}, \eqref{SVM-losses} & Loss of example $x$ in dataset $D_i$, in the SVM objective \\
  $\positivesvmcoefficient{i}$, $\negativesvmcoefficient{i}$ & \appref{SVM:SDCA}, \eqref{SVM-coefficients} & Coefficients defining the SVM objective function \\
  $L$ & \appref{SVM:SDCA}, \eqref{Lipschitz-definition} & Lipschitz constant of the $\ell_{i,x}$s \\
  $\dualvariables$ & \appref{SVM:SDCA}, \eqref{psi-dual-definition} & SVM dual variables \\
  $\Psi^*$ & \appref{SVM:SDCA}, \eqref{psi-dual-definition} & SVM dual objective (for maximizing over $\dualvariables$) \\
  %
  %
  $b_s$ & \appref{SVM:bias}, \algref{cutting-plane-bias} & Candidate solution at the $t$th iteration \\
  $l_t'$, $u_t'$ & \appref{SVM:bias}, \algref{cutting-plane-bias} & Lower and upper bounds on $\min_{w \in \R^d} \Psi(w, b_t, \multipliers; w', b')$ \\
  $g_t'$ & \appref{SVM:bias}, \algref{cutting-plane-bias} & Derivative of the cutting plane inserted at the $t$th iteration \\
  $h_t'$ & \appref{SVM:bias}, \algref{cutting-plane-bias} & Convex function lower-bounding $\min_{w \in \R^d} \Psi(w, b, \multipliers; w', b')$ \\
  $L_t'$, $U_t'$ & \appref{SVM:bias}, \algref{cutting-plane-bias} & Lower and upper bounds on $\min_{b \in \mathcal{B}, w \in \R^d} \Psi(w, b, \multipliers; w', b')$ \\
  \hline
\end{tabular}

\end{center}

\end{table*}

\section{Randomized classification}\label{app:randomized-classification}

The use of the ramp loss in \probref{ramp-objective} can be interpreted in two
ways, which are exactly equivalent at training time, but lead to the use of
different classification rules at evaluation time.

\paragraph{Deterministic:}
This is the obvious interpretation: we would like to optimize
\probref{original-objective}, but cannot do so because the indicator-based
rates $\positiverate$ and $\negativerate$ are discontinuous, so we approximate
them with the ramp-based rates $\rampnegativerate$ and $\ramppositiverate$, and
and hope that this approximation doesn't cost us too much, in terms of
performance. The result is \probref{ramp-objective}. At evaluation time, on an
example $x$, we make a positive prediction if $\inner{w}{x} - b$ is
nonnegative, and a negative prediction otherwise.

\paragraph{Randomized:}
In this interpretation (also used by \citet{Cotter:2013}), we reinterpret the
ramp loss as the expected 0/1 loss suffered by a randomized classifier, with
the result that the rates aren't being approximated \emph{at all}---instead,
we're using the indicator-based rates throughout, but randomizing the
classifier and taking expectations to smooth out the discontinuities in the
objective function. To be precise, at evaluation time, on an example $x$, we
make a positive prediction with probability $\ramp(\inner{w}{x} - b)$, and a
negative prediction otherwise (with $\ramp$ being the ramp function of
\eqref{ramp-loss}). Taking expectations of the indicator-based rates
$\positiverate$ and $\negativerate$ over the randomness of this classification
rule yields the ramp-based rates $\rampnegativerate$ and $\ramppositiverate$,
resulting, once again, in \probref{ramp-objective}.

This use of a randomized prediction isn't as unfamiliar as it may at first
seem: in logistic regression, the classifier provides probability estimates at
evaluation time (with $\sigma$ being a sigmoid instead of a ramp). Furthermore,
at training time, the learned classifier is assumed to be randomized, so that
the optimization problem can be interpreted as maximizing the data
log-likelihood.

In the setting of this paper, the main advantages of the use of a randomized
classification rule are that (i) we can say something about generalization
performance (\appref{generalization}), and (ii) because the rates are never
being approximated, the dataset constraints will be satisfied \emph{tightly} on
the training dataset, in expectation (this is easily seen in the red curve in
the top left plot of \figref{churn}). Despite these apparent advantages,
deterministic classifiers seem to work better in practice.

\section{Ratio metrics}\label{app:ratio-metrics}

\begin{table*}[t]

\caption{
  Some ratio metrics (\appref{ratio-metrics}), which are metrics that can be
  written as ratios of linear combinations of rates. $\nwins$ and $\nlosses$
  are actually linear combination metrics, but are needed for the other
  definitions (as are Recall and $\nchanges$ from \tabref{metrics}).
}

\label{tab:ratio-metrics}

\begin{center}

\begin{tabularx}{\textwidth}{lX}
  \hline
  \textbf{Metric} & \textbf{Expression} \\
  \hline
  Precision & $\ntruepositives / \left(\ntruepositives +
  \nfalsepositives\right)$ \\
  $F_1$-score & $2 \mbox{Precision} \cdot \mbox{Recall} /
  \left(\mbox{Precision} + \mbox{Recall}\right) = 2 \ntruepositives / \left( 2
  \ntruepositives + \nfalsenegatives + \nfalsepositives \right) $ \\
  $\nwins$ & $\abs{D^{+-}} \positiverate\left(D^{+-}\right) + \abs{D^{-+}}
  \negativerate\left(D^{-+}\right)$ \\
  $\nlosses$ & $\abs{D^{++}} \negativerate\left(D^{+-}\right) + \abs{D^{--}}
  \positiverate\left(D^{-+}\right)$ \\
  Win/loss Ratio & $\nwins / \nlosses$ \\
  Win/change Ratio & $\nwins / \nchanges$ \\
  \hline
\end{tabularx}

\end{center}

\end{table*}

\probref{original-objective} minimizes an objective function and imposes
upper-bound constraints, all of which are written as linear combinations of
positive and negative rates---we refer to such as ``linear combination
metrics''.
Some metrics of interest, however, cannot be written in this form. One
important subclass are the so-called ``ratio metrics'', which are \emph{ratios}
of linear combinations of rates. Examples of ratio metrics are precision,
$F_1$-score, win/loss ratio and win/change ratio (recall is a linear combination
metric, since its denominator is a constant).

Ratio metrics may not be used directly in the objective function, but can be
included in constraints by multiplying through by the denominator, then
shifting the constraint coefficients to be non-negative. For example, the
constraint that precision must be greater than $90\%$ can be expressed as
follows:
\begin{align*}
  \abs{D^+} \positiverate\left( D^+ \right) \geq & 0.9 \left( \abs{D^+}
  \positiverate\left( D^+ \right) + \abs{D^-} \positiverate\left( D^-
  \right)\right) \\
  0.1 \abs{D^+} \positiverate\left( D^+ \right) - 0.9 \abs{D^-}
  \positiverate\left( D^- \right) \geq & 0 \\
  -0.1 \abs{D^+} \positiverate\left( D^+ \right) + 0.9 \abs{D^-}
  \positiverate\left( D^- \right) \leq & 0 \\
  0.1 \abs{D^+} \negativerate\left( D^+ \right) + 0.9 \abs{D^-}
  \positiverate\left( D^- \right) \leq & 0.1 \abs{D^+} \eqcomma
\end{align*}
where we used the fact that $\positiverate\left( D^+ \right) +
\negativerate\left( D^+ \right) = 1$ on the last line---this is an example of a
fact that we noted in \secref{problem}: since positive and negative rates must
sum to one, it is possible to write any linear combination of rates as a
positive linear combination, plus a constant.

Multiplying through by the denominator is fine for
\probref{original-objective}, but a natural question is whether, by using a
randomized classifier and optimizing \probref{ramp-objective}, we're doing the
``right thing'' in expectation. The answer is: not quite. Since the expectation
of a ratio is not the ratio of expectations, \eg a precision constraint in our
original problem (\probref{original-objective}) becomes only a constraint on a
precision-like quantity (the ratio of the expectations of the precision's
numerator and denominator) in our relaxed problem.

\section{Generalization}\label{app:generalization}

In this appendix, we'll provide generalization bounds for an algorithm that is
\emph{nearly} identical to \algref{majorization-minimization}. The two
differences are that (i) we assume that the optimizer used on line 3 will
prefer smaller biases $b$ to larger ones, \ie that if
\probref{convex-objective} has multiple equivalent minima, then the optimizer
will return one for which $\abs{b}$ is minimized, and (ii) that the Lagrange
multipliers are upper-bounded by a parameter $\multiplierbound \ge
\multipliers_j$, \ie that instead of optimizing \eqref{dual-problem}, line 3 of
\algref{majorization-minimization} will optimize:
\begin{equation}
  \label{eq:generalization-dual-problem} \max_{0 \preceq \multipliers \preceq
  \multiplierbound} \min_{w,b} \Psi\left( w, b, \multipliers; w', b' \right)
  \eqcomma
\end{equation}
the difference being the upper bound on $\multipliers$. If $V$ is large enough
that no $v_j$s are bound to a constraint, then this will have no effect on the
solution. If, however, $V$ is too small, then the solution might not satisfy
the dataset constraints.
Notice that \algref{cutting-plane-multipliers} assumes that $\multipliers \in
\multiplierspace$, with $\multiplierspace$ being compact---hence, for our
proposed optimization procedure, the assumption is that $\multiplierspace
\subseteq \left[0, \multiplierbound\right]^m$.

With these assumptions in place, we're ready to move on to defining a function
class that contains any solution that could be found by our algorithm, and
bounding its Rademacher complexity.

\begin{lem}{generalization-function-class}
  Define $\mathcal{F}$ to be the set of all linear functions $f(x) =
  \inner{w}{x} - b$ with $\norm{w}_2 \le X B / \lambda$ and $\abs{b} \le 1/2 +
  X^2 B / \lambda$, where $X \ge \norm{x}_2$ is a uniform upper bound on the
  magnitudes of all training examples, and:
  \begin{equation*}
    B = \sum_{i=1}^k \left( \positivelosscoefficient{i} +
    \negativelosscoefficient{i} + \multiplierbound \sum_{j=1}^m \left(
    \positiveconstraintcoefficient{j}{i} + \negativeconstraintcoefficient{j}{i}
    \right) \right) \eqperiod
  \end{equation*}
  Then $\mathcal{F}$ will contain all $\abs{b}$-minimizing optimal solutions of
  \eqref{generalization-dual-problem} for any $(w', b')$ and any training
  dataset.
\end{lem}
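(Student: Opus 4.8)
The plan is to bound $\norm{w}_2$ and $\abs{b}$ at any $\abs{b}$-minimizing optimizer of \eqref{generalization-dual-problem}, using first-order optimality of the inner $\min_{w,b}\Psi$. Fix any $(w',b')$ and any training set, and let $(w^*, b^*)$ achieve the inner minimum in \eqref{generalization-dual-problem} for the optimal $\multipliers^*$. The function $\Psi(\cdot,\cdot,\multipliers^*;w',b')$ from \eqref{psi-definition} is a sum of the $\frac{\lambda}{2}\norm{w}_2^2$ term, a $\multipliers^*$-independent constant, and nonnegatively-weighted terms $\ramppositiveratebound$ and $\rampnegativeratebound$, each of which is an average over $x\in D_i$ of the hinge-or-constant bound $\positiverampbound$ (resp. $\negativerampbound$) from \eqref{ramp-bounds}. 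The key quantitative fact is that each such bound, as a function of $z=\inner{w}{x}-b$, has subgradients lying in $[0,1]$ (the hinge $\max\{0,\nicefrac12+z\}$ has slope in $\{0,1\}$; the constant $1$ has slope $0$), so viewed as a function of $(w,b)$ via the chain rule its subgradient \wrt $w$ has norm $\le \norm{x}_2 \le X$ and its partial \wrt $b$ has magnitude $\le 1$.

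First I would handle $w$. At the minimizer, $0$ is a subgradient of $\Psi$ \wrt $w$, so $\lambda w^* = -\sum (\text{coefficient})\cdot(\text{subgradient \wrt } w)$, where the coefficients are exactly the $\positivelosscoefficient{i}+\sum_j \multipliers^*_j\positiveconstraintcoefficient{j}{i}$ and $\negativelosscoefficient{i}+\sum_j \multipliers^*_j\negativeconstraintcoefficient{j}{i}$ appearing in \eqref{psi-definition}, each averaged over its dataset. Since $0\preceq\multipliers^*\preceq\multiplierbound$, these coefficients sum to at most $B$, and each subgradient-average has norm $\le X$, so by the triangle inequality $\lambda\norm{w^*}_2 \le XB$, giving $\norm{w^*}_2 \le XB/\lambda$.

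Next, the bias. Here I would use the $\abs{b}$-minimizing assumption: among all optimal $(w^*,b)$ with $w^*$ fixed, we take the one of smallest $\abs{b}$. If $\abs{b^*}$ were strictly larger than $\nicefrac12 + X^2 B/\lambda$, I claim every example $x$ has $\inner{w^*}{x}-b^*$ in the flat region of \emph{all} the bounds (either $z\le -\nicefrac12$, where $\positiverampbound$ is flat at $0$ and $\negativerampbound$ flat at $1$, or $z\ge\nicefrac12$, symmetrically), because $\abs{\inner{w^*}{x}} \le \norm{w^*}_2\norm{x}_2 \le X^2B/\lambda$, so $z$ is bounded away from $[-\nicefrac12,\nicefrac12]$ with the sign of $-b^*$ controlling which side. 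In that regime $\Psi$ is locally constant in $b$, so one could move $b^*$ toward $0$ without changing the objective, contradicting $\abs{b}$-minimality — unless doing so would cross into the active region, which happens precisely at $\abs{b} = \nicefrac12 + X^2B/\lambda$. Hence $\abs{b^*} \le \nicefrac12 + X^2 B/\lambda$, and $f(x)=\inner{w^*}{x}-b^*$ lies in $\mathcal{F}$.

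The main obstacle I anticipate is the bias argument: making rigorous the claim that $\abs{b}$-minimality forces $b$ into the stated range requires care about the case where the objective is genuinely flat in $b$ on a half-line (so that no finite minimizer of $\abs{b}$ that is ``just touching'' the active set exists), and about handling all $k$ datasets and both the positive and negative bounds simultaneously — one must check that the union of active regions over all $x$ and all terms still leaves a $b$-interval on which $\Psi$ is constant. The $w$-bound, by contrast, is a routine application of the optimality condition and the triangle inequality once the $[0,1]$-subgradient observation is in hand.
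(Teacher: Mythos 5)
Your bound on $\norm{w}_2$ is correct and is exactly the paper's argument: write the stationarity condition $\lambda w = -\nabla_w f - \sum_j \multipliers_j \nabla_w g_j$, use that each hinge/constant term has subgradient in $[0,1]$ so the $w$-gradients of the per-example terms have norm at most $X$, and apply the triangle inequality with $0 \preceq \multipliers \preceq \multiplierbound$ to get $\norm{w}_2 \le XB/\lambda$.

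The bias argument, however, contains a genuine error that you partly anticipated. You claim that when $z = \inner{w^*}{x} - b^* \le -\nicefrac{1}{2}$ the bound $\negativerampbound$ is ``flat at $1$.'' That would be true of the ramp $\ramp$ itself, but $\negativerampbound$ is not the ramp: by \eqref{ramp-bounds} it is either the hinge $\max\{0,\nicefrac{1}{2}-z\}$ or the constant $1$, and the hinge is \emph{not} flat on $z \le -\nicefrac{1}{2}$ --- there it equals $\nicefrac{1}{2}-z$, which grows linearly as $b$ increases. So for $\abs{b}$ large, $\Psi$ is in general \emph{not} locally constant in $b$, and the ``move $b$ toward zero without changing the objective'' step fails as stated. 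The correct observation, which is what the paper uses, is monotonicity rather than constancy: once $\abs{b} > \nicefrac{1}{2}+X^2B/\lambda$, every term $\max\{0,\nicefrac{1}{2}\pm(\inner{w}{x}-b)\}$ (and the constant $1$) is nondecreasing in $\abs{b}$; since both the objective and each constraint of \probref{convex-objective} are nonnegative combinations of such terms, shrinking $\abs{b}$ to $\nicefrac{1}{2}+X^2B/\lambda$ never increases the objective and never violates feasibility, and the $\abs{b}$-minimizing assumption then closes the argument. This monotonicity argument also resolves the ``flat on a half-line'' worry you raised at the end: a decrease in $\abs{b}$ can only help, so a threshold-$\abs{b}$ solution is always available and preferred.
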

\begin{proof}
  Let $f(w, b) + (\lambda / 2) \norm{w}_2^2$ be the the objective function of
  \probref{convex-objective}, and $g_j(w, b) \le \gamma^{(j)}$ the $j$th
  constraint. Then it follows that:
  \begin{align*}
    \norm{\nabla_w f\left(w,b\right)}_2 \le &
    X \sum_{i=1}^k \left( \positivelosscoefficient{i} +
    \negativelosscoefficient{i} \right) \\
    \norm{\nabla_w g_j\left(w,b\right)}_2 \le &
    X \multiplierbound \sum_{i=1}^k \left( \positiveconstraintcoefficient{j}{i}
    + \negativeconstraintcoefficient{j}{i} \right) \eqperiod
  \end{align*}
  Differentiating the definition of $\Psi$ (\eqref{psi-definition}) and setting
  the result equal to zero shows that any optimal $w$ must satisfy (this is the
  stationarity KKT condition):
  \begin{equation*}
    \lambda w = -\nabla_w f\left(w,b\right) - \sum_{j=1}^m \multipliers_j
    \nabla_w g_j\left(w,b\right) \eqcomma
  \end{equation*}
  implying by the triangle inequality that $\norm{w}_2 \le X B / \lambda$,
  where $B$ is as defined in the theorem statement.

  Now let's turn our attention to $b$. The above bound implies that, if $w$ is
  optimal, then $\abs{\inner{w}{x}} \le X^2 B / \lambda$, from which it follows
  that the hinge functions $\max\{ 0, \nicefrac{1}{2} + (\inner{w}{x} - b) \}$
  and $\max\{ 0, \nicefrac{1}{2} - (\inner{w}{x} - b) \}$ will be nondecreasing
  in $\abs{b}$ as long as $\abs{b} > 1/2 + X^2 B / \lambda$.
  \probref{convex-objective} seeks to minimize a positive linear combination of
  such hinge functions subject to upper-bound constraints on positive linear
  combinations of such hinge functions, so our assumption that the optimizer
  used on line 3 of \algref{majorization-minimization} will always choose the
  smallest optimal $b$ gives that $\abs{b} \le 1/2 + X^2 B / \lambda$.
\end{proof}

\begin{lem}{generalization-Rademacher}
  The function class $\mathcal{F}$ of \lemref{generalization-function-class}
  has Rademacher complexity~\citep{Bartlett:2002}:
  \begin{equation*}
    \rademacher_n \left(\mathcal{F}\right) \le \frac{1}{2\sqrt{n}} + \frac{2
    X^2}{\lambda \sqrt{n}} \sum_{i=1}^k \left( \positivelosscoefficient{i} +
    \negativelosscoefficient{i} + \multiplierbound \sum_{j=1}^m \left(
    \positiveconstraintcoefficient{j}{i} + \negativeconstraintcoefficient{j}{i}
    \right) \right) \eqcomma
  \end{equation*}
  where $X \ge \norm{x}_2$, as in \lemref{generalization-function-class}, is a
  uniform upper bound on the magnitudes of all training examples.
\end{lem}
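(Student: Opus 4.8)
The plan is to bound $\mathcal{R}_n(\mathcal{F})$ by splitting $\mathcal{F}$ into its linear part and its bias part, controlling each with a textbook argument, and adding. Every $f\in\mathcal{F}$ can be written as $f=f_w+f_b$ with $f_w(x)=\inner{w}{x}$, $\norm{w}_2\le XB/\lambda$, and $f_b(x)=-b$, $\abs{b}\le 1/2+X^2B/\lambda$ — exactly the bounds supplied by \lemref{generalization-function-class}. Hence $\mathcal{F}$ is contained in the Minkowski sum of $\mathcal{F}_w:=\{x\mapsto\inner{w}{x}:\norm{w}_2\le XB/\lambda\}$ and $\mathcal{F}_b:=\{x\mapsto -b:\abs{b}\le 1/2+X^2B/\lambda\}$, and since Rademacher complexity is subadditive under such sums it suffices to bound $\mathcal{R}_n(\mathcal{F}_w)$ and $\mathcal{R}_n(\mathcal{F}_b)$ separately.

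For $\mathcal{F}_w$, I would fix a sample $x_1,\dots,x_n$ with $\norm{x_i}_2\le X$ and compute the empirical Rademacher complexity as $\expectation_\sigma\sup_{\norm{w}_2\le XB/\lambda}\tfrac1n\inner{w}{\sum_i\sigma_i x_i}=\tfrac{XB}{\lambda n}\expectation_\sigma\norm{\sum_i\sigma_i x_i}_2$; Jensen's inequality together with $\expectation_\sigma\norm{\sum_i\sigma_i x_i}_2^2=\sum_i\norm{x_i}_2^2\le nX^2$ yields $X^2B/(\lambda\sqrt n)$. For $\mathcal{F}_b$, the identical computation with the constant feature $-1$ gives $\tfrac1n\expectation_\sigma\abs{\sum_i\sigma_i}\cdot(1/2+X^2B/\lambda)\le(1/2+X^2B/\lambda)/\sqrt n$. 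Adding the two bounds gives $\mathcal{R}_n(\mathcal{F})\le\tfrac{1}{2\sqrt n}+\tfrac{2X^2B}{\lambda\sqrt n}$, and substituting the definition of $B$ from \lemref{generalization-function-class} produces the claimed inequality. If the $\mathcal{R}_n$ of \citet{Bartlett:2002} denotes the expected rather than the empirical complexity, nothing changes: the bound holds for every sample with $\norm{x_i}_2\le X$, so one may take the expectation over the draw of the sample.

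The only real subtlety — and where I would be most careful — is the bookkeeping around the bias term. One must apply subadditivity at the level of \emph{classes} (verifying $\mathcal{F}\subseteq\mathcal{F}_w+\mathcal{F}_b$ so that the standard sum rule for Rademacher complexity applies), and then track constants: half of the factor-of-two multiplying $X^2B/\lambda$ comes from $\mathcal{F}_w$ and half from the $X^2B/\lambda$ buried inside the bound on $\abs{b}$, while the stray $1/(2\sqrt n)$ is precisely the contribution of the $1/2$ in that same bound. Everything else is the routine Cauchy--Schwarz/Jensen estimate for $\ell^2$-bounded linear classes, so I do not anticipate any genuine obstacle beyond this accounting.
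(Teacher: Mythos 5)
Your proof is correct and follows essentially the same route as the paper: split the supremum over $(w,b)$ into independent suprema over the linear part $\inner{w}{\cdot}$ and the bias part $-b$, bound each via $\expectation\lVert\sum_i\epsilon_i x_i\rVert_2\le X\sqrt n$ and $\expectation\lvert\sum_i\epsilon_i\rvert\le\sqrt n$ (the paper invokes Khintchine where you invoke Jensen/Cauchy--Schwarz, but in this setting these yield the identical estimate), and add. The paper writes the split as an exact equality since $\mathcal F$ is literally the product of the two constraint sets, so your appeal to subadditivity under Minkowski sums is a mild overkill, but it is sound and the bookkeeping of the $1/(2\sqrt n)$ and the two copies of $X^2B/(\lambda\sqrt n)$ matches the paper's.
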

\begin{proof}
  The Rademacher complexity of $\mathcal{F}$ is:
  \begin{align*}
    \rademacher_n \left(\mathcal{F}\right) =& \expectation \left[ \sup_{f \in
    \mathcal{F}} \frac{1}{n} \sum_{i=1}^n \epsilon_i f\left(x_i\right) \right]
    \\
    =& \expectation \left[ \sup_{w : \norm{w}_2 \le \frac{X B}{\lambda}}
    \frac{1}{n} \sum_{i=1}^n \epsilon_i \inner{w}{x} \right] + \expectation
    \left[ \sup_{b : \abs{b} \le \frac{1}{2} + \frac{X^2 B}{\lambda}}
    \frac{1}{n} \sum_{i=1}^n \epsilon_i b \right] \eqcomma
  \end{align*}
  where the expectations are taken over the \iid Rademacher random variables
  $\epsilon_1, \dots, \epsilon_n$ and the \iid training sample $x_1, \dots,
  x_n$, and $B$ is as in \lemref{generalization-function-class}. Applying the
  Khintchine inequality and substituting the definition of $B$ yields the
  claimed bound.
\end{proof}

We can now apply the results of \citet{Bartlett:2002} to prove bounds on the
generalization error. To this end, we assume that each of our training datasets
$D_i$ is drawn \iid from some underlying unknown distribution $\mathcal{D}_i$.
We will bound the expected positive and negative prediction rates \wrt these
distributions:
\begin{equation}
  \label{eq:expected-rates} \expectedpositiverate\left(\mathcal{D}; f\right) =
  \expectation_{x \sim \mathcal{D}} \left[ f\left(x\right) \right]
  \;\;\;\;\;\;\;\;
  \expectednegativerate\left(\mathcal{D}; f\right) =
  \expectedpositiverate\left(\mathcal{D}; 1 - f\right) \eqcomma
\end{equation}
where $f:\R^d \rightarrow \{0, 1\}$ is a binary classification function.

\begin{thm}{generalization-rate-bound}
  For a given $(w,b)$ pair, define $f_{w,b}(x)$ such that it predicts $1$ with
  probability $\ramp(\inner{w}{x} - b)$, and $0$ otherwise ($\ramp$ is as in
  \eqref{ramp-loss}, so this is the randomized classifier of
  \appref{randomized-classification}).

  Suppose that the $k$ training datasets $D_i$ have sizes $n_i = \abs{D_i}$,
  and that $D_i$ is drawn \iid from $\mathcal{D}_i$ for all $i \in \{1, \dots,
  k\}$. Then, with probability $1 - \delta$ over the training samples,
  uniformly over all $(w,b)$ pairs that are optimal solutions of
  \eqref{generalization-dual-problem} for some $(w',b')$ under the assumptions
  listed at the start of this appendix, the expected rates will satisfy:
  \begin{align*}
    \expectedpositiverate\left(\mathcal{D}_i; f_{w,b}\right) \le &
    \ramppositiverate\left(D_i;w,b\right) + E / \sqrt{n_i} \\
    \expectednegativerate\left(\mathcal{D}_i; f_{w,b}\right) \le &
    \rampnegativerate\left(D_i;w,b\right) + E / \sqrt{n_i} \eqcomma
  \end{align*}
  the above holding for all $i \in \{1, \dots, k\}$, where:
  \begin{equation*}
    E = 1 + \frac{4 X^2}{\lambda} \sum_{i=1}^k \left(
    \positivelosscoefficient{i} + \negativelosscoefficient{i} +
    \multiplierbound \sum_{j=1}^m \left( \positiveconstraintcoefficient{j}{i} +
    \negativeconstraintcoefficient{j}{i} \right) \right) + \sqrt{8
    \ln\left(\frac{4k}{\delta}\right)} \eqcomma
  \end{equation*}
  with $X \ge \norm{x}_2$, as in
  \lemrefs{generalization-function-class}{generalization-Rademacher}, being a
  uniform upper bound on the magnitudes of all training examples $x \sim
  \mathcal{D}_i$ for all $i \in \{1, \dots, k\}$.
\end{thm}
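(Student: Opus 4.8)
The plan is to read the two claimed inequalities as a one‑sided uniform‑convergence bound for the real‑valued function class $\ramp\circ\mathcal{F} := \{\, x \mapsto \ramp(\inner{w}{x}-b) \,:\, (x\mapsto\inner{w}{x}-b)\in\mathcal{F}\,\}$, where $\mathcal{F}$ is the class of \lemref{generalization-function-class}, and then to combine the Rademacher bound of \lemref{generalization-Rademacher} with the off‑the‑shelf Rademacher generalization theorem of \citet{Bartlett:2002}.

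First I would put the quantities of interest in a common form. For the randomized classifier $f_{w,b}$, taking the expectation over both the draw $x\sim\mathcal{D}_i$ and the internal coin flip gives $\expectedpositiverate(\mathcal{D}_i; f_{w,b}) = \expectation_{x\sim\mathcal{D}_i}[\ramp(\inner{w}{x}-b)]$, while $\ramppositiverate(D_i;w,b) = \tfrac{1}{n_i}\sum_{x\in D_i}\ramp(\inner{w}{x}-b)$ is exactly the empirical average of the same function; and since $1-\ramp(z)=\ramp(-z)$, the negative rates $\expectednegativerate$ and $\rampnegativerate$ are the population and empirical averages of $\ramp(\inner{-w}{x}-(-b))$. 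So all four quantities are population/empirical means of a member of $\ramp\circ\mathcal{F}$: by \lemref{generalization-function-class} every optimal solution $(w,b)$ of \eqref{generalization-dual-problem} for any $(w',b')$ has $(x\mapsto\inner{w}{x}-b)\in\mathcal{F}$, and $\mathcal{F}$ is symmetric under $(w,b)\mapsto(-w,-b)$ (it is defined only by bounds on $\norm{w}_2$ and $\abs{b}$), so the negative‑rate integrand lies in $\ramp\circ\mathcal{F}$ as well. Hence it suffices to show that, with probability $1-\delta$ over the draws of $D_1,\dots,D_k$, simultaneously for every $i$ and every $h\in\ramp\circ\mathcal{F}$ one has $\expectation_{x\sim\mathcal{D}_i}[h(x)] \le \tfrac{1}{n_i}\sum_{x\in D_i}h(x) + E/\sqrt{n_i}$.

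Next I would bound $\rademacher_{n_i}(\ramp\circ\mathcal{F})$. The ramp $\ramp$ is $1$‑Lipschitz, and $\ramp\circ\mathcal{F}$ differs from $(\ramp-\tfrac12)\circ\mathcal{F}$ only by the additive constant $\tfrac12$, which does not change the Rademacher complexity; since $\ramp-\tfrac12$ is $1$‑Lipschitz and vanishes at $0$, the Ledoux--Talagrand contraction inequality gives $\rademacher_{n_i}(\ramp\circ\mathcal{F}) \le \rademacher_{n_i}(\mathcal{F})$, which is controlled by \lemref{generalization-Rademacher} applied with $n=n_i$. Because every $h\in\ramp\circ\mathcal{F}$ takes values in $[0,1]$, I can then invoke the Rademacher generalization bound of \citet{Bartlett:2002}: for a fixed $i$, with probability at least $1-\delta'$, every $h\in\ramp\circ\mathcal{F}$ satisfies $\expectation_{\mathcal{D}_i}[h] \le \tfrac{1}{n_i}\sum_{x\in D_i}h(x) + 2\,\rademacher_{n_i}(\ramp\circ\mathcal{F}) + \sqrt{8\ln(2/\delta')/n_i}$. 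Taking a union bound over the $2k$ events (positive and negative rate, for each of the $k$ datasets) with $\delta'=\delta/(2k)$ and substituting \lemref{generalization-Rademacher} yields exactly the stated inequalities: the $1$ in $E$ is twice the $1/(2\sqrt{n_i})$ term of \lemref{generalization-Rademacher}, the $(4X^2/\lambda)\sum_i(\cdots)$ term is twice the $w$‑part of the Rademacher complexity, and $\sqrt{8\ln(4k/\delta)}$ is the confidence term after the union bound.

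I expect the only delicate points to be bookkeeping ones: handling the fact that $\ramp(0)=\tfrac12\ne0$ when applying contraction, noting that uniform convergence over the full bounded‑norm class $\mathcal{F}$ (a superset of the set of solutions the algorithm can actually reach) is precisely what is needed, and matching the constants and the exact form of the \citet{Bartlett:2002} inequality so the error term collapses to $E/\sqrt{n_i}$. The one genuinely substantive ingredient — a Rademacher bound for a class $\mathcal{F}$ that contains every reachable solution — is already supplied by \lemrefs{generalization-function-class}{generalization-Rademacher}, so this theorem is essentially their assembly with a standard symmetrization/concentration result.
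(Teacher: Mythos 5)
Your proof matches the paper's: both observe that $\ramppositiverate$ and $\rampnegativerate$ are empirical means of a $1$-Lipschitz composition with the linear class $\mathcal{F}$ of \lemref{generalization-function-class}, invoke the Rademacher generalization bound together with the Lipschitz-composition property from \citet{Bartlett:2002} (the paper cites Theorems 8 and 12(4)) along with \lemref{generalization-Rademacher}, take a union bound over the $2k$ inequalities with confidence $\delta/2k$ each, and then identify the expected ramp rate of the deterministic predictor with the expected indicator rate of the randomized classifier of \appref{randomized-classification}. Your extra step of centering $\ramp$ by $\nicefrac{1}{2}$ before applying contraction is a sensible bookkeeping detail but does not change the argument.
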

\begin{proof}
  Observe that the ramp rates $\ramppositiverate$ and $\rampnegativerate$ are
  $1$-Lipschitz. Applying Theorems 8 and 12 (part 4) of \citet{Bartlett:2002}
  gives that each of the following inequalities hold with probability $1 -
  \delta / 2k$, for all $i \in \{1, \dots, k\}$:
  \begin{align*}
    \expectation_{x \sim \mathcal{D}_i} \left[ \ramppositiverate\left(\{x\}; w,
    b\right) \right] \le & \ramppositiverate\left(D_i;w,b\right) + 2
    \rademacher_{n_i} \left(\mathcal{F}\right) + \sqrt{\frac{8}{n_i}
    \ln\left(\frac{4k}{\delta}\right)} \\
    \expectation_{x \sim \mathcal{D}_i} \left[ \rampnegativerate\left(\{x\}; w,
    b\right) \right] \le & \rampnegativerate\left(D_i;w,b\right) + 2
    \rademacher_{n_i} \left(\mathcal{F}\right) + \sqrt{\frac{8}{n_i}
    \ln\left(\frac{4k}{\delta}\right)} \eqcomma
  \end{align*}
  where $\rademacher_n (\mathcal{F})$ is as in
  \lemref{generalization-Rademacher}.
  The union bound implies that all $2k$ inequalities hold simultaneously with
  probability $1 - \delta$.
  The LHSs above are the expected ramp-based rates of a deterministic
  classifier, but as was explained in \appref{randomized-classification}, these
  are identical to the expected indicator-based rates of a randomized
  classifier, which is what is claimed.
\end{proof}

An immediate consequence of this result is that (with probability $1 - \delta$)
if $(w,b)$ suffers the training loss:
\begin{equation*}
  \hat{\mathcal{L}} = \sum_{i=1}^k \left( \positivelosscoefficient{i}
  \ramppositiverate ( D_i; w, b) + \negativelosscoefficient{i}
  \rampnegativerate ( D_i; w, b) \right) \eqcomma
\end{equation*}
then the expected loss on previously-unseen data (drawn \iid from the same
distributions) will be upper-bounded by:
\begin{equation*}
  \hat{\mathcal{L}} + E \sum_{i=1}^k \frac{\positivelosscoefficient{i} +
  \negativelosscoefficient{i}}{\sqrt{n_i}} \eqperiod
\end{equation*}
Likewise, if $(w,b)$ satisfies the constraint:
\begin{equation*}
  \sum_{i=1}^k \left( \positiveconstraintcoefficient{j}{i} \ramppositiverate (
  D_i; w, b) + \negativeconstraintcoefficient{j}{i} \rampnegativerate ( D_i; w,
  b ) \right) \leq \constraintbound{j} \eqcomma
\end{equation*}
then the corresponding rate constraint on previously-unseen data will be
violated by no more than:
\begin{equation*}
  E \sum_{i=1}^k \frac{\positiveconstraintcoefficient{j}{i} +
  \negativeconstraintcoefficient{j}{i}}{\sqrt{n_i}}
\end{equation*}
in expectation, where, here and above, $E$ is as in
\thmref{generalization-rate-bound}.

\section{Fairness constraints of \citet{Zafar:2015}}\label{app:summary-examples}

The constraints of \citet{Zafar:2015} can be interpreted as a relaxation of the
constraint $-c \le \positiverate(D^{A};w) - \positiverate(D^{B};w) \leq c$
under the linear approximation
\begin{equation*}
  \positiverate(D;w,b) \approx\frac{1}{\abs{D}} \sum_{x\in D}(\inner{w}{x} - b)
  \eqcomma
\end{equation*}
giving:
\begin{equation*}
  \positiverate(D^{A};w,b) - \positiverate(D^{B};w,b)
  \approx \frac{1}{ \abs{D^A} }\sum_{x\in D^{A}}( \inner{w}{x} - b)-\frac{1}{
  \abs{D^B} }\sum_{x\in D^{B}}( \inner{w}{x} - b) \\
  = \inner{w}{\bar{x}} \eqcomma
\end{equation*}
where $\bar{x}$ is defined as in \eqref{summary-example}. We can therefore
implement the approach of \citet{Zafar:2015} within our framework by adding the
constraints:
\begin{align*}
  \inner{w}{\bar{x}} \leq c & \iff\max\{0, 1 - \inner{w}{\bar{x}} \}\leq c+1 \\
  c\leq \inner{w}{\bar{x}} & \iff\max\{0, 1 + \inner{w}{\bar{x}} \}\leq c+1
  \eqcomma
\end{align*}
and solving the hinge constrained optimization problem described in
\probref{convex-objective}. Going further, we could implement these constraints
as egregious examples using the constraint:
\begin{align*}
  \inner{w}{\bar{x}} \leq c \iff & \inner{w}{\frac{1}{4c}\bar{x}}
  \leq\frac{1}{4} \iff\frac{1}{2} + \inner{w}{ \frac{1}{4c} \bar{x}}
  \leq\frac{3}{4} \\
  & \iff\min\left\{ \max\left\{ \frac{1}{2} + \inner{w}{\frac{1}{4c}\bar{x}}, 0
  \right\}, 1 \right\} \leq \frac{3}{4}\iff r_{p}(\bar{x})\leq\frac{3}{4}
  \eqcomma
\end{align*}
permitting us to perform an analogue of their approximations in ramp form.

\section{Cutting plane algorithm}\label{app:cutting-plane}

We'll now discuss some variants of \algref{cutting-plane-multipliers}. We
assume that $\svmproblem(\multipliers)$ is the function that we wish to
maximize for $\multipliers \in \multiplierspace$, where:
\begin{enumerate*}
  \item $\multiplierspace \subseteq \R^m$ is compact and convex.
  \item $\svmproblem : \multiplierspace \rightarrow \R$ is concave.
  \item $\svmproblem$ has a (not necessarily unique) maximizer $\multipliers^*
  = \argmax_{ \multipliers \in \multiplierspace } \svmproblem\left(
  \multipliers \right)$.
\end{enumerate*}
For the purposes of \algref{cutting-plane-multipliers}, we would take
$\svmproblem$ to be as in \eqref{dual-problem}, but the same approach can be
applied more generally.

\subsection{Maximization-based}\label{app:cutting-plane:maximization}

We're primarily interested in proving convergence rates, and will do so in
\appref{cutting-plane:centroid}. With that said, there is one easy-to-implement
variant of \algref{cutting-plane-multipliers} for which we have not proved a
convergence rate, but that we use in some of our experiments due to its
simplicity:
\begin{defn}
  \label{def:maximization-multipliers}
  \textbf{(Maximization-based \algref{cutting-plane-multipliers})} CutChooser
  chooses $\multipliers^{(t)} =  \argmax_{\multipliers \in \multiplierspace}
  h_t\left( \multipliers \right)$ and $\epsilon_t = (U_t - L_t) / 2$.
\end{defn}
Observe that this $\multipliers^{(t)}$ can be found at the same time as $U_t$
is computed, since both result from optimization of the same linear program.
However, despite the ease of implementing this variant, we have not proved any
convergence rates about it.

\subsection{Center of mass-based}\label{app:cutting-plane:centroid}

We'll now discuss a variant of \algref{cutting-plane-multipliers} that chooses
$v^{(t)}$ and $\epsilon_t$ based on the center of mass of the ``superlevel
hypograph'' determined by $h_t$ and $L_t$, which we define as the intersection
of the hypograph of $h_t$ (the set of $m+1$-dimensional points $(\multipliers,
z)$ for which $z \le h_t(z)$) and the half-space containing all points
$(\multipliers, z)$ for which $z \ge L_t$.
Notice that, in the context of \algref{cutting-plane-multipliers}, the
superlevel hypograph defined by $h_t$ and $L_t$ corresponds to the set of pairs
of candidate maximizers and their possible function values at the $t$th
iteration.
Because this variant is based on finding a cut center in the $m+1$-dimensional
hypograph, rather than an $m$-dimensional level set (which is arguably more
typical), this is an instance of what \citet{Boyd:2011} call an ``epigraph
cutting plane method''.

Throughout this section, we will take $\lebesguemeasure$ to be the Lebesgue
measure (either $1$-dimensional, $m$-dimensional, or $m+1$-dimensional,
depending on context). We also must define some notation for dealing with
superlevel sets and hypographs. For a concave $f : \multiplierspace \rightarrow
\R$ and $y \in \R$, define:
\begin{equation}
  \label{eq:superlevel-set-definition} \superlevelset\left( f, y \right) =
  \left\{ \multipliers \in \multiplierspace \mid f\left( \multipliers \right)
  \ge y \right\}
\end{equation}
as the superlevel set of $f$ at $y$. Further define:
\begin{equation}
  \label{eq:superlevel-hypograph-definition} \superlevelhypograph\left( f, y
  \right) = \left\{ \left( \multipliers, z \right) \in \multiplierspace \times
  \R \mid f\left( \multipliers \right) \ge z \ge y \right\}
\end{equation}
as the superlevel hypograph of $f$ above $y$.
With these definitions in place, we're ready to explicitly state the center of
mass-based rule for the CutChooser function on line 8 of
\algref{cutting-plane-multipliers}:
\begin{defn}
  \label{def:centroid-multipliers}
  \textbf{(Center of mass-based \algref{cutting-plane-multipliers})} CutChooser
  takes $(\multipliers^{(t)}, z_t)$ to be the center of mass of
  $\superlevelhypograph(h_t, L_t)$, and chooses $\epsilon_t = (z_t - L_t) / 2$.
\end{defn}
Finding the center of mass of a polytope is a difficult problem in
general~\citep{Nemirovski:1994, Rademacher:2007}, so our convergence results
for this version of CutChooser are mostly of theoretical interest. With that
  said, for one dimensional problems (the setting of \appref{SVM:bias}) it may
  be implemented efficiently.
%

Our final bit of ``setup'' before getting to our results is to state two
classic theorems, plus a corollary, which will be needed for our proofs. The
first enables us to interpolate the areas of superlevel sets:

\begin{thm}{Brunn-Minkowski}
  Suppose that the superlevel sets of a concave $f : \multiplierspace
  \rightarrow \R$ at $y_1$ and $y_2$ are nonempty, and take $\gamma \in \left[
  0, 1 \right]$. Then:
  \begin{equation*}
    \left( \measureof{\lebesguemeasure}{ \superlevelset\left( f, \gamma y_{ 1 }
    + \left( 1 - \gamma \right) y_{ 2 } \right) } \right)^{ \nicefrac{ 1 }{ m }
    } \ge\gamma \left( \measureof{\lebesguemeasure}{ \superlevelset\left( f,
    y_{ 1 } \right) } \right)^{ \nicefrac{ 1 }{ m } } + \left( 1 - \gamma
    \right) \left( \measureof{\lebesguemeasure}{ \superlevelset\left( f, y_{ 2
    } \right) } \right)^{ \nicefrac{ 1 }{ m } } \eqperiod
  \end{equation*}
\end{thm}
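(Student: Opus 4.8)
The plan is to reduce the statement to the classical Brunn–Minkowski inequality on $\R^m$, the bridge being a simple observation about superlevel sets of concave functions. First I would note that concavity of $f$ together with convexity of $\multiplierspace$ makes superlevel sets well behaved under Minkowski combination: if $\multipliers_1 \in \superlevelset\left(f, y_1\right)$ and $\multipliers_2 \in \superlevelset\left(f, y_2\right)$, then $\gamma \multipliers_1 + \left(1 - \gamma\right) \multipliers_2 \in \multiplierspace$ and $f\left(\gamma \multipliers_1 + \left(1 - \gamma\right)\multipliers_2\right) \ge \gamma f\left(\multipliers_1\right) + \left(1 - \gamma\right) f\left(\multipliers_2\right) \ge \gamma y_1 + \left(1 - \gamma\right) y_2$. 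Hence, writing Minkowski sums for sets,
\[
  \gamma \, \superlevelset\left(f, y_1\right) + \left(1 - \gamma\right) \superlevelset\left(f, y_2\right) \subseteq \superlevelset\left(f, \gamma y_1 + \left(1 - \gamma\right) y_2\right) \eqperiod
\]
Since $\multiplierspace$ is compact and convex, each superlevel set is a compact convex subset of $\R^m$ (in particular Lebesgue measurable, as is any Minkowski combination of such sets), so monotonicity of $\lebesguemeasure$ applied to this inclusion gives $\lebesguemeasure\left(\gamma \, \superlevelset\left(f, y_1\right) + \left(1 - \gamma\right)\superlevelset\left(f, y_2\right)\right) \le \lebesguemeasure\left(\superlevelset\left(f, \gamma y_1 + \left(1 - \gamma\right) y_2\right)\right)$.

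Next I would invoke the classical Brunn–Minkowski inequality — for nonempty compact $A, B \subseteq \R^m$, $\lebesguemeasure\left(A + B\right)^{1/m} \ge \lebesguemeasure\left(A\right)^{1/m} + \lebesguemeasure\left(B\right)^{1/m}$ — together with the scaling identity $\lebesguemeasure\left(\gamma A\right) = \gamma^m \lebesguemeasure\left(A\right)$, equivalently $\lebesguemeasure\left(\gamma A\right)^{1/m} = \gamma \, \lebesguemeasure\left(A\right)^{1/m}$. Applying these with $A = \gamma \, \superlevelset\left(f, y_1\right)$ and $B = \left(1 - \gamma\right)\superlevelset\left(f, y_2\right)$, both nonempty by hypothesis, yields
\[
  \lebesguemeasure\left(\gamma \, \superlevelset\left(f, y_1\right) + \left(1 - \gamma\right)\superlevelset\left(f, y_2\right)\right)^{1/m} \ge \gamma \, \lebesguemeasure\left(\superlevelset\left(f, y_1\right)\right)^{1/m} + \left(1 - \gamma\right)\lebesguemeasure\left(\superlevelset\left(f, y_2\right)\right)^{1/m} \eqperiod
\]
Chaining this with the measure inequality from the previous step produces exactly the claimed bound.

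The rest is bookkeeping: the cases $\gamma \in \left\{0, 1\right\}$ are immediate; if $\multiplierspace$ fails to be full-dimensional then every $m$-dimensional measure in sight vanishes and the inequality reads $0 \ge 0$; and if one superlevel set has zero measure the convention $0^{1/m} = 0$ handles it while the classical inequality still applies. I do not anticipate a real obstacle — the only genuine step is the concavity observation that turns a statement about level sets of $f$ into a Minkowski-sum statement, after which classical Brunn–Minkowski does all the work. The one point requiring a word of care is measurability, which is why I lean on compactness and convexity of $\multiplierspace$ to stay within the setting where the elementary form of Brunn–Minkowski is valid.
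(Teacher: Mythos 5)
Your proof is correct and takes the same approach as the paper, which simply cites this as the Brunn--Minkowski inequality; you have filled in the standard reduction (the Minkowski-sum inclusion for superlevel sets of a concave function, followed by the classical additive Brunn--Minkowski inequality and the scaling identity) that the paper leaves implicit.
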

\begin{proof}
  This is the Brunn-Minkowski inequality~\egcite{Ball:1997}.
\end{proof}

This theorem has the immediate useful corollary:

\begin{cor}{simple-volume-bounds}
  Suppose that $f : \multiplierspace \rightarrow \R$ is concave with a
  maximizer $\multipliers^* \in \multiplierspace$, and that $\delta \ge 0$.
  Then:
  \begin{equation*}
    \left( \frac{ \delta }{ m + 1 } \right) \measureof{\lebesguemeasure}{
    \superlevelset\left( f, f\left( \multipliers^* \right) - \delta \right) }
    \le \measureof{\lebesguemeasure}{ \superlevelhypograph\left( f, f\left(
    \multipliers^* \right) + \delta \right) } \le \delta
    \measureof{\lebesguemeasure}{ \superlevelset\left( f, f\left(
    \multipliers^* \right) - \delta \right) } \eqperiod
  \end{equation*}
\end{cor}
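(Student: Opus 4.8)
The plan is to reduce the $(m+1)$-dimensional volume of the superlevel hypograph to a one-dimensional integral of $m$-dimensional superlevel-set volumes by Cavalieri's principle, bound that integral above essentially for free, and bound it below using \thmref{Brunn-Minkowski}. Throughout, abbreviate $f^* = f\left(\multipliers^*\right)$ and write $\phi(y) = \measureof{\lebesguemeasure}{\superlevelset(f, y)}$ for the (convex, hence measurable, and nonincreasing-in-$y$) $m$-dimensional volume of the superlevel set at height $y$; note that $\phi(y) = 0$ for $y > f^*$.

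First I would slice $\superlevelhypograph(f, f^* - \delta)$ vertically: for each fixed $\multipliers$ the slice is the interval $\bigl[f^* - \delta, f(\multipliers)\bigr]$ when $f(\multipliers) \ge f^* - \delta$, and is empty otherwise, so Fubini followed by the layer-cake identity gives
\[ \measureof{\lebesguemeasure}{\superlevelhypograph(f, f^* - \delta)} = \int_{\superlevelset(f, f^* - \delta)} \bigl( f(\multipliers) - f^* + \delta \bigr)\, d\multipliers = \int_{f^* - \delta}^{f^*} \phi(y)\, dy \eqperiod \]
Since $f(\multipliers) \le f^*$ on $\superlevelset(f, f^* - \delta)$, the integrand $f(\multipliers) - f^* + \delta$ is at most $\delta$, which immediately yields the right-hand inequality $\measureof{\lebesguemeasure}{\superlevelhypograph(f, f^* - \delta)} \le \delta\, \phi(f^* - \delta)$.

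For the left-hand inequality (the case $\delta = 0$ being trivial, since then all three quantities vanish, so assume $\delta > 0$), the superlevel sets over $[f^* - \delta, f^*]$ are all nonempty because they are nested and contain $\multipliers^*$, so \thmref{Brunn-Minkowski} applies with $y_1 = f^* - \delta$ and $y_2 = f^*$. Writing $y = \gamma(f^* - \delta) + (1 - \gamma) f^*$ with $\gamma = (f^* - y)/\delta \in [0, 1]$ and dropping the nonnegative term $(1 - \gamma)\phi(f^*)^{1/m}$, it gives $\phi(y)^{1/m} \ge \gamma\, \phi(f^* - \delta)^{1/m}$, i.e.\ $\phi(y) \ge \bigl((f^* - y)/\delta\bigr)^m \phi(f^* - \delta)$. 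Plugging this pointwise bound into $\int_{f^* - \delta}^{f^*} \phi(y)\, dy$ and evaluating $\int_{f^* - \delta}^{f^*} (f^* - y)^m\, dy = \delta^{m+1}/(m+1)$ gives $\measureof{\lebesguemeasure}{\superlevelhypograph(f, f^* - \delta)} \ge \frac{\delta}{m+1}\, \phi(f^* - \delta)$, which is exactly the claimed lower bound.

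The steps are each routine, so I expect the only real care to go into (i) justifying the layer-cake rewriting of the hypograph volume as $\int \phi(y)\,dy$, and (ii) checking that the hypotheses of \thmref{Brunn-Minkowski} hold along the whole interval $[f^* - \delta, f^*]$ — both of which come down to the monotonicity and nestedness of the superlevel sets together with the fact that every one of them contains the maximizer $\multipliers^*$. The final inequality is then just the elementary integral $\int_{f^* - \delta}^{f^*}(f^* - y)^m\,dy = \delta^{m+1}/(m+1)$.
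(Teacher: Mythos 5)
Your proof is correct and follows essentially the same route as the paper's: rewrite the $(m+1)$-dimensional volume of $\superlevelhypograph(f, f(\multipliers^*) - \delta)$ as $\int_{f^*-\delta}^{f^*} \lebesguemeasure(\superlevelset(f,y))\,dy$ via Fubini/layer-cake, bound the integrand below by $\left((f^*-y)/\delta\right)^m \lebesguemeasure(\superlevelset(f,f^*-\delta))$ using \thmref{Brunn-Minkowski} with the second term dropped, and bound it above using monotonicity of the superlevel sets. The only cosmetic difference is that you derive the upper bound directly from the slab-width bound $f(\multipliers) - f^* + \delta \le \delta$ in the $d\multipliers$ form of the integral rather than from $\lebesguemeasure(\superlevelset(f,y)) \le \lebesguemeasure(\superlevelset(f,f^*-\delta))$ in the $dy$ form, but these are trivially interchangeable. (Note also that the displayed corollary statement appears to contain a sign typo, $f(\multipliers^*) + \delta$ rather than $f(\multipliers^*) - \delta$ in the middle term; both the paper's own proof and yours prove the $-\delta$ version, which is the one actually used in \lemrefs{centroid-multipliers-convergence-rate}{centroid-multipliers-epsilon-bound}.)
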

\begin{proof}
  By \thmref{Brunn-Minkowski} (lower-bounding the second term on the RHS by
  zero), for $0 \le z \le \delta$:
  \begin{equation*}
    \measureof{\lebesguemeasure}{ \superlevelset\left( f, f\left(
    \multipliers^* \right) - z \right) } \ge \left( \frac{ z }{ \delta }
    \right)^m \measureof{\lebesguemeasure}{ \superlevelset\left( f, f\left(
    \multipliers^* \right) - \delta \right) } \eqcomma
  \end{equation*}
  from which integrating $\measureof{\lebesguemeasure}{
  \superlevelhypograph\left( f, f\left( \multipliers^* \right) - \delta \right)
  } = \int_{ 0 }^{ \delta } \measureof{\lebesguemeasure}{ \superlevelset\left(
  f, f\left( \multipliers^* \right) - z \right) } m\lebesguemeasure(z)$ yields
  the claimed lower bound. The upper bound follows immediately from the fact
  that the superlevel sets shrink as $z$ increases (i.e.
  $\measureof{\lebesguemeasure}{ \superlevelset\left( f, z' \right) } \le
  \measureof{\lebesguemeasure}{ \superlevelset\left( f, z \right) }$ for $z'
  \ge z$).
\end{proof}

The second classic result enables us to bound how much ``progress'' is made by
a cut based on the center of mass of a superlevel hypograph:

\begin{thm}{Grunbaum}
  Suppose that $S \subseteq \R^m$ is a convex set. If we let $z \in S$ be the
  center of mass of $S$, then for any half-space $H \ni z$:
  \begin{equation*}
    \frac{ \measureof{\lebesguemeasure}{ S \cap H } }{
    \measureof{\lebesguemeasure}{ S } } \ge \left( \frac{ m }{ m + 1 }
    \right)^m \ge\frac{ 1 }{ e } \eqperiod
  \end{equation*}
\end{thm}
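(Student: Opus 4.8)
The plan is the classical symmetrization--to--a--cone argument. First I would normalize: by translating assume the center of mass $z$ is the origin, and by rotating assume $H = \{x \in \R^m : x_1 \ge 0\}$ (it suffices to treat one side, since replacing $H$ by its complement only enlarges the ratio, which is always below $\tfrac12$). We may also assume $m \ge 2$, the case $m=1$ being immediate. For $t \in \R$ let $A(t)$ denote the $(m-1)$-dimensional volume of the slice $\{y \in \R^{m-1} : (t,y) \in S\}$; it is supported on an interval $[\alpha,\beta]$ with $\alpha < 0 < \beta$, since the centroid being $0$ rules out $0 \notin (\alpha,\beta)$. Then $\lebesguemeasure(S) = \int_\alpha^\beta A$, the centroid condition reads $\int_\alpha^\beta t\,A(t)\,dt = 0$, and $\lebesguemeasure(S\cap H) = \int_0^\beta A$, so the claim becomes $\int_0^\beta A(t)\,dt \ge \big(\tfrac{m}{m+1}\big)^m \int_\alpha^\beta A(t)\,dt$. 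The structural input, a form of the Brunn--Minkowski inequality in the spirit of \thmref{Brunn-Minkowski}, is that since $S$ is convex, $g(t) := A(t)^{1/(m-1)}$ is concave on $[\alpha,\beta]$.

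Next I would reduce to a cone. Let $C$ be the solid cone whose apex lies on the $x_1$-axis at height $\beta$, with a flat base perpendicular to that axis, scaled so that $\lebesguemeasure(C) = \lebesguemeasure(S)$, and translated so that the centroid of $C$ is also the origin. Since the centroid of a cone lies at $\tfrac{1}{m+1}$ of its height above the base, a direct computation places the base of $C$ at height $-\beta/m$, and shows that $C\cap H$ is a homothet of $C$ with ratio $\tfrac{m}{m+1}$, so that $\lebesguemeasure(C\cap H) = \big(\tfrac{m}{m+1}\big)^m \lebesguemeasure(C)$. As $\lebesguemeasure(C) = \lebesguemeasure(S)$, it now suffices to prove $\lebesguemeasure(S\cap H) \ge \lebesguemeasure(C\cap H)$, i.e. $\int_0^\beta A(t)\,dt \ge \int_0^\beta A_C(t)\,dt$, where the slice profile $A_C$ of $C$ is such that $A_C^{1/(m-1)}$ is affine in $t$ with $A_C(\beta)=0$.

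This last comparison is the crux, and I expect it to be the main obstacle. It is a one-dimensional statement about $\phi(t) := A(t) - A_C(t)$: because $g = A^{1/(m-1)}$ is concave, $A_C^{1/(m-1)}$ is affine, and $u \mapsto u^{m-1}$ is monotone on $[0,\infty)$, the difference $\phi$ has a tightly controlled sign structure (essentially at most two sign changes where both supports overlap, with the first change being negative-to-positive because $A_C$ vanishes at $\beta$ while $A$ need not, plus a definite sign where the supports differ). Combining this with the two normalizations $\int_\alpha^\beta \phi(t)\,dt = 0$ and $\int_\alpha^\beta t\,\phi(t)\,dt = 0$, a variation-diminishing argument then forces $\int_c^\beta \phi(t)\,dt \ge 0$ for all $c$, in particular at $c=0$. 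The careful bookkeeping here --- verifying the claimed sign pattern in every configuration (notably when the base of $C$ extends past the support of $S$) and extracting the tail inequality from the two moment conditions --- is the delicate part; superlevel-set area estimates of the kind appearing in \corref{simple-volume-bounds} are the natural tool. Finally, $\big(\tfrac{m}{m+1}\big)^m = \big(1+\tfrac1m\big)^{-m} \ge e^{-1}$ since $\big(1+\tfrac1m\big)^m$ increases to $e$, which gives the second claimed bound.
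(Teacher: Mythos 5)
The paper does not actually prove this statement; it simply cites Theorem~2 of Grünbaum (1960). You, by contrast, attempt a self-contained proof. The reduction to a one-dimensional profile via Brunn--Minkowski (the concavity of $A^{1/(m-1)}$) and the identification of the extremal body as a cone whose centroid lies on the cutting hyperplane are both faithful to the classical argument, and the arithmetic you give for the cone ($(m/(m+1))^m$, base at $-\beta/m$) is correct. The second inequality $(m/(m+1))^m\ge e^{-1}$ is also fine.

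The genuine gap is in the final comparison. The claim that the sign pattern $+,-,+$ of $\phi=A-A_C$ together with $\int\phi=0$ and $\int t\,\phi=0$ forces $\int_c^\beta\phi\,dt\ge 0$ \emph{for all} $c$ is simply false. Take $m=2$ and $S=[-1,1]^2$, so $A\equiv 2$ on $[-1,1]$ with centroid $0$ and $\beta=1$; your cone has $A_C(t)=\tfrac{32}{9}(1-t)$ on $[-\tfrac12,1]$. Then $\int_{-1/2}^1\phi\,dt=-1<0$, so the asserted nonnegativity fails at $c=-\tfrac12$ even though it does hold at $c=0$. The reason the weaker claim at $c=0$ cannot be recovered from the same ingredients is that nothing in "$\phi$ has sign pattern $+,-,+$, zero mass, zero first moment" pins down where the origin sits relative to the sign changes $\tau_1<\tau_2$. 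Writing $G(c)=\int_{\alpha'}^c\phi$, one has $G(\alpha')=G(\beta)=0$, $\int G=0$, and $G$ increasing--decreasing--increasing, so $G$ is nonnegative up to a single crossing $\zeta$ and nonpositive afterward; what you need is $\zeta\le 0$, and the hypotheses you invoke are invariant under translating $0$ anywhere in $(\alpha,\beta)$, so they cannot force it. Grünbaum's own argument does something more delicate: rather than fixing a single cone by matching volume and centroid, it compares $A$ to a cone tangent to $A^{1/(m-1)}$ at $t=0$ and tracks how that tangency interacts with the centroid constraint. A second, smaller issue: your choice to place the apex of $C$ at $t=\beta$ is a free normalization (the ratio $(m/(m+1))^m$ is achieved by any cone cut through its centroid), and you give no reason why this particular apex should make the pointwise comparison go through; it happens to be consistent with the extremal case, but that needs to be argued, not assumed.
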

\begin{proof}
  This is Theorem 2 of \citet{Grunbaum:1960}.
\end{proof}

With the preliminaries out of the way, we're ready to move on to our first
result: bounding the volumes of the superlevel hypographs of our $h_t$s,
assuming that we base our cuts on the centers of mass of the superlevel
hypographs:

\begin{lem}{centroid-multipliers-Vt-bound}
  In the context of \algref{cutting-plane-multipliers}, suppose that we choose
  $\multipliers^{(t)}$ and $\epsilon_t$ as in \defref{centroid-multipliers}.
  Then:
  \begin{equation*}
    \measureof{\lebesguemeasure}{ \superlevelhypograph\left( h_{ t + 1 }, L_{ t
    + 1 } \right) } \le \left( 1 - \frac{ 1 }{ 2e } \right)
    \measureof{\lebesguemeasure}{ \superlevelhypograph\left( h_t, L_t \right) }
    \eqcomma
  \end{equation*}
  from which it follows that:
  \begin{equation*}
    \measureof{\lebesguemeasure}{ \superlevelhypograph\left( h_t, L_t \right) }
    \le \left( 1 - \frac{ 1 }{ 2e } \right)^{ t - 1 } \left( u_0 - l_0
    \right) \measureof{\lebesguemeasure}{\multiplierspace} \eqcomma
  \end{equation*}
  for all $t$.
\end{lem}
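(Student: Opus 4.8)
The plan is to prove the one-step contraction first, then unroll it into the geometric bound. For the one-step claim, fix iteration $t$ and write $S = \superlevelhypograph(h_t, L_t)$, whose center of mass is $(\multipliers^{(t)}, z_t)$. The key observations are: (i) the cut added on line 10, namely $u_t + \inner{g^{(t)}}{\multipliers - \multipliers^{(t)}}$, passes through the point $(\multipliers^{(t)}, u_t)$, where $u_t = \Psi(w^{(t)}, b_t, \multipliers^{(t)}; w', b')$, and since SVMOptimizer guarantees $u_t - l_t \le \epsilon_t = (z_t - L_t)/2$ while $z_t = h_t(\multipliers^{(t)}) \ge l_t$ is an upper bound consistent with $\multipliers^{(t)}$'s hypograph membership, the new cut's value at $\multipliers^{(t)}$ is at most $z_t$; (ii) consequently $h_{t+1} = \min(h_t, \text{newcut})$ and the updated lower bound $L_{t+1} = \max(L_t, l_t)$ together chop off from $S$ everything lying strictly above the affine cut and strictly below $L_{t+1}$. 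The first chop is a half-space cut through the center of mass of $S$, so by \thmref{Grunbaum} it removes at least a $1/e$ fraction of the volume of $S$. I would then argue the remaining region, intersected further with $\{z \ge L_{t+1}\}$, is still at most a $(1 - 1/(2e))$ fraction: either $L_{t+1} = L_t$ and the bound is immediate from Gr\"unbaum alone (giving $1 - 1/e \le 1 - 1/(2e)$), or $L_{t+1} = l_t$, in which case I use the fact that $\epsilon_t = (z_t - L_t)/2$ places $l_t$ at least halfway up the ``height range'' below $z_t$, so the slab $\{L_t \le z \le l_t\}$ carves off an additional chunk; combining the two removals carefully yields the $1/(2e)$ constant.

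The second, telescoped bound follows by induction on $t$. The base case $t = 1$ needs $\measureof{\lebesguemeasure}{\superlevelhypograph(h_1, L_1)} \le (u_0 - l_0)\measureof{\lebesguemeasure}{\multiplierspace}$: since $h_1(\multipliers) = u_0 + \inner{g^{(0)}}{\multipliers - \multipliers^{(0)}}$ with $g^{(0)} = 0$ (line 1), we have $h_1 \equiv u_0$, and $L_1 = \max(L_0, l_0) \ge l_0$, so the superlevel hypograph is contained in $\multiplierspace \times [l_0, u_0]$, whose $(m+1)$-dimensional volume is exactly $(u_0 - l_0)\measureof{\lebesguemeasure}{\multiplierspace}$. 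The inductive step is just the one-step contraction applied $t-1$ times.

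The main obstacle I anticipate is pinning down the $1/(2e)$ constant precisely, i.e.\ correctly accounting for the interaction between the affine cut through the centroid and the horizontal slab removal induced by $L_{t+1} = l_t$, since these two cuts are not independent and Gr\"unbaum only directly controls the affine one. I expect the clean argument is to note that the centroid $(\multipliers^{(t)}, z_t)$ has height $z_t$, the affine cut at $\multipliers^{(t)}$ equals $u_t \le z_t$, and the slab floor rises to $l_t \ge z_t - 2\epsilon_t$... wait, $\epsilon_t = (z_t - L_t)/2$ so $l_t \le u_t \le z_t$ and $l_t$ could be as low as... one must be a bit careful. The safe route: split on whether the portion of $S$ above the affine cut already has volume $\ge \frac{1}{2e}\measureof{\lebesguemeasure}{S}$ (then we are done regardless of the slab) or not (then by Gr\"unbaum the portion below the cut but within $S$ has volume $\ge (1 - \frac{1}{e} + \frac{1}{2e})\cdots$; combine with the slab argument). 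I would work this dichotomy out rather than trying to bound both removals simultaneously.
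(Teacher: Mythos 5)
Your overall strategy—Grünbaum for the affine cut, a height-slab argument for the rise in $L_{t+1}$, and a clean base case giving $\measureof{\lebesguemeasure}{\superlevelhypograph(h_1,L_1)} = (u_0-l_0)\measureof{\lebesguemeasure}{\multiplierspace}$—is the right skeleton, and your base case is correct. But observation (i) contains a genuine error that derails the one-step contraction. You write $z_t = h_t(\multipliers^{(t)})$; in fact $(\multipliers^{(t)}, z_t)$ is the centroid of the hypograph $\superlevelhypograph(h_t, L_t)$, so $z_t \le h_t(\multipliers^{(t)})$ with equality essentially never. More importantly, the claim that the new cut's height at $\multipliers^{(t)}$ is at most $z_t$, i.e.\ $u_t \le z_t$, does \emph{not} follow from $u_t - l_t \le \epsilon_t = (z_t - L_t)/2$: we only know $l_t \le \svmproblem(\multipliers^{(t)})$, which may exceed $z_t$, so $u_t$ can be strictly larger than $z_t$. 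This is not a pathological edge case—it is precisely the ``shallow cut'' regime in which the affine cut passes entirely above the centroid and may cut off nothing at all, so Grünbaum applied to that cut gives you no progress whatsoever.

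Your fallback dichotomy (``split on whether the region above the affine cut already has volume $\ge \tfrac{1}{2e}\,\measureof{\lebesguemeasure}{S}$'') does not repair this: in the small-volume branch you cannot invoke Grünbaum to conclude anything about the kept region, because Grünbaum requires knowing which side of the cut the centroid lies on, not how much volume is removed. The dichotomy that actually works is exactly on the sign of $u_t - z_t$. If $u_t \le z_t$ (deep cut), the affine cut's half-space excludes the centroid, so Grünbaum applied directly to that half-space removes a $1/e$ fraction. If $u_t > z_t$ (shallow cut), discard the affine cut entirely and instead apply Grünbaum to the \emph{horizontal} half-space $\{z \le z_t\}$, which contains the centroid and hence holds $\ge 1/e$ of the volume; then since $h_t$ is concave the lower half-slab $[L_t, (z_t+L_t)/2]$ holds at least half of that, i.e.\ $\ge 1/(2e)$, and the inequality $u_t > z_t$ combined with $u_t - l_t \le \epsilon_t = (z_t-L_t)/2$ forces $l_t > (z_t+L_t)/2$, so $L_{t+1} \ge l_t$ rises above that midpoint and the whole lower half-slab is chopped. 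Until you make that case split explicit, the one-step bound is not established.
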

\begin{proof}
  We'll consider two cases: $u_t \le z_t$ and $u_t > z_t$, corresponding to
  making a ``deep'' or ``shallow'' cut, respectively.

  \emph{Deep cut case:} If $u_t \le z_t$, then the hyperplane $u_t +
  \inner{g^{(t)}}{\multipliers - \multipliers^{(t)}}$ passes below the center
  of mass of $\superlevelhypograph(h_t, L_t)$, implying by \thmref{Grunbaum}
  that:
  \begin{equation*}
    \measureof{\lebesguemeasure}{ \superlevelhypograph\left( h_{ t + 1 }, L_{ t
    + 1 } \right) } \le \measureof{\lebesguemeasure}{
    \superlevelhypograph\left( h_{ t + 1 }, L_t \right) } \le \left(1 -
    \frac{1}{e}\right) \measureof{\lebesguemeasure}{ \superlevelhypograph\left(
    h_t, L_t \right) } \eqperiod
  \end{equation*}

  \emph{Shallow cut case:} Now suppose that $u_t > z_t$. Applying
  \thmref{Grunbaum} to the level cut $\{(\multipliers, z) \mid z \le z_t\}$ at
  $z_t$:
  \begin{align*}
    \frac{1}{e} \measureof{\lebesguemeasure}{ \superlevelhypograph\left( h_t,
    L_t \right) } \le & \int_{ L_t }^{ z_t } \measureof{\lebesguemeasure}{
    \left\{ \multipliers \in \multiplierspace \mid h_t \left( \multipliers
    \right) \ge z \right\} } d\lebesguemeasure(z) \\
    \le & \int_{ L_t }^{ (z_t + L_t) / 2 } \measureof{\lebesguemeasure}{
    \left\{ \multipliers \in \multiplierspace \mid h_t \left( \multipliers
    \right) \ge z \right\} } d\lebesguemeasure(z) \\
    & + \int_{ (z_t + L_t) / 2 }^{ z_t } \measureof{\lebesguemeasure}{ \left\{
    \multipliers \in \multiplierspace \mid h_t \left( \multipliers \right) \ge
    z \right\} } d\lebesguemeasure(z) \eqperiod
  \end{align*}
  Since $h_t$ is concave, its superlevel sets shrink for larger $z$, so the
  first integral on the RHS above is larger than the second, implying that:
  \begin{equation*}
    \frac{1}{ 2 e } \measureof{\lebesguemeasure}{ \superlevelhypograph\left(
    h_t, L_t \right) } \le \int_{ L_t }^{ (z_t + L_t) / 2 }
    \measureof{\lebesguemeasure}{ \left\{ \multipliers \in \multiplierspace
    \mid h_t \left( \multipliers \right) \ge z \right\} } d\lebesguemeasure(z)
    \eqperiod
  \end{equation*}
  The fact that $\epsilon_t = (z_t - L_t) / 2$ implies that $l_t > (z_t + L_t)
  / 2$, so $L_{ t + 1 } > (z_t + L_t) / 2$, and:
  \begin{equation*}
    \frac{1}{ 2 e } \measureof{\lebesguemeasure}{ \superlevelhypograph\left(
    h_t, L_t \right) }
    \le \int_{ L_t }^{ L_{ t + 1 } } \measureof{\lebesguemeasure}{ \left\{
    \multipliers \in \multiplierspace \mid h_t \left( \multipliers \right) \ge
    z \right\} } d\lebesguemeasure(z) \eqcomma
  \end{equation*}
  showing that we will cut off at least a $1 / 2e$-proportion of the total
  volume, completing the proof of the first claim.

  The second claim follows immediately by iterating the first, and observing
  that $\measureof{\lebesguemeasure}{ \superlevelhypograph\left( h_1, L_1
  \right) } = \left( u_0 - l_0 \right)
  \measureof{\lebesguemeasure}{\multiplierspace}$.
\end{proof}

The above result shows that the volumes of the superlevel hypographs of the
$h_t$s shrink at an exponential rate. However, our actual stopping condition
(line 5 of \algref{cutting-plane-multipliers}) depends not on the volume, but
rather the ``height'' $U_t - L_t$, so we would prefer a bound on this height,
rather than the volume. We find such a bound in the (proof of the) following
lemma, which establishes how many iterations must elapse before the stopping
condition is satisfied:

\begin{lem}{centroid-multipliers-convergence-rate}
  In the context of \algref{cutting-plane-multipliers}, suppose that we choose
  $\multipliers^{(t)}$ and $\epsilon_t$ as in \defref{centroid-multipliers}.
  Then there is a iteration count $T_{\epsilon}$ satisfying:
  \begin{equation*}
    T_{\epsilon} = O\left( m \ln \left( \frac{u_0 - l_0}{\epsilon} \right) +
    \ln \left( \frac{ \measureof{\lebesguemeasure}{\multiplierspace} }{
    \measureof{\lebesguemeasure}{ \superlevelset\left( \svmproblem, l_0 \right) } }
    \right) \right) \eqcomma
  \end{equation*}
  such that, if $t \ge T_{\epsilon}$, then $U_t - L_t \le \epsilon$. Hence,
  \algref{cutting-plane-multipliers} will terminate after $T_{\epsilon}$
  iterations.
\end{lem}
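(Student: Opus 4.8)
The plan is to sandwich the exponential volume-decay estimate of \lemref{centroid-multipliers-Vt-bound} against a complementary \emph{lower} bound on $\measureof{\lebesguemeasure}{\superlevelhypograph\left(h_t,L_t\right)}$ that holds exactly when the stopping test fails, i.e.\ when $U_t-L_t>\epsilon$: since the hypograph volume shrinks geometrically while a failed stopping test keeps it large, $t$ cannot be large, which is the claimed $T_\epsilon$. Two elementary facts about \algref{cutting-plane-multipliers} are used. Since each $h_{t+1}$ is the pointwise minimum of one more affine cut than $h_t$, we have $h_{t+1}\le h_t$; and since $g^{(0)}$ is initialized to the zero vector, $h_1\equiv u_0$, so $U_t\le u_0$ for all $t$. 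Dually, $L_t=\max_{s<t}l_s\ge l_0$. Also, each cut upper-bounds $\svmproblem$, hence $h_t\ge\svmproblem$ pointwise, and $\superlevelset\left(\svmproblem,l_0\right)$ is nonempty (it contains any maximizer $\multipliers^*$ of $\svmproblem$, as $l_0\le\svmproblem(\multipliers^*)$).

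For the lower bound, suppose $U_t-L_t>\epsilon$ and let $\hat{\multipliers}\in\argmax_{\multipliers\in\multiplierspace}h_t$, so $h_t(\hat{\multipliers})=U_t$. I would exhibit, inside $\superlevelhypograph\left(h_t,L_t\right)$, the cone with base $\superlevelset\left(\svmproblem,l_0\right)\times\{l_0\}$ and apex $\left(\hat{\multipliers},U_t\right)$ truncated to heights $\ge L_t$: writing $\tau_0:=(L_t-l_0)/(U_t-l_0)\in[0,1)$, for $\multipliers_0\in\superlevelset\left(\svmproblem,l_0\right)$ and $\tau\in[\tau_0,1]$ the point $\left((1-\tau)\multipliers_0+\tau\hat{\multipliers},\,(1-\tau)l_0+\tau U_t\right)$ lies in $\superlevelhypograph\left(h_t,L_t\right)$, since it lies in $\multiplierspace\times\R$ by convexity of $\multiplierspace$, its height is $\ge L_t$ by the choice of $\tau_0$, and by concavity of $h_t$ together with $h_t(\multipliers_0)\ge\svmproblem(\multipliers_0)\ge l_0$ and $h_t(\hat{\multipliers})=U_t$ the value of $h_t$ there is $\ge(1-\tau)l_0+\tau U_t$. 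This parametrization is injective for $\tau<1$, and a routine change of variables (Jacobian determinant $(1-\tau)^m(U_t-l_0)$) gives
\[
  \measureof{\lebesguemeasure}{ \superlevelhypograph\left( h_t, L_t \right) }
  \ge \frac{ U_t - l_0 }{ m + 1 } \left( 1 - \tau_0 \right)^{ m + 1 }
  \measureof{\lebesguemeasure}{ \superlevelset\left( \svmproblem, l_0 \right) }
  = \frac{ \left( U_t - L_t \right)^{ m + 1 } }{ \left( m + 1 \right) \left( U_t - l_0 \right)^{ m } }
  \measureof{\lebesguemeasure}{ \superlevelset\left( \svmproblem, l_0 \right) } ,
\]
and using $U_t-l_0\le u_0-l_0$ and $U_t-L_t>\epsilon$ this is $\ge\epsilon^{m+1}\measureof{\lebesguemeasure}{\superlevelset\left(\svmproblem,l_0\right)}\big/\big((m+1)(u_0-l_0)^m\big)$.

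Combining with $\measureof{\lebesguemeasure}{\superlevelhypograph\left(h_t,L_t\right)}\le(1-\tfrac{1}{2e})^{t-1}(u_0-l_0)\measureof{\lebesguemeasure}{\multiplierspace}$ from \lemref{centroid-multipliers-Vt-bound}, the condition $U_t-L_t>\epsilon$ forces
\[
  \left( 1 - \tfrac{1}{2e} \right)^{ t - 1 }
  \ge \frac{ \epsilon^{ m + 1 } \measureof{\lebesguemeasure}{ \superlevelset\left( \svmproblem, l_0 \right) } }
  { \left( m + 1 \right) \left( u_0 - l_0 \right)^{ m + 1 } \measureof{\lebesguemeasure}{ \multiplierspace } } .
\]
Taking logarithms (and noting $\ln(1-1/2e)<0$) bounds $t-1$ above by a fixed multiple of $(m+1)\ln\frac{u_0-l_0}{\epsilon}+\ln(m+1)+\ln\frac{\measureof{\lebesguemeasure}{\multiplierspace}}{\measureof{\lebesguemeasure}{\superlevelset\left(\svmproblem,l_0\right)}}$. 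Contrapositively, once $t\ge T_\epsilon=O\!\left(m\ln\frac{u_0-l_0}{\epsilon}+\ln\frac{\measureof{\lebesguemeasure}{\multiplierspace}}{\measureof{\lebesguemeasure}{\superlevelset\left(\svmproblem,l_0\right)}}\right)$ we must have $U_t-L_t\le\epsilon$, which is the stopping condition on line 5 of \algref{cutting-plane-multipliers} (we may assume $\epsilon\le u_0-l_0$, since otherwise $U_1-L_1=u_0-l_0\le\epsilon$ and the algorithm halts at $t=1$).

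The step I expect to be the main obstacle is getting this volume lower bound right. The naive attempt --- lower-bounding $\superlevelhypograph\left(h_t,L_t\right)$ by a cone over the level set $\superlevelset\left(h_t,L_t\right)$, or invoking \corref{simple-volume-bounds} with $\delta=U_t-L_t$ --- fails, because $L_t$ can creep arbitrarily close to $\max\svmproblem$, so $\superlevelset\left(h_t,L_t\right)$ and with it the bound degenerate. The fix is to anchor the cone at the \emph{fixed} level $l_0$ and carefully account for how much of it survives the truncation to heights $\ge L_t$; this is precisely where the factor $(1-\tau_0)^{m+1}=\left((U_t-L_t)/(U_t-l_0)\right)^{m+1}$ and the uniform bound $U_t\le u_0$ enter, and it explains why the final rate carries $(u_0-l_0)/\epsilon$ inside a logarithm raised to the $m$-th power.
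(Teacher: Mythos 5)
Your proof is correct, and it takes a genuinely different route at the key step. Both you and the paper invoke \lemref{centroid-multipliers-Vt-bound} for the geometric shrinkage of $\measureof{\lebesguemeasure}{\superlevelhypograph\left(h_t,L_t\right)}$, and both arrive at the result by lower-bounding that volume under $U_t-L_t>\epsilon$. Where you differ is in how that lower bound is produced. The paper first applies \corref{simple-volume-bounds} to get $\measureof{\lebesguemeasure}{\superlevelhypograph\left(h_t,L_t\right)}\ge\tfrac{U_t-L_t}{m+1}\measureof{\lebesguemeasure}{\superlevelset\left(h_t,L_t\right)}$, then splits on whether $L_t\le\svmproblem\left(\multipliers^*\right)-\epsilon$ or not (the latter case needs another Brunn--Minkowski pass, picking up a $2^{-m}$), and finally applies Brunn--Minkowski once more to push the level down from $\svmproblem\left(\multipliers^*\right)-\epsilon$ to $l_0$, where $\superlevelset\left(h_t,l_0\right)\supseteq\superlevelset\left(\svmproblem,l_0\right)$. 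You bypass all of this with a single explicit cone construction anchored at the \emph{fixed} base $\superlevelset\left(\svmproblem,l_0\right)\times\{l_0\}$ and apex $\left(\hat{\multipliers},U_t\right)$, truncated at $L_t$; the change-of-variables (Jacobian $(1-\tau)^m(U_t-l_0)$) gives the volume directly, with no Brunn--Minkowski needed and no case split. Your diagnosis of why the naive anchoring at $\superlevelset\left(h_t,L_t\right)$ fails --- $L_t$ can creep up to $\max\svmproblem$ and make that set degenerate --- is exactly right, and is precisely what the paper's two-case analysis is working around. The resulting bounds are essentially the same ($\epsilon^{m+1}/\big((m+1)(u_0-l_0)^m\big)$ vs.\ the paper's $2^{-m}\epsilon^{m+1}/\big((m+1)(u_0-l_0)^m\big)$ up to rearrangement), and in both the extra $\ln(m+1)$ and $m\ln 2$ terms are absorbed into $O\left(m\ln\tfrac{u_0-l_0}{\epsilon}\right)$ under the innocuous assumption $\epsilon\lesssim u_0-l_0$. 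Your argument buys a cleaner, more self-contained proof of this particular lemma; the paper's buys a reusable corollary (\corref{simple-volume-bounds}) that it also needs in \lemref{centroid-multipliers-epsilon-bound}.
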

\begin{proof}
  By \corref{simple-volume-bounds}:
  \begin{equation*}
    \measureof{\lebesguemeasure}{ \superlevelhypograph\left( h_t, L_t \right) }
    \ge \left( \frac{ U_t - L_t }{ m + 1 } \right)
    \measureof{\lebesguemeasure}{ \superlevelset\left( h_t, L_t \right) } \eqperiod
  \end{equation*}
  If $L_t \le \svmproblem\left(\multipliers^*\right) - \epsilon$, then
  $\measureof{\lebesguemeasure}{ \superlevelset\left( h_t, L_t \right) } \ge
  \measureof{\lebesguemeasure}{ \superlevelset\left( h_t,
  \svmproblem\left(\multipliers^*\right) - \epsilon \right) }$ because $h_t$ is
  concave.  If $L_t > \svmproblem\left(\multipliers^*\right) - \epsilon$, then by
  \thmref{Brunn-Minkowski}:
  \begin{equation*}
    \measureof{\lebesguemeasure}{ \superlevelset\left( h_t, L_t \right) } \ge
    \left( \frac{U_t - L_t}{U_t - \svmproblem\left(\multipliers^*\right) + \epsilon}
    \right)^m \measureof{\lebesguemeasure}{ \superlevelset\left( h_t,
    \svmproblem\left(\multipliers^*\right) - \epsilon \right) } \eqperiod
  \end{equation*}
  In either case, $L_t \le \svmproblem\left(\multipliers^*\right)$ by definition, and
  we'll assume that $U_t - L_t > \epsilon$ (this will lead to a contradiction),
  so:
  \begin{equation*}
    \measureof{\lebesguemeasure}{ \superlevelhypograph\left( h_t, L_t \right) }
    \ge 2^{-m} \left( \frac{ U_t - L_t }{ m + 1 } \right)
    \measureof{\lebesguemeasure}{ \superlevelset\left( h_t,
    \svmproblem\left(\multipliers^*\right) - \epsilon \right) } \eqperiod
  \end{equation*}
  Applying \lemref{centroid-multipliers-Vt-bound} yields that:
  \begin{equation*}
    \left( 1 - \frac{ 1 }{ 2e } \right)^{ t - 1 } \left( u_0 - l_0 \right)
    \measureof{\lebesguemeasure}{\multiplierspace} \ge 2^{-m} \left( \frac{ U_t
    - L_t }{ m + 1 } \right) \measureof{\lebesguemeasure}{ \superlevelset\left(
    h_t, \svmproblem\left(\multipliers^*\right) - \epsilon \right) } \eqperiod
  \end{equation*}
  Next observe that, by \thmref{Brunn-Minkowski}:
  \begin{equation*}
    \measureof{\lebesguemeasure}{ \superlevelset\left( h_t,
    \svmproblem\left(\multipliers^*\right) - \epsilon \right) }
    \ge \left( \frac{ U_t - \svmproblem\left(\multipliers^*\right) + \epsilon }{ U_t -
    l_0 } \right)^m \measureof{\lebesguemeasure}{ \superlevelset\left( h_t, l_0
    \right) }
    \ge \left( \frac{ \epsilon }{ u_0 - l_0 } \right)^m
    \measureof{\lebesguemeasure}{ \superlevelset\left( \svmproblem, l_0 \right) } \eqperiod
  \end{equation*}
  Combining the previous two equations gives:
  \begin{equation*}
    U_t - L_t \le \left( 1 - \frac{ 1 }{ 2e } \right)^{ t - 1 } \left( m + 1
    \right) \left( \frac{ 2 }{ \epsilon } \right)^m \left( u_0 - l_0 \right)^{
    m + 1 } \left( \frac{ \measureof{\lebesguemeasure}{\multiplierspace} }{
    \measureof{\lebesguemeasure}{ \superlevelset\left( \svmproblem, l_0 \right) } }
    \right) \eqperiod
  \end{equation*}
  Simplifying this inequality yields that, if we have performed the claimed
  number of iterations, then $U_t - L_t \le \epsilon$ (this contradicts our
  earlier assumption that $U_t - L_t > \epsilon$, so this is technically a
  proof by contradiction).
\end{proof}

The second term in the bound on $T_{\epsilon}$ measures how closely
$\multiplierspace$ matches with the set of all points $z$ on which $\svmproblem\left(
z \right)$ exceeds our initial lower bound $l_0$.
Observe that if $l_0 \le \svmproblem\left( \multipliers \right)$ for all $\multipliers
\in \multiplierspace$, then $\measureof{\lebesguemeasure}{ \superlevelset\left(
\svmproblem, l_0 \right) } = \measureof{\lebesguemeasure}{\multiplierspace}$, and this
term will vanish.

Bounding the number of cutting-plane iterations that will be performed is not
enough to establish how quickly our procedure will converge, since we rely on
performing an inner SVM optimizations with target suboptimality $\epsilon_t$,
and the runtime of these inner optimizations naturally will depend on the
magnitudes of the $\epsilon_t$s, which are bounded in our final lemma:

\begin{lem}{centroid-multipliers-epsilon-bound}
  In the context of \algref{cutting-plane-multipliers}, suppose that we choose
  $\multipliers^{(t)}$ and $\epsilon_t$ as in \defref{centroid-multipliers}.
  Then:
  \begin{equation*}
    \epsilon_t \ge \frac{ U_t - L_t }{ 2e \left( m + 1 \right) } \eqcomma
  \end{equation*}
  and in particular, for all $t$ (before termination):
  \begin{equation*}
    \epsilon_t \ge \frac{ \epsilon }{ 2e \left( m + 1 \right) } \eqcomma
  \end{equation*}
  since we terminate as soon as $U_t - L_t \le \epsilon$.
\end{lem}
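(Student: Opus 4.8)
The plan is to bound $\epsilon_t$ from below by combining Grünbaum's theorem (\thmref{Grunbaum}) with the hypograph-versus-superlevel-set volume comparison of \corref{simple-volume-bounds}, both applied to the superlevel hypograph $\superlevelhypograph(h_t, L_t)$ of the concave function $h_t$. By \defref{centroid-multipliers} we have $\epsilon_t = (z_t - L_t)/2$, where $(\multipliers^{(t)}, z_t)$ is the center of mass of $\superlevelhypograph(h_t, L_t)$, so it suffices to prove $z_t - L_t \ge (U_t - L_t)/(e(m+1))$. Write $A(z) = \measureof{\lebesguemeasure}{\superlevelset(h_t, z)}$ for the $m$-dimensional volume of the horizontal slice of the hypograph at height $z$. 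Since the superlevel sets of $h_t$ are nested, $A$ is nonincreasing on $[L_t, U_t]$; and since $U_t = \max_{\multipliers \in \multiplierspace} h_t(\multipliers) > L_t$ whenever the CutChooser is invoked (otherwise the stopping test on line 5 of \algref{cutting-plane-multipliers} would already have returned), the set $\superlevelhypograph(h_t, L_t)$ is full-dimensional in $\R^{m+1}$ and $A(L_t) > 0$; note also $L_t \le z_t \le U_t$, since $\superlevelhypograph(h_t, L_t) \subseteq \multiplierspace \times [L_t, U_t]$.

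First I would apply \thmref{Grunbaum}, in dimension $m+1$, to the convex body $\superlevelhypograph(h_t, L_t)$ and the half-space consisting of all points whose last ($z$-) coordinate is at most $z_t$; its bounding hyperplane contains the center of mass, so the theorem gives that this ``lower slab'' has volume at least $\tfrac{1}{e}\measureof{\lebesguemeasure}{\superlevelhypograph(h_t, L_t)}$. Next I would bound the volume of the lower slab from above by slicing it horizontally, $\int_{L_t}^{z_t} A(z)\,d\lebesguemeasure(z) \le (z_t - L_t)\,A(L_t)$, using the monotonicity of $A$. Finally I would bound the total hypograph volume from below via \corref{simple-volume-bounds}, applied to $h_t$ with the gap $U_t - L_t$ playing the role of $\delta$ (recall $U_t$ is the maximum value of $h_t$ over $\multiplierspace$), obtaining $\measureof{\lebesguemeasure}{\superlevelhypograph(h_t, L_t)} \ge \tfrac{U_t - L_t}{m+1}\,A(L_t)$. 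Chaining the three estimates and cancelling the common positive factor $A(L_t)$ yields $z_t - L_t \ge (U_t - L_t)/(e(m+1))$, hence $\epsilon_t \ge (U_t - L_t)/(2e(m+1))$; the ``in particular'' claim then follows immediately, since the CutChooser is reached only when the stopping test has failed, i.e.\ when $U_t - L_t > \epsilon$.

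I expect the only real subtlety to be the dimensional bookkeeping in Grünbaum's theorem: the object being cut is the $(m+1)$-dimensional hypograph (the extra coordinate being the function value), so the theorem is invoked with ``$m$'' replaced by ``$m+1$'', and one must check that this hypograph is genuinely full-dimensional so that both its center of mass and the theorem itself make sense --- which is precisely where the pre-termination condition $U_t > L_t$, and hence $A(L_t) > 0$, is needed (the latter also being what makes the final cancellation legitimate). The remaining two ingredients are routine: the upper bound on the lower slab is just monotonicity of the slice volumes, and the lower bound on the total volume is a direct appeal to the already-proved \corref{simple-volume-bounds}.
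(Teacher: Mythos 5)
Your proof is correct and follows essentially the same route as the paper's: you use the three identical ingredients --- bounding the ``lower slab'' of $\superlevelhypograph(h_t, L_t)$ below $z_t$ by $(z_t - L_t)\,\measureof{\lebesguemeasure}{\superlevelset(h_t, L_t)}$ via monotonicity of the slice areas, lower-bounding the full hypograph volume via \corref{simple-volume-bounds} with $\delta = U_t - L_t$, and invoking \thmref{Grunbaum} in the $(m+1)$-dimensional hypograph to assert the lower slab has at least a $1/e$ fraction of the total volume --- and chain them in a slightly different order than the paper, but to the same effect. The extra care you take with the full-dimensionality of the hypograph (so that $\measureof{\lebesguemeasure}{\superlevelset(h_t, L_t)} > 0$ and the cancellation is legitimate) is a point the paper leaves implicit, but it is not a substantive difference.
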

\begin{proof}
  Because $h_t$ is concave:
  \begin{equation*}
    \measureof{\lebesguemeasure}{ \superlevelhypograph\left( h_t, L_t \right) }
    - \measureof{\lebesguemeasure}{ \superlevelhypograph\left( h_t, z_t \right)
    } \le \left( z_t - L_t \right) \measureof{\lebesguemeasure}{
    \superlevelset\left( h_t, L_t \right) } \eqcomma
  \end{equation*}
  where $z_t$ is as in \lemref{centroid-multipliers-Vt-bound}.
  By \corref{simple-volume-bounds}, $\measureof{\lebesguemeasure}{
  \superlevelset\left( h_t, L_t \right) } \le \frac{ m + 1 }{ U_t - L_t }
  \measureof{\lebesguemeasure}{ \superlevelhypograph\left( h_t, L_t \right) }$,
  which combined with the above inequality gives that:
  \begin{equation*}
    \frac{ \measureof{\lebesguemeasure}{ \superlevelhypograph\left( h_t, L_t
    \right) } - \measureof{\lebesguemeasure}{ \superlevelhypograph\left( h_t,
    z_t \right) } }{ \measureof{\lebesguemeasure}{ \superlevelhypograph\left(
    h_t, L_t \right) } } \le \frac{ z_t - L_t }{ U_t - L_t } \left( m + 1
    \right) \eqperiod
  \end{equation*}
  By \thmref{Grunbaum}, the LHS is at least $\nicefrac{ 1 }{ e }$, and $z_t -
  L_t = 2\epsilon_t$, giving the claimed result.
\end{proof}

\section{SVM optimization}\label{app:SVM}

We'll now move onto a discussion of how we propose implementing the
SVMOptimizer of \algref{cutting-plane-multipliers}.
The easier-to-analyze approach, based on an inner SDCA optimization over
$w$~\citep{ShalevShwartz:2013} and an outer cutting plane optimization over $b$
(\algref{cutting-plane-bias}), will be described in
\apprefs{SVM:SDCA}{SVM:bias}.
The easier-to-implement version, which simply calls an off-the-shelf SVM
solver, will be described in \appref{SVM:kernel}.

\subsection{SDCA $w$-optimization}\label{app:SVM:SDCA}

To simplify the presentation, we're going to begin by reformulating
\eqref{psi-definition} in such a way that all of the datasets are ``mashed
together'', with the coefficients being defined on a per-example basis, rather
than per-dataset. To this end, for fixed $w'$ and $b'$, we define, for every $i
\in \{1, \dots, k\}$ and every $x \in D_i$:
\begin{align}
  \label{eq:SVM-loss-coefficients} \positivesvmlosscoefficient{i,x} = &
  \begin{cases}
    \positivelosscoefficient{i} & \;\;\;\;\mbox{if } \inner{ w' }{ x } - b' \le
    \nicefrac{1}{2} \\
    0 & \;\;\;\;\mbox{otherwise}
  \end{cases} \\
  \notag \negativesvmlosscoefficient{i,x} = & \begin{cases}
    \negativelosscoefficient{i} & \;\;\;\;\mbox{if } \inner{ w' }{ x } - b' \ge
    -\nicefrac{1}{2} \\
    0 & \;\;\;\;\mbox{otherwise}
  \end{cases} \eqperiod
\end{align}
This takes care of the loss coefficients. For the constraint coefficients,
define:
\begin{align}
  \label{eq:SVM-constraint-coefficients}
  \positivesvmconstraintcoefficient{j}{i,x} = & \begin{cases}
    \positiveconstraintcoefficient{j}{i} & \;\;\;\;\mbox{if } \inner{ w' }{ x }
    - b' \le \nicefrac{1}{2} \\
    0 & \;\;\;\;\mbox{otherwise}
  \end{cases} \\
  \notag \negativesvmconstraintcoefficient{j}{i,x} = & \begin{cases}
    \negativeconstraintcoefficient{j}{i} & \;\;\;\;\mbox{if } \inner{ w' }{ x }
    - b' \ge -\nicefrac{1}{2} \\
    0 & \;\;\;\;\mbox{otherwise}
  \end{cases} \eqperiod
\end{align}
and finally, we need to handle the constraint upper bounds:
\begin{align}
  \label{eq:SVM-constraint-bounds} \svmconstraintbound{j} = &
  \constraintbound{j} - \sum_{i=1}^{k} \frac{1}{\abs{D_i}} \left(
  \positiveconstraintcoefficient{j}{i} \abs{ \left\{ x \in D_i \mid \inner{ w'
  }{ x } - b' > \nicefrac{1}{2} \right\} } \right.  \\
  \notag & \left. + \negativeconstraintcoefficient{j}{i} \abs{ \left\{ x \in
  D_i \mid \inner{ w' }{ x } - b' < -\nicefrac{1}{2} \right\} } \right)
  \eqperiod
\end{align}
Observe that the $\positivesvmlosscoefficient{i,x}$s,
$\negativesvmlosscoefficient{i,x}$s,
$\positivesvmconstraintcoefficient{j}{i,x}$s,
$\negativesvmconstraintcoefficient{j}{i,x}$s, and $\svmconstraintbound{j}$s all
have implicit dependencies on $w'$ and $b'$.
In terms of these definitions, the $\Psi$ defined in \eqref{psi-definition} can
be written as:
\begin{align*}
  \Psi\left(w,b,\multipliers;w',b'\right) =
  & \sum_{i=1}^{k} \frac{1}{\abs{D_i}} \sum_{x \in D_i} \left( \left(
  \positivesvmlosscoefficient{i,x} + \sum_{j=1}^m \multipliers_j
  \positivesvmconstraintcoefficient{j}{i,x} \right) \max \left\{ 0, \frac{1}{2}
  + \left( \inner{w}{x} - b \right) \right\} \right. \\
  & \left. + \left( \negativesvmlosscoefficient{i,x} + \sum_{j=1}^m
  \multipliers_j \negativesvmconstraintcoefficient{j}{i,x} \right) \max \left\{
    0, \frac{1}{2} - \left( \inner{w}{x} - b \right) \right\} \right) \\
  & + \frac{\lambda}{2} \norm{w}_2^2 - \sum_{j=1}^m \multipliers_j
  \svmconstraintbound{j} \eqperiod
\end{align*}
This formulation makes it clear that minimizing $\Psi$ as a function of $w$ and
$b$ is equivalent to optimizing an SVM, since $\Psi$ is just a positive linear
combination of hinge losses, plus a $\ell^2$ regularizer, plus a term that does
not depend on $w$ or $b$. Since $\Psi$ can have both ``positive'' and
``negative'' hinge losses associated with the same example, however, it's
slightly simpler to combine both hinge losses together into a single piecewise
linear per-example loss, rather than decomposing it into two separate hinges:
\begin{equation}
  \label{eq:SVM-losses} \ell_{i,x}\left( z \right) =
  \positivesvmcoefficient{i,x} \max \left\{ 0, \frac{1}{2} + z \right\}
  + \negativesvmcoefficient{i,x} \max \left\{ 0, \frac{1}{2} - z \right\}
  \eqcomma
\end{equation}
where:
\begin{equation}
  \label{eq:SVM-coefficients} \positivesvmcoefficient{i,x} =
  \frac{n}{\abs{D_i}} \left( \positivesvmlosscoefficient{i,x} + \sum_{j=1}^m
  \multipliers_j \positivesvmconstraintcoefficient{j}{i,x} \right)
  \;\;\;\;\mbox{ and }\;\;\;\;
  \negativesvmcoefficient{i,x} = \frac{n}{\abs{D_i}} \left(
  \negativesvmlosscoefficient{i,x} + \sum_{j=1}^m \multipliers_j
  \negativesvmconstraintcoefficient{j}{i,x} \right) \eqperiod
\end{equation}
Here, $n = \sum_{i=1}^k \abs{D_i}$ is the total number of examples across all
of the datasets---we introduced the $n$ factor here so that $\Psi$ will be
written in terms of the \emph{average} loss (rather than the \emph{total}
loss).
Although it is not represented explicitly in our notation, it should be
emphasized that $\ell_{i,x}$ implicitly depends on $\multipliers$, $w'$ and
$b'$.

As the sum of two hinges, the $\ell_{i,x}$s are Lipschitz continuous in $z$,
with the Lipschitz constant being:
\begin{equation}
  \label{eq:Lipschitz-definition} L = \max_{i \in \{1, \dots, k\}}
  \frac{n}{\abs{D_i}} \left( \left( \positivelosscoefficient{i} +
  \negativelosscoefficient{i} \right) + \sum_{j=1}^m \multipliers_j \left(
  \positiveconstraintcoefficient{j}{i} + \negativeconstraintcoefficient{j}{i}
  \right) \right) \eqperiod
\end{equation}
Notice that, if the datasets are comparable in size, then $n / \abs{D_i}$ will
be on the order of $k$, so $L$ will typically not be as large as the
$n$-dependence of its definition would appear to imply.

We may now write $\Psi$ in terms of the loss functions $\ell_{i,x}$:
\begin{equation*}
  \Psi\left(w,b,\multipliers;w',b'\right) =
  \frac{1}{n} \sum_{i=1}^{k} \sum_{x \in D_i} \ell_{i,x}\left( \inner{w}{x} - b
  \right) + \frac{\lambda}{2} \norm{w}_2^2 - \sum_{j=1}^m \multipliers_j
  \svmconstraintbound{j} \eqperiod
\end{equation*}
This is the form considered by \citet{ShalevShwartz:2013}, so we may apply
SDCA:

\begin{thm}{SDCA}
  If we use SDCA~\citep{ShalevShwartz:2013} to optimize
  \eqref{psi-dual-definition} for fixed $b$ and $\multipliers$, then we will
  find a suboptimal solution with duality gap $\epsilon''$ after performing at
  most:
  \begin{equation*}
    T_{\epsilon''} = O\left( \max\left\{ 0, n \ln \left( \frac{\lambda n}{L^2
    X^2} \right) \right\} + n + \frac{L^2 X^2}{\lambda \epsilon''} \right)
  \end{equation*}
  iterations, where $X = \max_{i \in \{1, \dots, k\}} \max_{x \in D_i}
  \norm{x}_2$ is a uniform upper bound on the norms of the training examples.
\end{thm}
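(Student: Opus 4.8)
The plan is to recognize Theorem~\ref{thm:SDCA} as essentially a corollary of the convergence analysis of \citet{ShalevShwartz:2013}, once the problem is put into their normal form. For fixed $b$ and $\multipliers$, the term $-\sum_{j=1}^m \multipliers_j \svmconstraintbound{j}$ appearing in the final displayed expression for $\Psi$ is a constant: it changes neither the minimizer over $w$ nor the primal–dual gap relative to the SVM dual of \eqref{psi-dual-definition}, so it may be dropped. The bias $b$ occurs only inside the losses, so I would absorb it by replacing each $\ell_{i,x}(z)$ by $\tilde\ell_{i,x}(z)=\ell_{i,x}(z-b)$, which is still convex in $z$ and has the same Lipschitz constant. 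What is left, $\tfrac{1}{n}\sum_{i=1}^k\sum_{x\in D_i}\tilde\ell_{i,x}(\inner{w}{x})+\tfrac{\lambda}{2}\norm{w}_2^2$, is exactly an instance of an average-of-losses-plus-$\ell^2$-regularizer objective with a $\lambda$-strongly convex regularizer — the setting SDCA is designed for — so running SDCA on it is literally the procedure whose gap we wish to bound.

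Next I would pin down the two data-dependent constants that enter the SDCA rate. Each $\ell_{i,x}$ (hence $\tilde\ell_{i,x}$) is the sum of the two $1$-Lipschitz hinges in \eqref{SVM-losses} scaled by $\positivesvmcoefficient{i,x}$ and $\negativesvmcoefficient{i,x}$, so it is $(\positivesvmcoefficient{i,x}+\negativesvmcoefficient{i,x})$-Lipschitz; maximizing over all examples and substituting \eqref{SVM-coefficients} gives precisely the $L$ of \eqref{Lipschitz-definition}, and the examples obey $\norm{x}_2\le X$ by hypothesis. With $L$ and $X$ identified, I would invoke the SDCA guarantee for Lipschitz losses from \citet{ShalevShwartz:2013}: to reach expected duality gap $\epsilon''$ it suffices to perform a ``burn-in'' of $T_0=\max\{0,\Theta(n\ln(\lambda n/(L^2X^2)))\}$ iterations — needed only when $\lambda n$ is large relative to $L^2X^2$, i.e.\ until the iterates enter the regime where the linear-rate bound is informative — followed by $\Theta(n)+\Theta(L^2X^2/(\lambda\epsilon''))$ further iterations. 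Summing these and absorbing constants into $O(\cdot)$ yields the claimed $T_{\epsilon''}$, and since neither the dropped constant nor the $b$-shift altered the gap, the bound transfers back to $\Psi$ verbatim.

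The mathematics here is entirely in the reduction plus the cited theorem; the only real care required is bookkeeping. First, I must check that our normalization matches the convention under which \citet{ShalevShwartz:2013} state their rate — which is exactly why the factor $n/\abs{D_i}$ was inserted in \eqref{SVM-coefficients}, so that $\Psi$ is written as an \emph{average} loss rather than a total one; getting this wrong would rescale $L$ and $\lambda$ inconsistently. Second, I must make sure the Lipschitz constant carried through the reduction is the global $L$ of \eqref{Lipschitz-definition} and not a per-example quantity, and that the SVM dual referenced in \eqref{psi-dual-definition} is indeed the Fenchel dual SDCA tracks, so that ``duality gap $\epsilon''$'' in the two statements means the same thing. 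Once these identifications are made, the theorem follows immediately.
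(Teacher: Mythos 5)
Your proposal is correct and matches the paper's approach: the paper's proof is simply the citation ``This is Theorem 2 of \citet{ShalevShwartz:2013}'', with the reduction work you describe (dropping the constant $-\sum_j \multipliers_j \svmconstraintbound{j}$, absorbing $b$ into the losses, identifying the Lipschitz constant $L$ of \eqref{Lipschitz-definition} and the norm bound $X$, and matching the average-loss-plus-$\ell^2$-regularizer normal form) already done in the setup of \appref{SVM:SDCA} preceding the theorem statement. You have simply made explicit the bookkeeping that the paper carries out in prose before invoking the cited result.
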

\begin{proof}
  This is Theorem 2 of \citet{ShalevShwartz:2013}.
\end{proof}

SDCA works by, rather than directly minimizing $\Psi$ over $w$, instead
maximizing the following over the dual variables $\dualvariables$:
\begin{align}
  \label{eq:psi-dual-definition} \MoveEqLeft
  \Psi^*\left(\dualvariables,b,\multipliers;w',b'\right) = \\
  \notag & - \frac{1}{n} \sum_{i=1}^{k} \sum_{x \in D_i}
  \ell^*_{i,x}\left(\dualvariables_{i,x}\right)
  - \frac{1}{2 \lambda} \norm{ \frac{1}{n} \sum_{i=1}^{k} \sum_{x \in D_i}
  \dualvariables_{i,x} x }_2^2
  - \frac{1}{n} \sum_{i=1}^{k} \sum_{x \in D_i} \dualvariables_{i,x} b
  - \sum_{j=1}^m \multipliers_j \svmconstraintbound{j} \eqcomma
\end{align}
using stochastic coordinate ascent, where:
\begin{equation*}
  w = -\frac{1}{\lambda n} \sum_{i=1}^{k} \sum_{x \in D_i} \dualvariables_{i,x}
  x
\end{equation*}
is the primal solution $w$ corresponding to a given set of dual variables
$\dualvariables$, and:
\begin{equation*}
  \ell^*_{i,x}\left( \dualvariables_{i,x} \right) = \frac{1}{2} \abs{
  \dualvariables_{i,x} - \positivesvmcoefficient{i,x} +
  \negativesvmcoefficient{i,x} } - \frac{1}{2} \left(
  \positivesvmcoefficient{i,x} + \negativesvmcoefficient{i,x} \right)
\end{equation*}
is the Fenchel conjugate of $\ell_{i,x}$, and is defined for
$-\negativesvmcoefficient{i,x} \le \dualvariables_{i,x} \le
\positivesvmcoefficient{i,x}$ (these bounds become box constraints on the
$\dualvariables$s of \eqref{psi-dual-definition}).

\subsection{Cutting plane $b$-optimization}\label{app:SVM:bias}

\begin{algorithm*}[t]

\begin{pseudocode}
\codename $\mbox{SVMOptimizer}\left( \multipliers, \tilde{u}_0', \epsilon' \right)$ \\
\codeline Initialize $g_0' \in \R$ to zero and $l_0' = -\sum_{j=1}^m \multipliers_j \svmconstraintbound{j}$ \\
\codeline \emph{Maybe} set $u_0' = 2 \tilde{u}_0' - l_0'$, otherwise $u_0' = \tilde{u}_0'$ \codecomment{needed for \lemref{centroid-bias-convergence-rate}} \\
\codeline For $t \in \{1, 2, \dots\}$ \\
\codeline \>Let $h_t'\left(b\right) = \max_{s\in\{ 0, 1, \dots, t - 1 \}} \left( l_s' + g_s'\left(b - b_s\right) \right)$ \\
\codeline \>Let $L_t' = \min_{b \in \mathcal{B}} h_t'\left( b \right)$ and $U_t' = \min_{s \in \{ 0, 1, \dots, t - 1 \} } u_s'$ \\
\codeline \>If $U_t' - L_t' \le \epsilon'$ then \\
\codeline \>\>Let $s \in \{ 1, \dots, t - 1\}$ be an index minimizing $u_s'$ \\
\codeline \>\>Return $w^{(s)}$, $b_s$, $L_t'$ \\
\codeline \>Let $b_t, \epsilon_t' = \mbox{CutChooser}\left(h_t', U_t'\right)$ \\
\codeline \>Let $\dualvariables^{(t)}, w^{(t)} = \mbox{SDCAOptimizer}\left( b_t, \multipliers, \epsilon_t' \right)$ \\
\codeline \>Let $u_t' = \Psi( w^{(t)}, b_t, \multipliers; w', b')$ \\
\codeline \>Let $l_t' = \Psi^*( \dualvariables^{(t)}, b_t, \multipliers; w', b')$ and $g_t' = \frac{\partial}{\partial_b'} \Psi^*( \dualvariables^{(t)}, b_t, \multipliers; w', b')$
\end{pseudocode}

\caption{
  Skeleton of a cutting-plane algorithm that finds a $b \in \mathcal{B}$
  minimizing (to within $\epsilon$) $\min_{b \in \mathcal{B}, w \in \R^d}
  \Psi(w, b, \multipliers; w', b')$, where $\mathcal{B} \subseteq \R$ is a
  closed interval.
  It is assumed that $\tilde{u}_0' \in \R$ is a finite upper bound on $\min_{b \in
  \mathcal{B}, w \in \R^d} \Psi(w, b, \multipliers; w', b')$, while by the
  definition of $\Psi$ (\eqref{psi-definition}), the $l_0'$ chosen on line 1
  will lower bound the same quantity.
  The $u_0'$ increase that is ``maybe'' performed on line 2, and the CutChooser
  function on line 9, are discussed in \appref{SVM:bias}.
  The SDCAOptimizer function is as described in \appref{SVM:SDCA}.
}

\label{alg:cutting-plane-bias}

\end{algorithm*}

Having described in the previous section how we may optimize over $w$ for fixed
$b$ and $\multipliers$ using SDCA, we now move on to the problem of creating
the SVMOptimizer needed by \algref{cutting-plane-multipliers}, which must
optimize over both $w$ and $b$.

Many linear SVM optimizers do not natively handle an unregularized bias
parameter $b$, and this has long been recognized as a potential issue. For
example, \citet{ShalevShwartz:2010} suggest using Pegasos to perform inner
optimizations over $w$, and a bisection-based outer optimization over $b$. Our
proposal is basically this, except that \algref{cutting-plane-bias}, rather
than using bisection, optimizes over $b$ using essentially the same cutting
plane algorithm as we used in \algref{cutting-plane-multipliers}, except that
optimizing over $b$ is a minimization problem (over $\multipliers$ is
maximization), and we might increase $u_0'$ on line 2 of
\algref{cutting-plane-bias} for a technical reason (it will be needed by the
proof of \lemref{centroid-bias-convergence-rate}, but is probably not helpful
in practice).

\subsubsection{Minimization-based}\label{app:SVM:bias:minimization}

Perhaps the easiest-to-implement version of \algref{cutting-plane-bias} is
based on the idea of simply solving for the minimizer of $h_t'$ at every
iteration.
\begin{defn}
  \label{def:minimization-bias}
  \textbf{(Minimization-based \algref{cutting-plane-bias})} Do \emph{not}
  increase $u_0'$ on line 2, and have CutChooser choose $b_t = \argmin_{b \in
  \mathcal{B}} h_t'\left( b \right)$ and $\epsilon_t' =  (U_t - L_t) / 2$.
\end{defn}
As was the case in \appref{cutting-plane:maximization}, we have no convergence
rates for this version. Furthermore, since this is a one-dimensional problem,
the center of mass-based version of \algref{cutting-plane-bias} is
implementable and efficient, so this minimization-based approach is not
recommended.

\subsubsection{Center of mass-based}\label{app:SVM:bias:centroid}

Essentially the same center of mass-based approach as was described in
\appref{cutting-plane:centroid} can be used in this setting, except that we
must find the center of mass of a $2$-dimensional sublevel epigraph, rather
than an $m+1$-dimensional superlevel hypograph:
\begin{defn}
  \label{def:centroid-bias}
  \textbf{(Center of mass-based \algref{cutting-plane-bias})} \emph{Do}
  increase $u_0'$ on line 2, have CutChooser take $(b_t, z_t)$ to be the center
  of mass of $\left\{ \left( b, z \right) \mid h_t'\left(b\right) \le z \le
  U_t' \right\}$, and choose $\epsilon_t' = (U_t' - z_t) / 2$.
\end{defn}
Unlike in \appref{cutting-plane:maximization}, the fact that this problem is
one-dimensional enables us to efficiently implement this CutChooser by
explicitly representing each $h_t'$ as a set of piecewise linear segments, over
which computing an integral (and therefore the center of mass) is
straightforward, with a runtime that is linear in the number of segments.

Due to the similarity between
\algrefs{cutting-plane-bias}{cutting-plane-multipliers}, we can simply recycle
the results of \appref{cutting-plane:centroid}, with the troublesome second
term in the bound of \lemref{centroid-multipliers-convergence-rate} removed by
combining the ``maybe'' portion of \algref{cutting-plane-bias} with the
Lipschitz continuity of $\Psi$ as a function of $b$:

\begin{lem}{centroid-bias-convergence-rate}
  In the context of \algref{cutting-plane-bias}, suppose that we choose $b_t$
  and $\epsilon_t'$ as in \defref{centroid-bias}. Then there is a iteration
  count $T_{\epsilon'}$ satisfying:
  \begin{equation*}
    T_{\epsilon'} = O\left( \ln \left( \frac{ L B \left( \tilde{u}_0' - l_0'
    \right) }{\epsilon'} \right) \right) \eqcomma
  \end{equation*}
  such that, if $t \ge T_{\epsilon'}$, then $U_t' - L_t' \le \epsilon'$, where
  $B$ is the length of $\mathcal{B}$ and $L$ is as in
  \eqref{Lipschitz-definition}. Hence, \algref{cutting-plane-bias} will
  terminate after $T_{\epsilon'}$ iterations.
\end{lem}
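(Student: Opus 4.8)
The plan is to recycle, almost verbatim, the center-of-mass analysis of \appref{cutting-plane:centroid}, observing that \algref{cutting-plane-bias} run with the choices of \defref{centroid-bias} is precisely the ``mirror image'' of \algref{cutting-plane-multipliers} run with \defref{centroid-multipliers}. Writing $\phi(b) := \min_{w \in \R^d} \Psi(w, b, \multipliers; w', b')$ (which is convex in $b$, since $\Psi$ is jointly convex in $(w,b)$), \algref{cutting-plane-bias} \emph{minimizes} $\phi$ over the closed interval $\mathcal{B}$, the ambient dimension is $1$ rather than $m$, the cuts $l_s' + g_s'(b - b_s)$ are affine \emph{lower} bounds on $\phi$ (valid because, by \eqref{psi-dual-definition}, $\Psi^*(\dualvariables^{(t)}, b, \multipliers; w', b')$ is affine in $b$ and lower-bounds $\phi(b)$ by weak duality), and the relevant geometric object is the two-dimensional sublevel epigraph $\{(b,z) : h_t'(b) \le z \le U_t'\}$ in place of the superlevel hypograph. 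Under this dictionary --- $m \mapsto 1$, $\multiplierspace \mapsto \mathcal{B}$, $\svmproblem \mapsto \phi$, $l_0 \mapsto u_0'$, ``superlevel set of a maximand'' $\mapsto$ ``sublevel set of a minimand'' --- \thmref{Brunn-Minkowski}, \corref{simple-volume-bounds}, \thmref{Grunbaum} and \lemrefs{centroid-multipliers-Vt-bound}{centroid-multipliers-convergence-rate} all transfer unchanged (as does the analogue of \lemref{centroid-multipliers-epsilon-bound}, which is not needed here since the claim concerns only the iteration count).

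Carrying the substitution through the conclusion of \lemref{centroid-multipliers-convergence-rate} yields an iteration count
\begin{equation*}
  T_{\epsilon'} = O\!\left( \ln\!\left( \frac{u_0' - l_0'}{\epsilon'} \right) + \ln\!\left( \frac{ \measureof{\lebesguemeasure}{\mathcal{B}} }{ \measureof{\lebesguemeasure}{ \{ b \in \mathcal{B} : \phi(b) \le u_0' \} } } \right) \right)
\end{equation*}
after which $U_t' - L_t' \le \epsilon'$. Since the center-of-mass version of \defref{centroid-bias} \emph{does} perform the line-2 inflation, $u_0' = 2\tilde u_0' - l_0'$, so $u_0' - l_0' = 2(\tilde u_0' - l_0')$ and the first term is $O(\ln((\tilde u_0' - l_0')/\epsilon'))$; it remains to control the volume ratio in the second term, and this is where the inflation of $u_0'$ earns its keep.

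To bound that ratio I would first show that $\Psi(\cdot, b, \multipliers; w', b')$, and hence $\phi$ as a pointwise infimum over $w$, is $L$-Lipschitz in $b$ with $L$ as in \eqref{Lipschitz-definition}: differentiating the hinge-sum form of $\Psi$ from \appref{SVM:SDCA} gives $|\partial\Psi/\partial b| \le \tfrac{1}{n} \sum_{i=1}^k \sum_{x \in D_i} (\positivesvmcoefficient{i,x} + \negativesvmcoefficient{i,x}) = \sum_{i=1}^k ( \positivelosscoefficient{i} + \negativelosscoefficient{i} + \sum_{j=1}^m \multipliers_j ( \positiveconstraintcoefficient{j}{i} + \negativeconstraintcoefficient{j}{i}))$, each summand being at most $(\abs{D_i}/n)L$, so the total is at most $L$. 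Next, with $\phi^* := \min_{b \in \mathcal{B}, w} \Psi$ and the guarantees $l_0' \le \phi^* \le \tilde u_0'$, the inflation forces $u_0' - \phi^* \ge \tilde u_0' - l_0'$; by $L$-Lipschitzness the sublevel set $\{b \in \mathcal{B} : \phi(b) \le u_0'\}$ then contains an interval of length $\Omega(\min\{B, (\tilde u_0' - l_0')/L\})$ about a minimizer of $\phi$ (clipped to $\mathcal{B}$), so $\measureof{\lebesguemeasure}{\mathcal{B}} / \measureof{\lebesguemeasure}{\{b : \phi(b) \le u_0'\}} = O(\max\{1, LB/(\tilde u_0' - l_0')\})$. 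Substituting this into the display above and absorbing constants and lower-order terms collapses the two-term bound into $T_{\epsilon'} = O(\ln(LB(\tilde u_0' - l_0')/\epsilon'))$, as claimed.

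The transcription of \appref{cutting-plane:centroid} into the minimization / one-dimensional setting is routine bookkeeping; the genuinely new ingredients --- and the steps I expect to need the most care --- are (i) pinning down that $\Psi$ (and therefore $\phi$) is exactly $L$-Lipschitz in $b$ for the $L$ of \eqref{Lipschitz-definition}, which requires tracking the $n/\abs{D_i}$ normalization in \eqref{SVM-coefficients} correctly, and (ii) verifying that the line-2 inflation of $u_0'$, in combination with this Lipschitz bound, makes the sublevel set $\{b \in \mathcal{B} : \phi(b) \le u_0'\}$ fat enough relative to $\mathcal{B}$ that the volume-ratio term of the transferred \lemref{centroid-multipliers-convergence-rate} disappears into the single logarithm appearing in the statement.
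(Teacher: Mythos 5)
Your proposal matches the paper's proof in all essentials: the paper also transfers the centroid analysis from \lemref{centroid-multipliers-convergence-rate} to the one-dimensional minimization setting (starting from the transferred final inequality, with the $(m+1)(2/\epsilon)^m(u_0-l_0)^{m+1}$ factor specialized to $4(u_0'-l_0')^2/\epsilon'$ at $m=1$), establishes the $L$-Lipschitz dependence of $\Psi$ on $b$ with the same $L$ from \eqref{Lipschitz-definition}, and uses the line-2 inflation $u_0' = 2\tilde u_0' - l_0'$ to force $u_0' - \Psi(w^*,b^*,\multipliers;w',b') \ge \tilde u_0' - l_0'$, so that the sublevel-set volume ratio is absorbed by $\max\{\tilde u_0'-l_0',\,LB\}$. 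Both the route and the two points you flag as needing care (the $n/\abs{D_i}$ normalization yielding the Lipschitz constant, and the role of the inflation in killing the volume-ratio term) are exactly where the paper spends its effort, so there is nothing to add.
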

\begin{proof}
  Starting from (and adapting) the final equation in the proof of
  \lemref{centroid-multipliers-convergence-rate}:
  \begin{align*}
    U_t' - L_t' \le & 4 \left( 1 - \frac{ 1 }{ 2e } \right)^{ t - 1 } \left(
    \frac{ 1 }{ \epsilon' } \right) \left( u_0' - l_0' \right)^2 \\
    & \cdot \left( \frac{ B }{ \measureof{\lebesguemeasure}{ \left\{ b \in
    \mathcal{B} \mid \min_{w \in \R^d} \Psi\left( w, b, \multipliers; w', b'
    \right) \le u_0' \right\} } } \right) \eqperiod
  \end{align*}
  Observe that, as a function of $b$, $\Psi\left( w, b, \multipliers; w', b'
  \right)$ is $L$-Lipschitz. Hence, if we let $w^* \in \R^d, b^* \in
  \mathcal{B}$ be the optimal weight and bias, then:
  \begin{equation*}
    \measureof{\lebesguemeasure}{ \left\{ b \in \mathcal{B} \mid \Psi\left(
    w^*, b, \multipliers; w', b' \right) \le u_0' \right\} } \ge \min\left\{ B,
    \frac{ u_0' - \Psi\left( w^*, b^*, \multipliers; w', b' \right) }{L}
    \right\} \eqperiod
  \end{equation*}
  Since $\min_{w \in \R^d}\Psi\left( w, b, \multipliers; w', b' \right) \le
  \Psi\left( w^*, b, \multipliers; w', b' \right)$, it follows that:
  \begin{equation*}
    U_t' - L_t' \le 4 \left( 1 - \frac{ 1 }{ 2e } \right)^{ t - 1 } \left(
    \frac{ 1 }{ \epsilon' } \right) \left( u_0' - l_0' \right)^2 \max\left\{ 1,
    \frac{ L B }{ u_0' - \Psi\left( w^*, b^*, \multipliers; w', b' \right) }
    \right\} \eqperiod
  \end{equation*}
  This is the reason that we increased $u_0'$ on line 2 of
  \algref{cutting-plane-bias}, since doing so has the result that $u_0' -
  \Psi\left( w^*, b^*, \multipliers; w', b' \right) \ge \tilde{u}_0' - l_0'$.
  Since we also have that $u_0' - l_0' = 2(\tilde{u}_0' - l_0')$:
  \begin{equation*}
    U_t' - L_t' \le 16 \left( 1 - \frac{ 1 }{ 2e } \right)^{ t - 1 } \left(
    \frac{ 1 }{ \epsilon' } \right) \left( \tilde{u}_0' - l_0' \right)
    \max\left\{ \tilde{u}_0' - l_0', L B \right\} \eqperiod
  \end{equation*}
  The same reasoning as was used in the proof of
  \lemref{centroid-multipliers-convergence-rate} then gives the claimed bound
  on $T_{\epsilon'}$.
\end{proof}

In addition to the above result, the obvious analogue of
\lemref{centroid-multipliers-epsilon-bound} holds as well:

\begin{lem}{centroid-bias-epsilon-bound}
  In the context of \algref{cutting-plane-bias}, suppose that we choose $b_t$
  and $\epsilon_t'$ as in \defref{centroid-bias}. Then:
  \begin{equation*}
    \epsilon_t' \ge \frac{ U_t' - L_t' }{ 2e } \eqcomma
  \end{equation*}
  and in particular, for all $t$ (before termination):
  \begin{equation*}
    \epsilon_t' \ge \frac{ \epsilon' }{ 2e } \eqcomma
  \end{equation*}
  since we terminate as soon as $U_t' - L_t' \le \epsilon'$.
\end{lem}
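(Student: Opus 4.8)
The plan is to replay the proof of \lemref{centroid-multipliers-epsilon-bound} with every object turned ``upside down'' to fit the minimization setting of \algref{cutting-plane-bias}. The concave upper model $h_t$ becomes the convex lower model $h_t'$; the superlevel hypograph $\superlevelhypograph(h_t,L_t)$ is replaced by the two-dimensional \emph{sublevel epigraph} $E_t := \{(b,z)\in\mathcal{B}\times\R \mid h_t'(b)\le z\le U_t'\}$, which is exactly the convex body whose center of mass $(b_t,z_t)$ CutChooser computes in \defref{centroid-bias}; and the cut depth $\epsilon_t=(z_t-L_t)/2$ becomes $\epsilon_t'=(U_t'-z_t)/2$. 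Geometrically $E_t$ is the mirror image of $\superlevelhypograph(h_t,L_t)$: it pinches to (at most) a single point at the bottom $z=L_t'=\min_b h_t'(b)$ and is widest at the top $z=U_t'$, its horizontal cross-section there being the sublevel set $\{b\in\mathcal{B}\mid h_t'(b)\le U_t'\}$.

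The steps I would carry out are: \emph{(i)} by convexity of $h_t'$ the sublevel sets $\{b\mid h_t'(b)\le z\}$ are nested intervals increasing in $z$, so the slab of $E_t$ above the centroid height has Lebesgue measure at most its height $(U_t'-z_t)$ times the widest cross-section, i.e. $\lebesguemeasure(E_t)-\lebesguemeasure(E_t\cap\{z\le z_t\})\le(U_t'-z_t)\,\lebesguemeasure(\{b\mid h_t'(b)\le U_t'\})$; \emph{(ii)} bound that widest cross-section by $\lebesguemeasure(E_t)$ via the one-dimensional case of \corref{simple-volume-bounds} --- equivalently, integrate the \thmref{Brunn-Minkowski} bound, which in one dimension just says the cross-section length $z\mapsto\lebesguemeasure(\{b\mid h_t'(b)\le z\})$ is concave on $[L_t',U_t']$ and smallest at $z=L_t'$ --- giving $\lebesguemeasure(\{b\mid h_t'(b)\le U_t'\})\le\tfrac{2}{U_t'-L_t'}\lebesguemeasure(E_t)$; \emph{(iii)} since $(b_t,z_t)$ is the center of mass of the convex set $E_t\subseteq\R^2$, apply \thmref{Grunbaum} to the half-space $\{z\ge z_t\}$ to get $\lebesguemeasure(E_t\cap\{z\ge z_t\})\ge\tfrac1e\lebesguemeasure(E_t)$, i.e. the left-hand side of \emph{(i)} divided by $\lebesguemeasure(E_t)$ is at least $1/e$; \emph{(iv)} chain \emph{(i)}--\emph{(iii)} into $\tfrac1e\le\tfrac{2(U_t'-z_t)}{U_t'-L_t'}$, so $U_t'-z_t\ge\tfrac{U_t'-L_t'}{2e}$ and hence $\epsilon_t'=(U_t'-z_t)/2\ge\tfrac{U_t'-L_t'}{4e}$. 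The ``in particular'' claim then needs nothing further: \algref{cutting-plane-bias} returns on line 6 as soon as $U_t'-L_t'\le\epsilon'$, so at every earlier iteration $U_t'-L_t'>\epsilon'$ and the bound just proved applies.

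The one genuinely nonroutine point --- and hence the main obstacle --- is pinning down the constant. The mechanical transcription above yields $\epsilon_t'\ge(U_t'-L_t')/(4e)$, i.e. exactly the ``$m=1$'' instance of \lemref{centroid-multipliers-epsilon-bound}. Because the problem is one-dimensional one can instead compute the centroid directly: writing $\phi(z)=\lebesguemeasure(\{b\mid h_t'(b)\le z\})$, which is concave on $[L_t',U_t']$ with $\phi(L_t')\ge0$, one has $z_t=\big(\int z\,\phi(z)\,dz\big)\big/\big(\int\phi(z)\,dz\big)$, and concavity of $\phi$ forces $z_t$ to be at most $\tfrac23$ of the way from $L_t'$ to $U_t'$ (the extreme being $\phi$ affine, i.e. $E_t$ a triangle), which gives the sharp $\epsilon_t'\ge(U_t'-L_t')/6$. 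Either estimate is enough to feed \lemref{centroid-bias-convergence-rate}; since $1/(2e)$ as written in the statement lies between the cruder $1/(4e)$ and the sharp $1/6$, I would want to either re-derive it through a tighter base-to-volume step or replace it by one of those. A secondary detail to check is the boundary case in which $h_t'$ attains its minimum at an endpoint of $\mathcal{B}$, so that $\phi(L_t')>0$ and $E_t$ is a truncated cone rather than a cone --- but this only lowers $z_t$ and can therefore only help the bound.
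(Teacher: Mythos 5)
Your transcription of the proof of \lemref{centroid-multipliers-epsilon-bound} to the one-dimensional bias problem is exactly what the paper does --- the paper's proof of this lemma is literally ``Same as \lemref{centroid-multipliers-epsilon-bound}'' --- and your steps (i)--(iv), combining concavity of the cross-section width $\phi(z)=\measureof{\lebesguemeasure}{\{b:h_t'(b)\le z\}}$, the one-dimensional case of \corref{simple-volume-bounds}, and \thmref{Grunbaum} applied to the $2$-dimensional sublevel epigraph $E_t$, are a faithful and correct dualization. You have also correctly spotted the crux: the mechanical substitution $m\leftarrow 1$ in the chain gives $\epsilon_t'\ge (U_t'-L_t')/(4e)$, not the $(U_t'-L_t')/(2e)$ printed in the lemma. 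Your sharper $1/6$ bound, obtained by directly bounding the centroid height by two thirds of the way up (tight when $\phi$ is affine with $\phi(L_t')=0$, i.e.\ when $E_t$ is a triangle), is also correct.

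The one thing you get wrong is the closing arithmetic comparison, and it is not cosmetic: you write that ``$1/(2e)$ as written in the statement lies between the cruder $1/(4e)$ and the sharp $1/6$,'' implying that a tighter base-to-volume step might recover the paper's constant. In fact $1/(2e)\approx 0.184$ while $1/6\approx 0.167$, so $1/(2e)$ is strictly \emph{larger} than your sharp value. Since a $\vee$-shaped $h_t'$ with interior minimizer (so that $E_t$ is a triangle) attains $\epsilon_t' = (U_t'-L_t')/6$ exactly, the inequality $\epsilon_t'\ge (U_t'-L_t')/(2e)$ is simply false, and the constant in the statement as printed is an error; it should read $4e$ (what the paper's own ``same as'' proof gives) or, optimally, $6$. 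Fortunately the only downstream use is in \thmref{centroid-convergence-rate}, which reports $O(\cdot)$ rates, so only an unstated constant changes.
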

\begin{proof}
  Same as \lemref{centroid-multipliers-epsilon-bound}.
\end{proof}

In \appref{overall}, we'll combine these results with those of
\apprefs{SVM:SDCA}{cutting-plane} to bound the overall convergence rate of
\algref{cutting-plane-multipliers}.

\subsection{Kernelization}\label{app:SVM:kernel}

The foregoing discussion covers the case in which we wish to learn a linear
classifier, and use an SVM optimizer (SDCA) that doesn't handle an
unregularized bias. It's clear that we could freely substitute another linear
SVM optimizer for SDCA, as long as it finds both a primal and dual solution so
that we can calculate the lower and upper bounds required by
\algref{cutting-plane-multipliers}.

Our technique is easily kernelized---the resulting algorithm simply depends on
inner kernel SVM optimizations, rather than linear SVM optimizations. SDCA can
be used in the kernel setting, but the per-iteration cost increases from $O(d)$
arithmetic operations to $O(n)$ kernel evaluations, where $n$ is the total size
of all of the datasets.
Kernel-specific optimizers, such as \texttt{LIBSVM}~\citep{LibSVM}, will
generally work better than SDCA in practice, since they typically have the same
per-iteration cost, but each iteration is ``smarter''. More importantly, such
optimizers usually jointly optimize over $w$ and $b$, eliminating the need for
\algref{cutting-plane-bias} entirely---in other words, these algorithms could
be used to implement the higher-level $\mbox{SVMOptimizer}$, instead of the
lower-level $\mbox{SDCAOptimizer}$.
For this reason, an implementation based on such an optimizer is the simplest
version of our proposed approach.

\section{Overall convergence rates}\label{app:overall}

We may now combine the results in \apprefs{cutting-plane}{SVM} into one bound
on the overall convergence rate of \algref{cutting-plane-multipliers}, assuming
that we use \algref{cutting-plane-bias}, rather than an off-the-shelf SVM
solver, to implement the SVMOptimizer:

\begin{thm}{centroid-convergence-rate}
  Suppose that we take $l_0 = -\sum_{j=1}^m \multipliers_j \constraintbound{j}$
  in \algref{cutting-plane-multipliers}, that SVMOptimizer is implemented as in
  \algref{cutting-plane-bias}, and that the CutChooser functions in
  \algrefs{cutting-plane-multipliers}{cutting-plane-bias} are implemented using
  the center of mass (as in \defrefs{centroid-multipliers}{centroid-bias}).
  Then \algref{cutting-plane-multipliers} will perform:
  \begin{equation*}
    O\left( m \ln \left( \frac{u_0 - l_0}{\epsilon} \right) + \ln \left( \frac{
      \measureof{\lebesguemeasure}{\multiplierspace} }{
        \measureof{\lebesguemeasure}{ \superlevelset\left( \svmproblem, l_0
        \right) } } \right) \right)
  \end{equation*}
  iterations, each of which contains a single call to
  \algref{cutting-plane-bias}, with each such call requiring:
  \begin{equation*}
    O\left( \ln \left( \frac{ L B m \left( u_0 - l_0 \right) }{\epsilon}
    \right) \right)
  \end{equation*}
  iterations, each of which contains a single call to SDCAOptimizer, with each
  such call requiring:
  \begin{equation*}
    O\left( \max\left\{ 0, n \ln \left( \frac{\lambda n}{L^2 X^2} \right)
    \right\} + n + \frac{L^2 X^2 m}{\lambda \epsilon} \right)
  \end{equation*}
  iterations, each of which requires $O(d)$ arithmetic operations.
\end{thm}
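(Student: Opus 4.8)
The plan is to compose, from the inside out, the three convergence results already established: \thmref{SDCA} for the innermost SDCA optimization over $w$, \lemref{centroid-bias-convergence-rate} for the middle cutting-plane optimization over $b$ (\algref{cutting-plane-bias}), and \lemref{centroid-multipliers-convergence-rate} for the outer cutting-plane optimization over $\multipliers$ (\algref{cutting-plane-multipliers}). Beyond plugging these together, the only real content is tracking how the target suboptimalities $\epsilon_t$ and $\epsilon_t'$ shrink relative to $\epsilon$ as one descends through the nesting, and checking that the ``initial gap'' handed to each nested call remains $O(u_0 - l_0)$.

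First the outer loop. The hypotheses of \lemref{centroid-multipliers-convergence-rate} hold: $\multiplierspace$ is compact and convex, and $\svmproblem$ is concave with a maximizer. We take $l_0$ to be the global lower bound on $\svmproblem$ over $\multiplierspace$ implied by the fact that every term of $\Psi$ other than $-\sum_j \multipliers_j \constraintbound{j}$ is nonnegative, so that $\svmproblem(\multipliers) = \min_{w,b}\Psi \ge -\sum_j \multipliers_j \constraintbound{j}$ pointwise; this is finite and of the claimed form. \lemref{centroid-multipliers-convergence-rate} then gives the stated outer iteration count verbatim, with exactly one call to SVMOptimizer ($=$ \algref{cutting-plane-bias}) per outer iteration. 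By \lemref{centroid-multipliers-epsilon-bound} the accuracy passed into that call satisfies $\epsilon_t \ge \epsilon / (2e(m+1))$, i.e.\ $1/\epsilon_t = O(m/\epsilon)$. Its upper-bound argument is $h_t(\multipliers^{(t)})$; since $g^{(0)} = 0$ we have $h_1 \equiv u_0$, hence $h_t \le u_0$ for all $t$, while the $l_0'$ chosen on line 1 of \algref{cutting-plane-bias} is bounded below by the same pointwise bound and so is $\ge l_0$; thus $\tilde{u}_0' - l_0' \le u_0 - l_0$.

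Now the middle and inner loops. With $\mathcal{B}$ a closed interval and the ``maybe $u_0'$'' increase performed (as in \defref{centroid-bias}), \lemref{centroid-bias-convergence-rate} applies and bounds the number of middle iterations per outer iteration by $O(\ln(L B (\tilde{u}_0' - l_0') / \epsilon_t)) = O(\ln(L B m (u_0 - l_0) / \epsilon))$ using the two bounds just derived; each such iteration makes one call to SDCAOptimizer with accuracy $\epsilon_t'$, and \lemref{centroid-bias-epsilon-bound} gives $\epsilon_t' \ge \epsilon_t / (2e) = \Omega(\epsilon / m)$, so $1/\epsilon_t' = O(m/\epsilon)$. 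Feeding this into \thmref{SDCA} bounds the number of SDCA iterations by $O(\max\{0, n \ln(\lambda n / (L^2 X^2))\} + n + L^2 X^2 / (\lambda \epsilon_t')) = O(\max\{0, n \ln(\lambda n / (L^2 X^2))\} + n + L^2 X^2 m / (\lambda \epsilon))$, each updating a single dual coordinate at cost $O(d)$ in the linear case. Multiplying the three (nested) counts gives the claimed statement.

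The one step requiring care rather than calculation is the bookkeeping of the previous two paragraphs: verifying that the initial gap $\tilde{u}_0' - l_0'$ supplied to \algref{cutting-plane-bias} at every outer iteration is genuinely $O(u_0 - l_0)$ (so it can be absorbed inside the logarithm), and confirming that the compounded $\Theta(m)$ loss — first in $1/\epsilon_t$ via \lemref{centroid-multipliers-epsilon-bound}, then in $1/\epsilon_t'$ via \lemref{centroid-bias-epsilon-bound} — is exactly the factor $m$ that surfaces inside the logarithm of the middle-loop bound and in the $L^2 X^2 m / (\lambda \epsilon)$ term of the SDCA bound. Everything else is a direct substitution into results already proved.
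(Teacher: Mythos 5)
Your proposal is correct and takes essentially the same approach as the paper's (very terse) proof: compose \lemrefsss{centroid-multipliers-convergence-rate}{centroid-multipliers-epsilon-bound}{centroid-bias-convergence-rate}{centroid-bias-epsilon-bound} with \thmref{SDCA}, using the chain $u_0 \ge \tilde{u}_0' \ge l_0' \ge l_0$ to keep the inner initial gap $O(u_0 - l_0)$, and the $\epsilon_t \ge \epsilon/(2e(m+1))$, $\epsilon_t' \ge \epsilon_t/(2e)$ bounds to track the $\Theta(m)$ loss in target accuracy through the nesting. You simply spell out in more detail the two facts the paper states without elaboration.
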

\begin{proof}
  Notice that $u_0 \ge \tilde{u}_0' \ge l_0' \ge l_0$ (the first inequality
  because \algref{cutting-plane-multipliers} passes a quantity upper bounded by
  $u_0$ to SVMOptimizer, and the second by our choice of $l_0$).  By
  \lemref{centroid-multipliers-epsilon-bound}, we also have that $\epsilon' \ge
  \epsilon / 2e (m + 1)$. The claimed results follow immediately from these
  facts, combined with
  \lemrefsss{centroid-multipliers-convergence-rate}{centroid-multipliers-epsilon-bound}{centroid-bias-convergence-rate}{centroid-bias-epsilon-bound}
  and \thmref{SDCA}.
\end{proof}

We can simplify (or perhaps \emph{over}simplify) this result by considering
only the total number of training examples $n$, number of constraints $m$,
number of datasets $k$, dimension $d$ and desired suboptimality $\epsilon$,
dropping all of the other factors, and assuming that the sizes of the $k$
datasets differ only by a constant factor (so that, as explained in
\appref{SVM:SDCA}, we can take the Lipschitz constant $L$ to be of order $k$).
Then the overall cost of finding an $\epsilon$-suboptimal solution to
\probref{convex-objective} will be $\tilde{O}\left( d n m + d m^2 k^2 /
\epsilon \right)$ total arithmetic operations in the inner SDCA optimizers,
plus $O\left( m \ln^2\left( k / \epsilon \right) \right)$ calls to the center
of mass oracles in \algrefs{cutting-plane-multipliers}{cutting-plane-bias}, and
another $O\left( m \ln^2 \left(k / \epsilon \right) \right)$ calls to a linear
programming oracle for finding $U_t$ in \algref{cutting-plane-multipliers} and
$L_t$ in \algref{cutting-plane-bias}.
%

We must reiterate that, as we mentioned in \appref{cutting-plane:centroid},
finding the center of mass is a computationally difficult problem. Hence, our
reliance on a center of mass oracle for the optimization over $\multipliers$ is
unrealistic (there is no problem when optimizing over $b$, since the underlying
problem is one-dimensional). With that said, we hope that these results can
provide a basis for future work.

\label{document:end}

\end{document}